\def\eqref#1{equation~\ref{#1}}
\def\Eqref#1{Equation~\ref{#1}}
\def\1{\bm{1}}
\DeclareMathAlphabet{\mathsfit}{\encodingdefault}{\sfdefault}{m}{sl}
\SetMathAlphabet{\mathsfit}{bold}{\encodingdefault}{\sfdefault}{bx}{n}
\newcommand{\KL}{D_{\mathrm{KL}}}
\DeclareMathOperator*{\argmin}{arg\,min}
\newif\ifcomments
  \newcommand{\colornote}[3]{{\color{#1}\bf{#2: #3}\normalfont}}
  \newcommand{\colornote}[3]{}
\newcommand{\expectation}[2]{\mathbb{E}_{#1}\left[{#2}\right]}
\newcommand{\expectationB}[2]{\mathbb{E}_{#1}\Big[{#2}\Big]}
\newcommand{\given}{\,|\,}
\newcommand{\M}{\mathscr{M}}
\newcommand{\String}{\mathscr{S}}
\newcommand{\onec}[1]{\mathbb{1}_{\{#1\}}}
\newcommand{\targetM}{\M_{p}}
\newcommand{\draftM}{\M_{q}}
\newcommand{\trainabledraftM}{\M_{q}^\theta}
\newcommand{\context}{x, y_{<t}}
\newcommand{\prefix}{\rho}
\newcommand{\targetseqdist}{p_{\le T}(y)}
\newcommand{\draftseqdist}{q_{\le T}(y)}
\newcommand{\targetseqdistgiven}{p_{\le T}(y \given x)}
\newcommand{\draftseqdistgiven}{q_{\le T}(y \given x)}
\newcommand{\targetdist}{p(y_t)}
\newcommand{\draftdist}{q(y_t)}
\newcommand{\targetdisti}{p(y_{t+i})}
\newcommand{\draftdisti}{q(y_{t+i})}
\newcommand{\targetdistgiven}{p(y_t \given \context)}
\newcommand{\draftdistgiven}{q(y_t \given \context)}
\newcommand{\expectedtargetoutlen}{L_p(x)}
\newcommand{\seqlen}[1]{|#1|}
\newcommand{\blocksize}{\gamma}
\newcommand{\blockeff}{\tau}
\newcommand{\lenience}{\epsilon}
\newcommand{\acceptancerate}{\alpha}
\newcommand{\vocab}{\Omega}
\newcommand{\dataset}{\mathscr{G}}
\newcommand{\divergence}{D}
\newcommand{\FKL}{D_\mathrm{FKL}}
\newcommand{\RKL}{D_\mathrm{RKL}}
\newcommand{\JS}{D_\mathrm{JS}}
\newcommand{\JSD}{D_\mathrm{JSD[\beta]}}
\newcommand{\TVD}{D_{\operatorname{TVD}}}
\renewcommand{\paragraph}[1]{\textbf{#1.} \hspace{0.6em}}
\newtheorem{theorem}{Theorem}[section]
\newtheorem{lemma}{Lemma}[section]
\newtheorem{definition}{Definition}[section]
\title{DistillSpec: Improving Speculative Decoding via Knowledge Distillation}
\author{Yongchao Zhou$^{1,3}$\thanks{Student Researcher at Google Research. $^\dagger$Advising contribution. Corresponding authors: <yczhou@cs.toronto.edu>, <jfkagy@google.com>, and <rishabhagarwal@google.com>.}, \quad Kaifeng Lyu$^{1,4*}$,\quad Ankit Singh Rawat$^{1}$,\quad Aditya Krishna Menon$^{1}$, 
\\ \textbf{Afshin Rostamizadeh$^{1}$, \quad Sanjiv Kumar$^{1}$, \quad Jean-François Kagy$^{1\dagger}$, \quad Rishabh Agarwal$^{2,5\dagger}$ }\\
$^1$Google Research\quad $^2$Google DeepMind\quad $^3$University of Toronto\quad $^4$Princeton University \quad$^5$Mila\\
}
\newcommand{\revise}[1]{{\color{black} #1}}
\begin{document}

\doparttoc %
\faketableofcontents %

\maketitle

\vspace{-0.25cm}
\begin{abstract}
Speculative decoding~(SD) accelerates large language model inference by employing a faster {\em draft} model for generating multiple tokens, which are then verified in parallel by the larger {\em target} model, resulting in the text generated according to the target model distribution. However, identifying a compact draft model that is well-aligned with the target model is challenging. To tackle this issue, we propose {\em DistillSpec}, a method that uses knowledge distillation to better align the draft model with the target model before applying SD. DistillSpec makes two key design choices, which we demonstrate via systematic study to be crucial to improving the draft and target alignment: utilizing \emph{on-policy} data generation from the draft model, and \emph{tailoring the divergence function} to the task and decoding strategy. Notably, DistillSpec yields $10-45\%$ speedups over standard SD on a range of benchmarks, using both greedy and non-greedy sampling. We show that the distilled model can be well transferred to various tasks with an average speedup of $26\%$. Furthermore, we combine DistillSpec with lossy SD to achieve fine-grained control over the latency vs. task performance trade-off. Finally, in practical scenarios with models of varying sizes, first using distillation to boost the performance of the target model and then applying DistillSpec to train a well-aligned draft model can reduce decoding latency by $6-10\times$ with minimal performance drop, compared to standard decoding without distillation. 
\vspace{-0.15cm}
\end{abstract}

\section{Introduction}\label{sec:intro}

Large language models (LLMs) have revolutionized natural language understanding and generation across diverse applications~\citep{openai2023gpt4tr,anil2023palm}. However, their autoregressive generation nature poses significant computational challenges, especially in real-time deployments with stringent latency constraints~\citep{thoppilan2022lamda,pope2023efficiently}.
Conversely, smaller language models, while computationally efficient, often lack the expressive power of their larger counterparts and achieve subpar performance. While reducing the inference cost of larger models, e.g., via quantization or pruning, 
or improving the performance of the smaller models, e.g., via knowledge distillation (KD)~\citep{hinton2015distilling}, constitute natural approaches to enable a favorable performance versus inference cost trade-off, these approaches frequently result in unacceptable performance gap compared to the high-quality large models. %
This has inspired a growing literature on designing mechanisms that combine both large and small models at inference to approximate the performance of the larger models without incurring their high computational cost.

Among conventional approaches, model cascading aims to identify easy instances where smaller models suffice to achieve good performance, and soliciting larger models only on a subset of hard instances~\citep{Rowleyetal,xu14cascade}
or tasks~\citep{cai2023large}. 
Different from such task- or instance-level cascading, \emph{speculative decoding} (SD)~\citep{leviathan2023fast,chen2023accelerating} exploits the token-level variability in the computation demand during LLM inference by interactively invoking a small ``draft'' model and a large ``target'' model. At a given stage during inference, the draft model generates successive candidate tokens for multiple inference steps via autoregressive decoding. The target model then verifies the candidate tokens via parallel decoding, and employs rejection sampling to accept a subset of candidate tokens at contiguous positions.

The main objective of SD is to speed up text generation while guaranteeing that the decoded tokens follow the target model distribution. SD relies on the insight that the combined cost of autoregressive decoding with a small draft model followed by parallel verification with the target model is lower than the cost of autoregressive decoding with the target model alone. However, the realized inference cost reduction or latency improvement crucially depends on the \emph{acceptance rate} of the draft-generated tokens by the target model, which can be shown to be directly tied to the alignment between the token distributions of the draft and target models. Thus, a successful application of SD hinges on identifying a compact draft model that simultaneously has small autoregressive decoding cost \emph{and} is closely aligned with the target model.

In this work, we propose DistillSpec, a novel approach that relies on KD~\citep{hinton2015distilling} to obtain an effective draft model. 
Unlike the standard application of KD which primarily focuses on improving the task performance of a small student model, DistillSpec aims at aligning the student (draft) model with the teacher (target) model to enhance the acceptance rate during SD. 
We undertake a comprehensive exploration of the distillation process for speeding up SD, considering several factors including the composition of training data, choice of divergence functions to define the training objective for KD, and decoding strategies. 
Notably, our findings underscore that using model-generated data is crucial for ensuring strong student-teacher alignment across various tasks via KD, and that the selection of the best-performing divergence function in DistillSpec is highly task-dependent and sensitive to the decoding strategy (i.e., greedy versus non-greedy). Furthermore, we explore the utility of DistillSpec for lossy SD~\citep{leviathan2023fast} which allows for sampling away from the target model distribution. We show that combining DistillSpec with lossy SD enables a more fine-grained control over the latency versus task performance trade-off.

Finally, we carry out a systematic study of how to design an efficient inference scheme in a practical setting where one has access to multiple models of increasing size and quality. Leveraging the insights that we have laid out about KD and SD, our study concludes that the most effective strategy involves first distilling a large model into a smaller one as the potential target model for performance optimization, followed by DistillSpec for distilling an even smaller model to be used as the draft model in SD. This approach results in a remarkable $6-10\times$ reduction in latency, compared to a standalone non-distilled target model of the same size, with minimal performance degradation.

{Our key contributions are:}
\begin{enumerate}[label=(\roman*),topsep=2pt,itemsep=2pt, parsep=2pt, leftmargin=16pt]
    \item We propose DistillSpec, a method that uses KD to enhance draft model alignment with the target model (\S\ref{sec:distill-spec}), and show that our method can improve SD speed by 10$-$45\% while preserving model performance across diverse datasets under greedy and non-greedy sampling~(Figure~\ref{fig:performance_overview}).
    \item We conduct an extensive analysis of the optimal distillation recipe (\S\ref{sec:recipe}) for model alignment, encompassing factors such as training data generation and different divergences, and emphasizing the distinctions between standard KD and distillation tailored for SD.
    \item We extend DistillSpec to lossy SD, enabling refined control over the quality-latency trade-off. Moreover, we offer insights for combining KD and SD when several models are available (\S\ref{sec:quality_latency_tradeoff}).
\end{enumerate}

\begin{figure}[t]
\vspace{-3mm}
\centering
\includegraphics[width=0.9\linewidth]{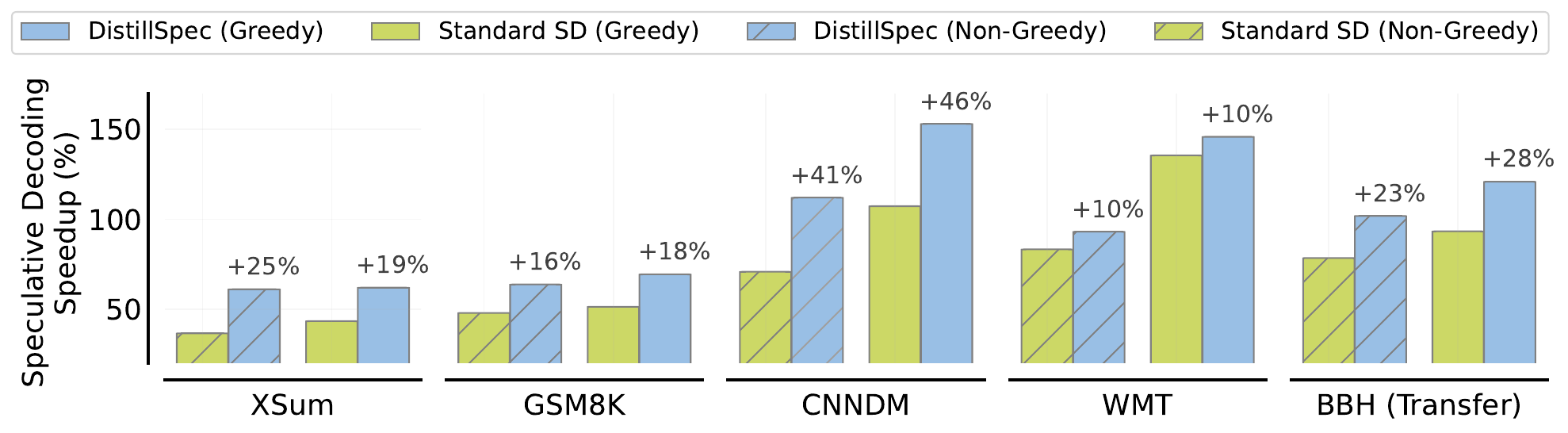}
\vspace{-3mm}
\caption{\small Performance comparison of standard speculative decoding (SD) vs. our proposed DistillSpec, with small- and XL-sized models from the T5 v1.1 family~\citep{raffel2020exploring} being utilized as the draft and the target models, respectively. DistillSpec enhances SD speed by better aligning the draft with the target via white-box knowledge distillation, resulting in a consistent 10$-$45\% speedup improvement over standard SD across various datasets. The distilled draft model from GSM8K transfers well to 23 unseen BigBenchHard tasks~\citep{suzgun2022challenging}, resulting in an average speedup of $26\%$. See \S~\ref{sec:distill_benchmark} for additional details.}\label{fig:performance_overview}
\vspace{-0.45cm}
\end{figure}

\vspace{-0.25cm}
\section{Related Work}
\vspace{-0.15cm}

\paragraph{Speculative decoding~(SD)} Due to the inherent sequential nature of autoregressive decoding, the primary latency bottleneck in LLM inference arises from memory read/write operations rather than arithmetic computations~\citep{pope2023efficiently}. Speculative decoding~\citep{leviathan2023fast,chen2023accelerating}~(SD) addresses this challenge by utilizing a compact draft model to generate a batch of tokens sequentially, while validating them in parallel with a larger target model. Prior to SD, various parallel computing paradigms have been explored for autoregressive models, including block parallel sampling~\citep{stern2018blockwise}, shallow aggressive decoding~\citep{sun2021instantaneous}, and aggressive decoding~\citep{ge2022lossless}. However, these approaches are not readily adaptable to typical language models due to potential deviations from target model's distribution, strict input constraints, or limited support for general stochastic sampling. Notably, recent variants of SD have considered different interactions between the draft and target model to reduce unnecessary computation~\citep{kim2023big} and incorporated parallel computation along the batch axis, sometimes combined with token tree verification, as seen in SpecTr~\citep{sun2023spectr}, SpecInfer~\citep{miao2023specinfer}, and Medusa~\citep{medusa}. 
In contrast, our work focuses on enhancing SD by improving the alignment between the small draft model and the large target model through KD, which does not require any changes to serving infrastructures already implementing SD and is complementary to the recent variants of SD. 
Furthermore, we conduct a systematic study of lossy SD for providing nuanced control over the trade-off between quality and latency for specific serving models.

\paragraph{Knowledge distillation~(KD) for LLMs} KD~\citep{bucilua2006model, hinton2015distilling}, which trains high-quality smaller student models with the supervision of larger teacher models, has emerged as a vital technique for reducing inference cost while maintaining model quality across a range of domains. %
In the context of LLMs, prior uses of KD~\citep{rohan2023alpaca, fu2023specializing} have mostly focused on \emph{black-box} KD, wherein only the teacher's output generations, generally via APIs, are accessible during student training. %
However, with the proliferation of open-source LLMs \citep{zhang2022opt, touvron2023llama}, which enable access to teacher weights and logits, there is a growing interest in \emph{white-box} KD. White-box KD allows student models to benefit from richer supervision signals provided by white-box teacher models, leading to enhanced language abilities~\citep{agarwal2023gkd, gu2023knowledge, wen2023f}. 

Unlike prior works focused on creating highly capable standalone student models, we harness KD to foster closer collaboration between smaller and larger models in SD, which may be particularly valuable when a small distilled model alone cannot meet stringent quality requirements.
While \citet{stern2018blockwise} use a black-box KD approach (SeqKD) to enhance blockwise parallel decoding, their samples are generated from the large target model, which is prohibitively expensive for LLMs. Furthermore, they ignore the teacher model's logits and train their draft model using only one-hot teacher labels---a reasonable choice for greedy decoding but a less effective one for non-greedy sampling~(Figure~\ref{fig:block_efficiency_improvement}). Concurrently, ~\citet{liu2023online} propose to improve SD using KD, but they assume an online setup with a changing query distribution, and focus on improving the acceptance rate rather than reducing the actual latency.

\vspace{-0.25cm}
\section{Background: Speculative Decoding} \label{sec:background}
\vspace{-0.25cm}

\paragraph{Notation} Given an input sequence $x$ comprising tokens from a pre-defined vocabulary, 
a language model $\M$ provides a distribution over possible output sequences $y$.
Suppose we employ SD with a \textbf{compact draft model $\draftM$} to assist a \textbf{larger target model} $\targetM$. 
Let $\targetdistgiven$ and $\draftdistgiven$ represent the distributions governing next-token predictions at time step $t$ for $\targetM$ and $\draftM$, respectively, given the context %
$\prefix = \{\context\}$. Given input $x$ as prefix, let $\targetseqdistgiven$ and $\draftseqdistgiven$ represent the distributions governing the  sequence $y$ sampled autoregressively from $\targetM$ and $\draftM$, respectively,
where the generation stops either when an end-of-sequence token is sampled or the maximum sequence length $T$ is reached.
For simplicity, we use $\targetdist$ and $\draftdist$ as shorthands for $\targetdistgiven$ and $\draftdistgiven$,  whenever the context $\prefix$ is clear. Similarly, $\targetseqdist$ and $\draftseqdist$ serve as shorthands for $\targetseqdistgiven$ and $\draftseqdistgiven$, whenever the input $x$ is clear.

\paragraph{Speculative sampling} Standard SD uses a procedure called {\em speculative sampling} to generate tokens from the draft model while maintaining the same output distribution as the target model.
As detailed in Algorithm~\ref{alg:alg1} (Appendix), each step of SD works as follows. First, a \emph{block} of $\blocksize$ tokens, denoted as $y_t, \dots, y_{t+\blocksize-1}$, is autoregressively sampled from $\draftdist, \dots, q(y_{t+\blocksize-1})$. Next, the $\blocksize$ tokens are verified in parallel by passing them to $\targetM$ as a whole block, which sequentially accepts token $y_{t+i}$ with probability $\min\left({1, {\targetdisti}/{\draftdisti}}\right)$. If any token $y_{t+i}$ is rejected before the end of the block, the subsequent tokens are discarded and the rejected token is resampled from the adjusted distribution
$p'(y_{t+i}) \propto \max({0, \targetdisti-\draftdisti})$; otherwise, the drafted tokens are all accepted and an extra token is sampled from $p(y_{t+\blocksize})$ and appended to the output sequence. This process guarantees that the sequence of accepted and resampled tokens follow the same output distribution as $\targetdisti$~\citep{leviathan2023fast}. The procedure is repeated until an end-of-sequence token is accepted, or the maximum sequence length $T$ has been reached.

\paragraph{Efficiency measure: acceptance rate} 
Each SD step takes a constant amount of time, so the wall-clock time scales linearly with the number of steps. This number is equal to the total number of times that the target model rejects a token, plus the number of blocks accepted as a whole, where the latter term is small for large $\blocksize$.
This motivates us to use the {\em acceptance rate} as a surrogate efficiency measure for the wall-clock time.
For an ideal SD process with $\gamma = \infty$, 
we define the \textbf{sequence-level acceptance rate} $\acceptancerate(x)$ for a given input $x$ as follows: %
\vspace{-1mm}
\begin{equation}
  \acceptancerate(x) := \frac{\expectation{}{\text{number of accepted tokens in generating } y}}{\expectation{}{\text{number of tokens in } y}} = \frac{\expectation{y \sim \targetseqdistgiven}{\sum_{t=1}^{\seqlen{y}} \beta(\context)}}{L_p(x)},\label{eq:alpha-is-beta}
\end{equation}
where $\beta(\context) := \expectation{y_t \sim \draftdist}{\min\left(1, {\targetdist}/{\draftdist}\right)}$ is the token-level acceptance rate, and expectations are taken over the randomness in SD. Since speculative sampling preserves the output distribution of the target model, the denominator is simply equal to the expected length of the target output $\expectedtargetoutlen := \expectation{y \sim \targetseqdistgiven}{\seqlen{y}}$, which is invariant to the choice of draft model. Therefore, the acceptance rate is directly related to the expected total number of rejected tokens $(1-\alpha(x)) \cdot \expectedtargetoutlen$,\jfknote{This is approximately correct only for large gamma.}\kfnote{Hmmmm I think it is always correct now}\asrnote{So looks like $L_p \leq$ avg number of accepted tokens + SD steps (because each SD step produces a new token). The inequality is because the we may not have to generate the final token in the last SD step. So then $(1 - \alpha(x))L_p$ should directly give us a lower bound on `expected number of SD steps` without going through `number of rejected tokens`?} which lower bounds the expected number of SD steps.

\paragraph{Efficiency measure: block efficiency} In practice, SD is usually employed with a fixed finite block size $\blocksize$; thus, a more relevant efficiency measure is the \textbf{block efficiency} $\blockeff$. Given an input $x$, we compute the block efficiency $\blockeff(x)$ as the expected number of accepted tokens per block. Note that, for a given block size $\blocksize$, the maximum value of $\blockeff(x)$  is $\blocksize + 1$, corresponding to the case where all drafted tokens are accepted and augmented with an additional token sampled by the target model. If we assume that the token-level rates $\beta(\context)$ are i.i.d., then the sequence-level acceptance rate satisfies $\alpha = \expectation{}{\beta}$ and $\tau(x) = (1 - \alpha^{\gamma+1})/{(1-\alpha)}$~\citep{leviathan2023fast}.

\textbf{Wall-clock time improvement}. For given block efficiency $\tau(x)$, the expected speedup of SD is given by ${\tau(x)} / (c \gamma + 1)$, where the relative latency $c$ is the ratio between the times elapsed when making a single forward pass through the draft model $\draftM$ and through the target model $\targetM$. 

\section{DistillSpec: Knowledge Distillation for Speculative Decoding}
\label{sec:distill-spec}

As described in \S~\ref{sec:background}, speculative decoding~(SD) leverages a small (draft) model to reduce the latency of decoding from the larger (target) model distribution without any performance drop. However, the realized speedup critically depends on how ``well-aligned'' the draft model is to the target model. In this work, we propose \textbf{DistillSpec}, 
a general framework that improves SD by better aligning the target model and draft model using white-box knowledge distillation~(KD).
We first present KD-based training of the draft model, and highlight how our objective of enhancing SD via KD influences our selection of training data generation method and divergence function---two key ingredients of DistillSpec. We then discuss how DistillSpec can be extended to lossy SD.

Let the draft model $\trainabledraftM$ be parameterized by $\theta$. DistillSpec utilizes predictions from the target model $\targetM$ as a source of supervision in training the draft model $\trainabledraftM$. We assume white-box access to both models, i.e., we can obtain their next-token distributions $\targetdist$ and $\draftdist$, and therefore we are able to generate samples from both models. Given a divergence function $\divergence$ that measures the misalignment between two distributions, KD-based training of the draft model seeks to minimize the divergence between the teacher (target) and student (draft) distributions over a training set $\dataset$: 
\begin{equation}
    \theta^{*} := \argmin \expectation{(x,y)\sim \dataset}{\divergence(\targetM \| \trainabledraftM) (y|x)},
  \label{eq:distill_loss}
\end{equation}
where $\divergence(\targetM \| \trainabledraftM) (y|x) = \frac{1}{\seqlen{y}} \sum_{t=1}^{|y|} \divergence\big(
  p(\,\cdot \given \context) \| q^\theta(\,\cdot \given \context) \big)$.%
We note that flexibility in how $\dataset$ is constructed and the choice of divergence $\divergence$ gives rise to different possible KD algorithms. For instance, $\dataset$ may consist of task-specific input-output pairs $(X,Y)$ or sequences generated from %
$\targetM$ or $\trainabledraftM$. While {\em forward} KL ($\FKL$) is the commonly used divergence for KD~\citep{hinton2015distilling}, recent works~\citep{agarwal2023gkd, wei2022chain} have shown the effectiveness of alternative divergences, including {\em reverse} KL ($\RKL$), Jensen–Shannon divergence ($\JSD$), and total variation distance ($\TVD$). Further details on each divergence can be found in Appendix~\ref{app:divergence_fn}. Table~\ref{tab:algo} summarizes various distillation recipes, each being a specialized instance of Algorithm~\ref{alg:alg2}. 

Our choices for $\dataset$ and $\divergence$ are guided by how the resulting distilled model, once employed as draft model, improves the speed of SD. Towards this, we first highlight the role that $\TVD$ between $\targetdist$ and $\draftdist$ plays in dictating the acceptance rate (\S~\ref{sec:background})---a key efficiency measure for  SD.

\paragraph{TVD as proxy for the acceptance rate}
\citet[Corollary 3.6]{leviathan2023fast} show that the token-level acceptance rate $\beta(\context)$ satisfies $\beta(\context) = 1 - \TVD(\targetdist, \draftdist)$. Hence, Eq.~\ref{eq:alpha-is-beta} implies that maximizing the sequence-level acceptance rate $\acceptancerate(x)$ is equivalent to minimizing the expected $\TVD$ between $\targetdist$ and $\draftdist$ over the output sequence distribution of $\targetM$, i.e.:
\begin{equation}
  \acceptancerate(x) = 1 - {\expectationB{y \sim \targetseqdistgiven}{\sum_{t=1}^{\seqlen{y}} \TVD(\targetdist, \draftdist)}}/{\expectedtargetoutlen}.
  \label{eq:seq-acceptance-rate-is-tvd}
\end{equation}

\textbf{Choice of divergence.} Based on Eq.~\ref{eq:seq-acceptance-rate-is-tvd}, it appears that directly minimizing $\TVD$ may be a principled objective for draft model distillation. While optimizing $\TVD(p,q)$ is theoretically inspired, our empirical study shows that such an objective may not consistently yield optimal results. We find that the choice of the most suitable divergence is highly task-dependent~(\S~\ref{sec:recipe}).

\begin{table}[t]
 \vspace{-0.2cm} 
  \caption{\small Summary of various KD algorithms in terms of training data $\dataset$ and divergence $\divergence$ (cf.~Eq.~\ref{eq:distill_loss}). %
  \citet{wen2023f,agarwal2023gkd} also consider other $\divergence$; we list the most representative one.}
  \label{tab:algo}
  \small
  \centering
  \scalebox{0.95}{
    \begin{tabular}{@{}lcl@{}}
    \toprule
    Name & Divergence & Training Data~($\dataset$)\\
    \midrule
    SeqKD~\citep{kim2016sequence}~(\textbf{Black-box KD}) & FKL & Data generated by teacher $\targetM$ \\
    Supervised KD~\citep{hinton2015distilling} & FKL & Fixed dataset of input-output pairs \\
    ImitKD~\citep{lin2020autoregressive} & FKL & Fixed dataset + data generated by $\trainabledraftM$ \\
    f-Distill~\citep{wen2023f} & TVD & Data generated by $\trainabledraftM$ and $\targetM$  \\
    On-policy GKD~\citep{agarwal2023gkd} & FKL / JSD & On-policy data from student $\trainabledraftM$ \\
    \bottomrule
    \end{tabular}
    \vspace{-0.2cm} 
    }
\end{table}

\paragraph{Choice of training data} As for $\dataset$, one could resort to an existing ground-truth dataset, however the teacher's output distribution may deviate from the ground-truth distribution despite the teacher having been fine-tuned on it. Moreover, ground-truth datasets are often limited in size, so training \textit{only} on such data could result in overfitting. \revise{To mitigate these issues, we use model-generated outputs for distillation. Specifically, we prompt the model with a task input sampled from a ground-truth training dataset, and use the model response as data for distillation. Both the teacher and student models may be used to generate the distillation examples.}

\revise{\textbf{Model-generated distillation data}}. Eq.~\ref{eq:seq-acceptance-rate-is-tvd} suggests optimizing the expected $\TVD$ over outputs generated from the teacher. Decoding from a large teacher is generally prohibitively expensive, especially at the scale of dataset required for KD. 
To reduce the generation cost, we explore using {\em on-policy data} during distillation, i.e., output sequences sampled from the student itself.  Besides being more computationally efficient compared to teacher generations, this approach is motivated by~\citet{gu2023knowledge, agarwal2023gkd}, who have shown that distilling on on-policy data improves student task performance. However, different from these prior works, our primary focus is on improving the student-teacher alignment.
Thus, it may not be immediately clear whether minimizing the expected $\TVD$ over on-policy (student-generated) data ensures an improved acceptance rate, which is computed as an expectation over the \emph{teacher's} output distribution (cf.~Eq.~\ref{eq:seq-acceptance-rate-is-tvd}).
Our following result shows that this is indeed the case.
\begin{theorem} \label{thm:acc}
  For SD, if the draft model $\trainabledraftM$ achieves on-policy KD loss $\epsilon = \expectation{x \sim X, y \sim \draftseqdistgiven}{\TVD(\targetM \| \trainabledraftM)(y \given x)}$, then the sequence-level acceptance rate is at least
  \begin{equation}
  \expectation{x \sim X}{\acceptancerate(x)} \ge 1 - T \cdot \expectation{x \sim X}{\tfrac{T}{\expectedtargetoutlen}}\epsilon.
  \end{equation}
  When the target output length is always $T$, the bound simplifies to 
  $\expectation{x \sim X}{\acceptancerate(x)} \ge 1 - T\epsilon$.
\end{theorem}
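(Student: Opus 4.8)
The starting point is the exact identity from Eq.~\ref{eq:seq-acceptance-rate-is-tvd}: setting
\[
  G_p(x) := \expectation{y \sim \targetseqdistgiven}{\textstyle\sum_{t=1}^{\seqlen{y}} \TVD(\targetdist, \draftdist)},
\]
Eq.~\ref{eq:seq-acceptance-rate-is-tvd} reads $\expectation{x\sim X}{\acceptancerate(x)} = 1 - \expectation{x\sim X}{G_p(x)/\expectedtargetoutlen}$, so it suffices to upper bound $G_p(x)$, an average over \emph{teacher}-sampled trajectories, by a multiple of the on-policy loss $\epsilon$, which is an average over \emph{student}-sampled trajectories. The whole difficulty lies in this change of the trajectory-generating measure from $\targetM$ to $\trainabledraftM$; everything else is bookkeeping around sequence lengths.

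First I would move from $G_p(x)$ to its student analogue $G_q(x) := \expectation{y \sim \draftseqdistgiven}{\sum_{t=1}^{\seqlen{y}} \TVD(\targetdist, \draftdist)}$ in two steps. (i) The per-trajectory statistic $g(y) := \sum_{t=1}^{\seqlen{y}} \TVD(\targetdist,\draftdist)$ obeys $0 \le g(y) \le \seqlen{y} \le T$, so the variational characterization of total variation (expectations of a $[0,T]$-valued function differ by at most $T$ times the $\TVD$ of the laws) gives $G_p(x) \le G_q(x) + T\cdot\TVD(\targetseqdistgiven, \draftseqdistgiven)$. (ii) To control the sequence-level distance I would use a hybrid/telescoping argument: interpolate between $\targetseqdistgiven$ and $\draftseqdistgiven$ through the laws $r_k$ that draw the first $k$ tokens autoregressively from $\trainabledraftM$ and the remaining tokens from $\targetM$; consecutive $r_{k-1}$ and $r_k$ share the law of the first $k-1$ tokens, so by the data-processing inequality (applying the shared ``continue with $\targetM$'' kernel cannot increase $\TVD$) together with joint convexity of $\TVD$ (to average the conditional distances over the common prefix law), $\TVD(r_{k-1}, r_k) \le \expectation{y_{<k} \sim \draftseqdistgiven}{\TVD(p(\cdot\given\context), q^\theta(\cdot\given\context))}$, this term being $0$ once an end-of-sequence token has appeared. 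Summing the triangle inequality over $k = 1, \dots, T$ telescopes to $\TVD(\targetseqdistgiven, \draftseqdistgiven) \le G_q(x)$, hence $G_p(x) \le (T+1)\,G_q(x)$.

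It then remains to unwind $G_q(x)$ into the normalized on-policy loss: since $\sum_{t=1}^{\seqlen{y}}\TVD(\targetdist,\draftdist) = \seqlen{y}\cdot\TVD(\targetM \| \trainabledraftM)(y\given x) \le T\cdot\TVD(\targetM \| \trainabledraftM)(y\given x)$, we get $G_q(x) \le T\cdot\expectation{y\sim\draftseqdistgiven}{\TVD(\targetM \| \trainabledraftM)(y\given x)}$. Substituting the chain of bounds into the identity above and averaging over $x\sim X$ yields
\[
  \expectation{x\sim X}{\acceptancerate(x)} \ \ge\ 1 - T(T+1)\,\expectation{x\sim X}{\tfrac{1}{\expectedtargetoutlen}\,\expectation{y\sim\draftseqdistgiven}{\TVD(\targetM\|\trainabledraftM)(y\given x)}},
\]
which has the claimed form $1 - T\,\expectation{x\sim X}{T/\expectedtargetoutlen}\,\epsilon$ and collapses to $1 - T\epsilon$ when $\expectedtargetoutlen \equiv T$; squeezing the prefactor down to exactly $T$ (rather than $T+1$) calls for slightly sharper accounting in steps (i)--(ii). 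I expect the hybrid argument for $\TVD(\targetseqdistgiven, \draftseqdistgiven)$, and in particular the correct handling of variable output length and the end-of-sequence convention, to be the main technical hurdle; the two invocations of ``$\seqlen{y} \le T$'' and the variational bound are routine.
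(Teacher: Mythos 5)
Your proof is correct, and it takes a route that is related to but genuinely different from the paper's. The paper decomposes both the acceptance rate and the on-policy loss into per-token terms $A_t(x)$ and $E_t(x)$ (Eqs.~\ref{eq:A_t}--\ref{eq:E_t}), derives variational representations for each, and proves the swapping inequality $A_t(x)\le 2\sum_{k<t}E_k(x)+E_t(x)$ (Eq.~\ref{eq:bound_A_t_with_E_t}) by replacing teacher-sampled prefix tokens with student-sampled ones one position at a time inside the variational form; summing over $t$ gives the per-input bound $\alpha(x)\ge 1-\tfrac{2T^2}{L_p(x)}\epsilon(x)$. You instead keep the sequence-level statistic $g(y)=\sum_{t\le\seqlen{y}}\TVD\bigl(p(\cdot\given x,y_{<t}),q^\theta(\cdot\given x,y_{<t})\bigr)\in[0,T]$, pay for the change of trajectory measure once via $G_p(x)\le G_q(x)+T\cdot\TVD\bigl(p_{\le T}(\cdot\given x),q_{\le T}(\cdot\given x)\bigr)$, and control the sequence-level distance by a single telescoping pass through the same hybrid (student-prefix/teacher-continuation) distributions, $\TVD(p_{\le T},q_{\le T})\le\sum_{k=1}^T E_k(x)=G_q(x)$. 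The underlying engine is identical---each prefix swap costs $E_k(x)$, by the shared student prefix marginal plus data processing on the common teacher continuation---but your decomposition is more modular: it isolates a reusable ``simulation lemma'' (sequence-level TVD of two autoregressive models is at most the expected sum of per-step TVDs along the trajectories of either one) plus a generic change-of-measure for a $[0,T]$-bounded statistic, and it avoids the per-$t$ double telescoping. It also yields the per-input constant $T(T+1)$, which is in fact slightly sharper than the $2T^2$ that the paper's own proof delivers, so your concern about ``squeezing the prefactor down to exactly $T$'' is moot: the paper's written proof likewise only establishes the stated $T^2$ bound up to a constant factor, and, like yours, it passes silently from the per-input bound $1-cT^2\epsilon(x)/L_p(x)$ to the product form $1-T\,\E_x[T/\expectedtargetoutlen]\,\epsilon$; in the fixed-length case $L_p\equiv T$ both arguments give the advertised $1-O(T)\epsilon$ guarantee.
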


We defer the proof to~\Cref{app:on_policy}. Intuitively, it builds upon the following insights.
If the on-policy KD loss is small, then, for any $1 \le t \le T$, the same loss evaluated only at the $t$-th token should also be small.
Since the first token generation is independent of any other tokens, a small value of on-policy KD loss ensures that the first token distributions of the draft and target models are close.
Then, an inductive argument shows that once the draft and target are similar on the first $t$ tokens, the distributions of the $(t+1)$-th token should also be close. Our proof makes this argument rigorous by utilizing variational representations of $\TVD$, leading to a linear error bound in $T$.

\paragraph{DistillSpec with lossy SD} DistillSpec enhances SD efficiency without any quality loss compared to the target model $\targetM$. In practical applications, a reduction in model quality may be justified in order to support even faster inference. For such scenarios, we extend DistillSpec to lossy SD~\citep{leviathan2023fast}, which uses a \emph{lenience function} $f(p, \lenience)$ that modifies the acceptance probability from $\min\left(1, {\targetdist}/{\draftdist}\right)$ to $\min\left(1, {\color{brown} f({\targetdist}, \lenience)} / \draftdist\right)$ (cf.~\S~\ref{sec:background}). Here
$f: [0, 1]^2 \rightarrow \mathbb{R}^+$ is increasing and decreasing in its first and second arguments, respectively, and $\lenience \in [0, 1]$ is a free parameter~(cf.~Algorithm \ref{alg:alg1}). In this work, we evaluate multiple choices for the lenience functions: $f_{\rm lin}(p,\lenience) = p/\lenience$, $f_{\rm sq}(p,\lenience) = p/\lenience^2$, and
$f_{\rm exp}(p, \lenience) = p^\lenience$. For example, when the lenience function is $f_{\rm sq}(p, \lenience)$ and $\lenience = 0.1$, token $x$ sampled from $\draftdist$ becomes hundred times more likely to be accepted by the target, thus enabling faster inference at the expense of a potential drop in generation quality. Lenience was discussed by \citet{leviathan2023fast} in the context of $f_{\rm lin}$ and their treatment focuses solely on latency improvements, whereas we explore the use of different lenience functions as a precise control mechanism to achieve the desired quality-latency profile.

\section{Experiments}

\subsection{Enhancing speculative decoding through distillation}\label{sec:distill_benchmark}
We evaluate the effectiveness of KD in improving the speed of speculative decoding (SD). We follow~\citet{leviathan2023fast} and investigate its impact on the acceptance rate $\acceptancerate$, block efficiency $\blockeff$, and \revise{actual latency speedup with a batch size of 1 under greedy sampling ($T=0$) and standard temperature sampling ($T=1$).}

\paragraph{Tasks and models}~Following \citet{leviathan2023fast}, we evaluate two model types: 1) GPT-like decoder-only Transformer models trained on LM1B task~\citep{chelba2013one} using a standard autoregressive objective, where the target and draft models have 234M and 33M parameters, respectively; and 2) 
standard encoder-decoder T5 v1.1 models~\citep{raffel2020exploring} fine-tuned on four different tasks, with T5-XL (3B) and T5-Small (77M) serving as the target and draft models, respectively.
We utilize two datasets from the T5 paper, namely WMT EnDe~\citep{bojar2014W14-33} and CNN/DM~\citep{hermann2015teaching}, which deal with translation and text summarization, respectively. The remaining two tasks used to test T5 models are XSum~\citep{narayan2018dontgm} and GSM8K~\citep{cobbe2021training}, which deal with abstractive summarization and arithmetic reasoning, respectively. See Appendix~\ref{app:imp_details} for more details.

\begin{figure}[t]
\vspace{-5mm}
\centering
\begin{subfigure}[b]{0.99\textwidth}
\centering
\includegraphics[width=0.99\linewidth]{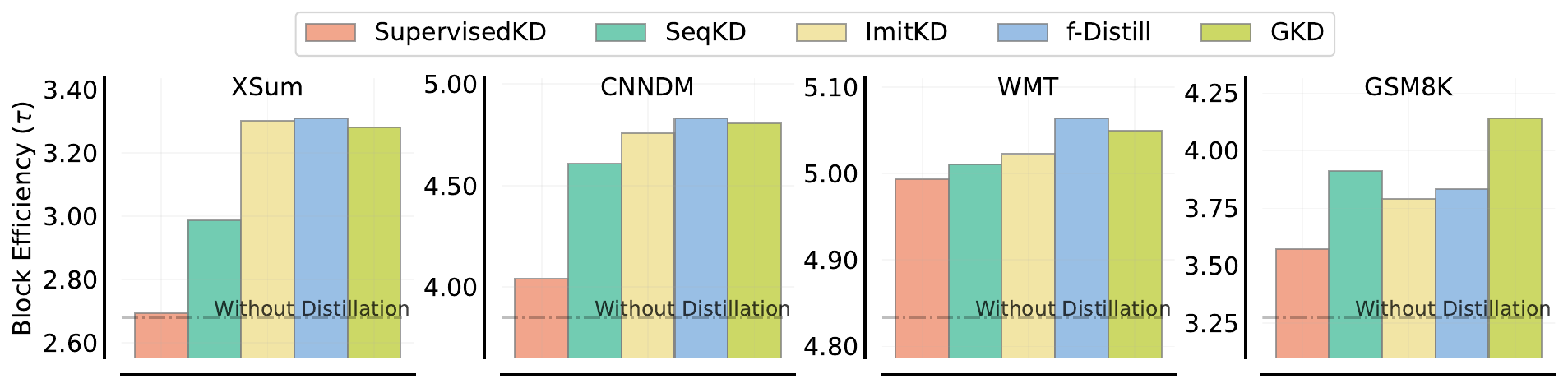}
\end{subfigure}
\vspace{-1.5mm}
\caption{\small Distillation enhances block efficiency ($\tau$) across diverse datasets, highlighting the superiority of model-generated data over fixed ground truth data (SupervisedKD) and emphasizing the importance of white-box distillation in addressing SeqKD's subpar performance. }\label{fig:block_efficiency_improvement}
\vspace{-3mm}
\end{figure}

\revise{\paragraph{Decoding speedup}}~Figure~\ref{fig:performance_overview} shows that the impact of distillation on SD speed is evident, consistently yielding a 10$-$46\% improvement across various datasets. This effect is most pronounced when employing greedy decoding. The performance of KD for different block sizes and decoding strategies across five datasets is presented in Table~\ref{tab:dd_gain_across_dataset_app} (Appendix). These findings demonstrate that KD significantly enhances the acceptance rate and block efficiency of SD for both decoder-only and encoder-decoder models across all datasets. Distillation algorithms utilizing model-generated data consistently outperform other approaches, resulting in $\sim$20\% additional speedup compared to standard SD on LM1B, XSum, CNN/DM, and GSM8K. 

\revise{\paragraph{Block efficiency}}~Figure~\ref{fig:block_efficiency_improvement} presents a comparison of block efficiency across different algorithms, employing temperature sampling ($T=1$) with a block size $\blocksize = 7$. The figure underscores the utility of model-generated data: using ground-truth data (i.e., Supervised KD) ranks lowest across all settings. In contrast, $f$-Distill and GKD, which only use model-generated data, significantly outperform other KD variants. The subpar performance of SeqKD, despite being purely trained on data generated by the target model, suggests that white-box distillation (i.e., supervision from the target model's logits) is vital for SD. This is corroborated by Figure~\ref{fig:alpha_vs_walltime}, which illustrates the evolution of the acceptance rate throughout training. Supervised KD ranks lowest, and its performance plateaus early during training due to the static nature of the dataset. In contrast, algorithms using model-generated data lead to continual improvement of the acceptance rate. Despite $f$-Distill being much more computationally costly than GKD due to the use of teacher-generated data, both algorithms exhibit comparable performance. Notably, GKD achieves the best wall-time performance improvement. See Appendix~\ref{app:performance_vs_time} for more visualizations of performance improvement during training.

We also investigate whether KD improves block efficiency universally or impacts a limited subset of examples. Figure~\ref{fig:seq_level_tau_xsum} depicts the improvement per example. We observe consistent gains in block efficiency across most examples, which can also be seen in various datasets (see Figure~\ref{fig:seq_level_improvement}). Figure~\ref{fig:theoretical_empirical_block_efficiency} illustrates a strong agreement between theoretical and empirical block efficiency values for several distilled models (each model is represented as a filled circle). Despite theoretical values occasionally overestimating or underestimating empirical values, possibly due to potential deviations from the i.i.d. token-level
assumption (cf. \S\ref{sec:background}), the ranking of distilled models remains highly consistent. In summary, these findings largely confirm that KD effectively optimizes block efficiency.

\revise{\paragraph{Transferability of distilled models}~We next examine the transferability of distilled models on diverse datasets unseen during training. We use a draft model distilled on GSM8K and test its ability to speed up SD on zero-shot chain-of-thought (CoT) prompting over 23 reasoning tasks from the BigBenchHard suite~\citep{suzgun2022challenging}. The results, illustrated in Figure~\ref{fig:performance_overview}, indicate effective transferability to other datasets. Compared to standard SD, the distilled model significantly enhances average decoding speeds, yielding speedup improvements from 1.93$\times$ and 1.78$\times$ to 2.21$\times$ and 2.02$\times$ for greedy and non-greedy decoding methods, respectively. Further analysis in Figure~\ref{app:main_bar_plot_xxl} reveals that using our distilled T5-Small as draft model is also compatible with larger target models (T5-XXL) despite being distilled from a different-sized model (T5-XL). Despite being not fully optimized, this configuration consistently outperforms standard SD by $7\%- 37\%$ across various datasets.
See Appendix~\ref{app:distill_benchmark} for more details.}

\begin{figure}[t]
\vspace{-5mm}
\centering
\begin{subfigure}[b]{0.325\textwidth}
\includegraphics[width=1.0\linewidth]{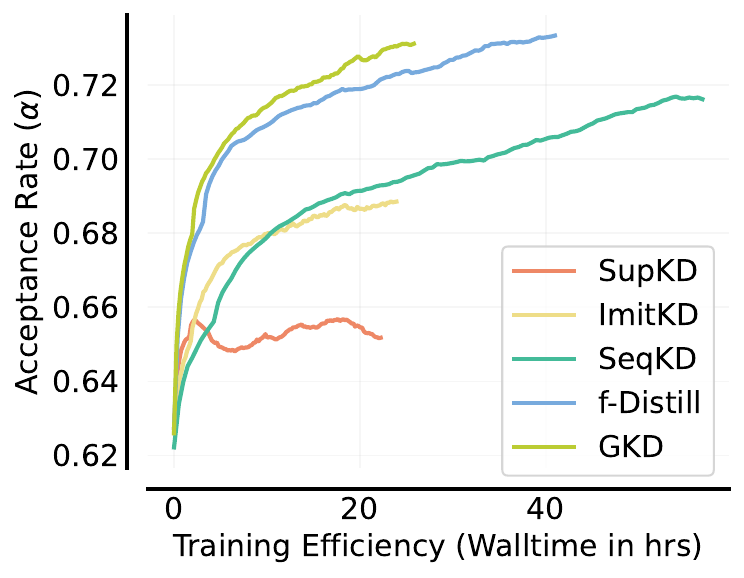}
 \caption{acceptance rate vs. wall-time}\label{fig:alpha_vs_walltime}
\end{subfigure}
\begin{subfigure}[b]{0.325\textwidth}
\includegraphics[width=1.0\linewidth]{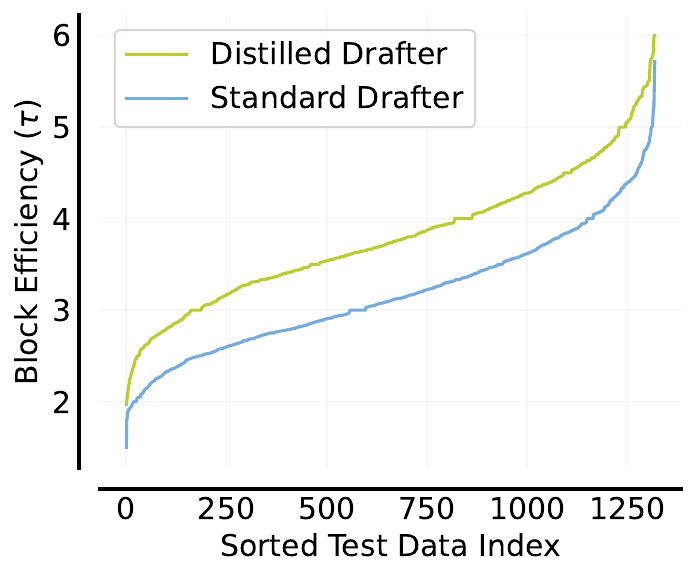}
 \caption{instance-level block eff.~($\tau$)}\label{fig:seq_level_tau_xsum}
\end{subfigure}
\begin{subfigure}[b]{0.325\textwidth}
\includegraphics[width=1.0\linewidth]{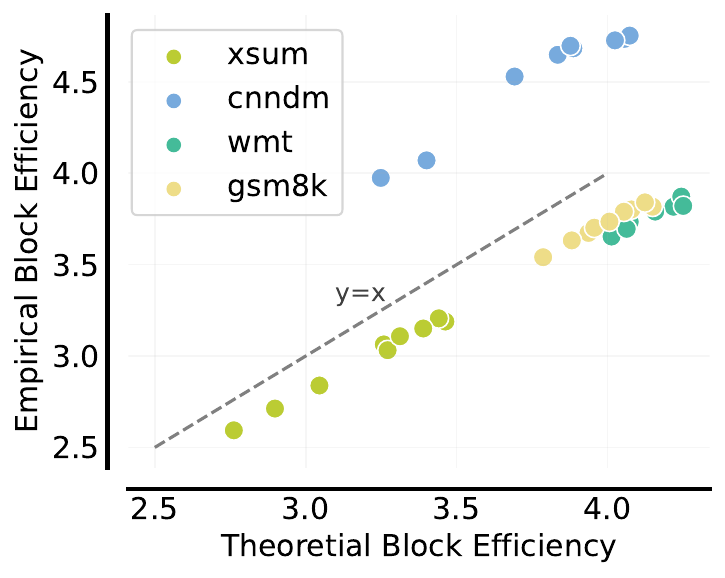}
 \caption{agreement of $\tau$}\label{fig:theoretical_empirical_block_efficiency}
\end{subfigure}
\vspace{-1.5mm}
\caption{\small (a) White-box KD using teacher logits and model-generated data is crucial. Draft model's on-policy data can be as effective as target model data. GKD achieves the best wall-time performance improvement on XSum. (b) Distillation improves the block efficiency for all examples on GSM8K. (c) Empirical block efficiency aligns well with its $\TVD$-based theoretical counterpart.}
\label{fig:improvement_analysis}
\vspace{-3mm}
\end{figure}

\vspace{-0.1cm}
\subsection{DistillSpec recipe}\label{sec:recipe}

We now focus on identifying the optimal KD approach for SD. Following the training and evaluation protocols in \S~\ref{sec:distill_benchmark}, we explore four training data construction methods and four divergence functions on XSum and GSM8K. 
Specifically, we explore the following variants of training data: 1) fixed ground-truth dataset $\mathcal{D}_\text{Train}$, 2) data generated only from the draft model $\trainabledraftM$, 3) data generated only from target $\targetM$, 4) data generated from both $\trainabledraftM$ and $\targetM$ in equal proportion. %
We also examine the following divergences: 1) forward KL (FKL), 2) Jenson-Shannon divergence~(JSD), 3) reverse KL (RKL), and 4) total variation distance (TVD).

\paragraph{Importance of training data and divergence in DistillSpec} Figure~\ref{fig:distillation_recipe} illustrates the block efficiency improvement on XSum and GSM8K, in line with observations from \S~\ref{sec:distill_benchmark}. We note that using model-generated data (last three rows) yields superior performance than using a fixed dataset (first row). Specifically, on XSum with greedy decoding, using data generated from both $\trainabledraftM$ and $\targetM$ leads to the best performance, with JSD slightly outperforming the other divergences. However, on GSM8K with greedy decoding, FKL with only draft-generated data emerges as the best setup. In contrast, with temperature sampling (at $T=1$), a different trend is observed as RKL combined with data generated by $\targetM$ is the most effective setup. See Appendix~\ref{app:heatmap_improvement} for results on different datasets and decoding strategies. Nonetheless, using only draft-generated data is found to be competitive.

\paragraph{Impact of distillation on draft quality vs. compatibility} We also study how different distillation recipes affect task performance and whether there is any one design choice that is simultaneously optimal for improving both draft model task performance and its utility for SD (cf.~Figure~\ref{fig:performance_blockefficiency_correlation_xsum}, \ref{fig:performance_blockefficiency_correlation_gsm8k}).
Similar to our earlier observations, the use of generated data is paramount for improving draft performance. Notably, utilizing data generated from $\trainabledraftM$ yields comparable or superior results compared to using data generated from $\targetM$. However, which KD algorithm is optimal largely depends on the task at hand and the underlying decoding strategy. 
Figure~\ref{fig:compatibility} highlights an interesting dichotomy between block efficiency improvements and task performance gains via KD: distilled models with high task performance do not necessarily translate into more aligned drafters for SD. 

\paragraph{Recommendation} Interestingly, although TVD is the objective we should aim to optimize for SD (cf.~Eq.~\ref{eq:seq-acceptance-rate-is-tvd}), its direct optimization does not yield the best performance in most of the settings explored. We generally find that the choice of divergence in KD is a hyperparameter that needs to be tuned based on the task at hand and decoding strategy used. For training data construction, we propose using the draft model $\trainabledraftM$ for data generation as it can achieve similar or superior performance compared to the target model $\targetM$, but at a much lower cost.

\begin{figure}[t]
\centering
\vspace{-0.1in}
\centering
\begin{subfigure}[b]{0.325\textwidth}
\includegraphics[width=1.0\linewidth]{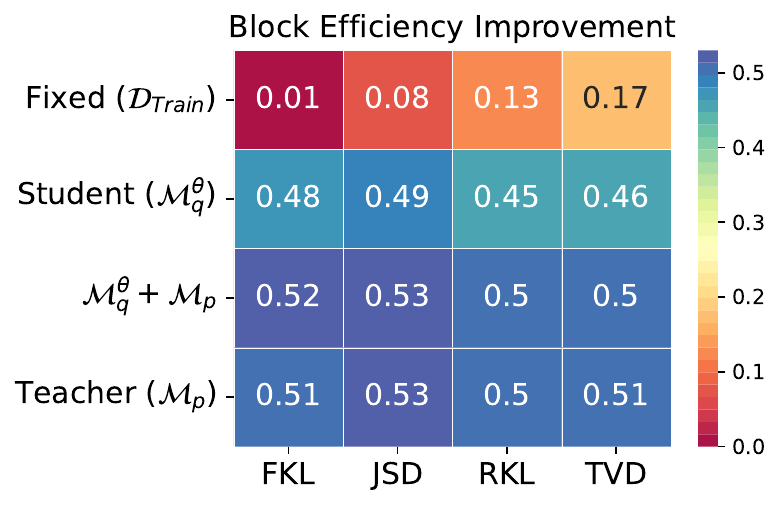}
 \caption{XSum: $\blockeff$ (greedy) }\label{fig:ab_data_mixture_xsum}
\end{subfigure}
\begin{subfigure}[b]{0.325\textwidth}
\includegraphics[width=1.0\linewidth]{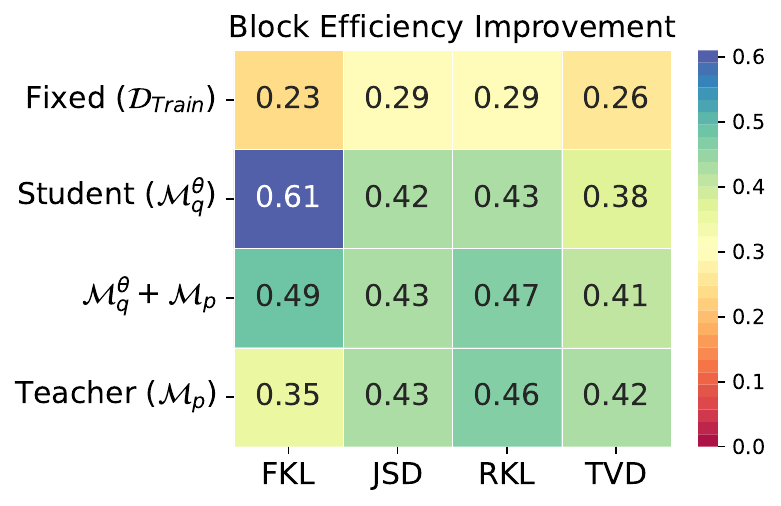}
 \caption{GSM8K: $\blockeff$ (greedy)}\label{fig:ab_data_mixture_gsm8k}
\end{subfigure}
\begin{subfigure}[b]{0.325\textwidth}
\includegraphics[width=1.0\linewidth]{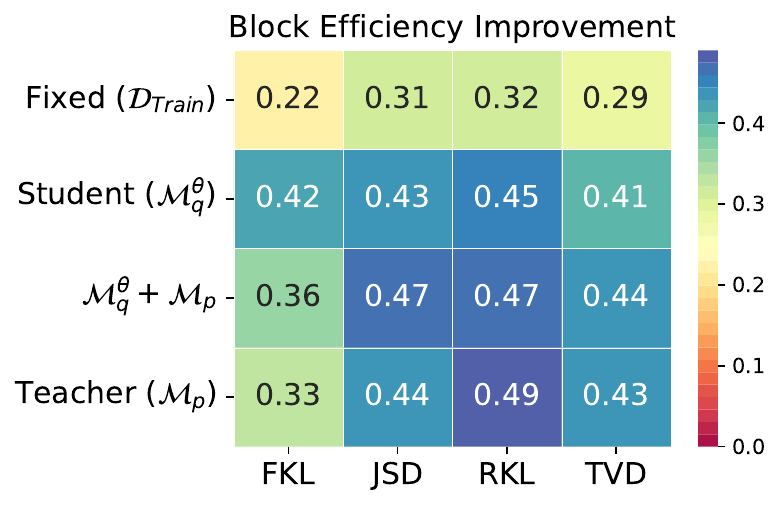}
 \caption{GSM8K: $\blockeff$ (non-greedy)}
\end{subfigure}
\caption{\small \textbf{DistillSpec recipe}. We report improvement in empirical block efficiency post-distillation on (a) XSum with greedy sampling, and GSM8K with (b) greedy and (c) non-greedy sampling.  Which divergence function and data construction lead to the largest block efficiency improvements highly depends on the task. %
They should be treated as a hyperparameters to be tuned on our task of interest.}
\label{fig:distillation_recipe}
\vspace{-0.1in}
\end{figure}

\subsection{Quality versus latency trade-off}
\label{sec:quality_latency_tradeoff}

\paragraph{Lossy speculative decoding}
We analyze the quality-latency trade-off using lossy SD variants, as detailed in Algorithm~\ref{alg:alg1}. As Figure~\ref{fig:lossy_speculative_decoding_gsm8k} illustrates, employing either KD ($\star$) or SD ($\times$) alone does not fully bridge the performance or latency gaps, respectively. In such case, a leniency parameter ($\varepsilon$) can help interpolate between these two approaches, as demonstrated in Figure~\ref{fig:lossy_speculative_decoding_gsm8k} where each point within a given group corresponds to a different value of $\varepsilon$. As the GSM8K experiment shows, the power of interpolation can be limited: even using a permissive lenience of $\varepsilon=10^{-5}$, $f_{\rm lin}$ still results in high performance but high latency, while $f_{\rm sq}$ traces a similar but slightly extended trade-off curve. Although $f_{\rm exp}$ makes interpolation possible, it yields a worse quality-latency trade-off. 
Interestingly, it is possible to significantly reduce latency while preserving most of the quality on GSM8K, possibly because many tokens are inconsequential for final performance, and a variety of proposals can be safely accepted with minimal effect on generation 
quality. See Appendix~\ref{app:lossy_decoding} for a comparison between non-distilled and distilled draft models, where we show that a distilled draft model enables a much better quality vs. latency trade-off.

\begin{figure}
\centering

\vspace{-5mm}
\begin{subfigure}[b]{0.47\textwidth}
 \includegraphics[width=1.0\linewidth]{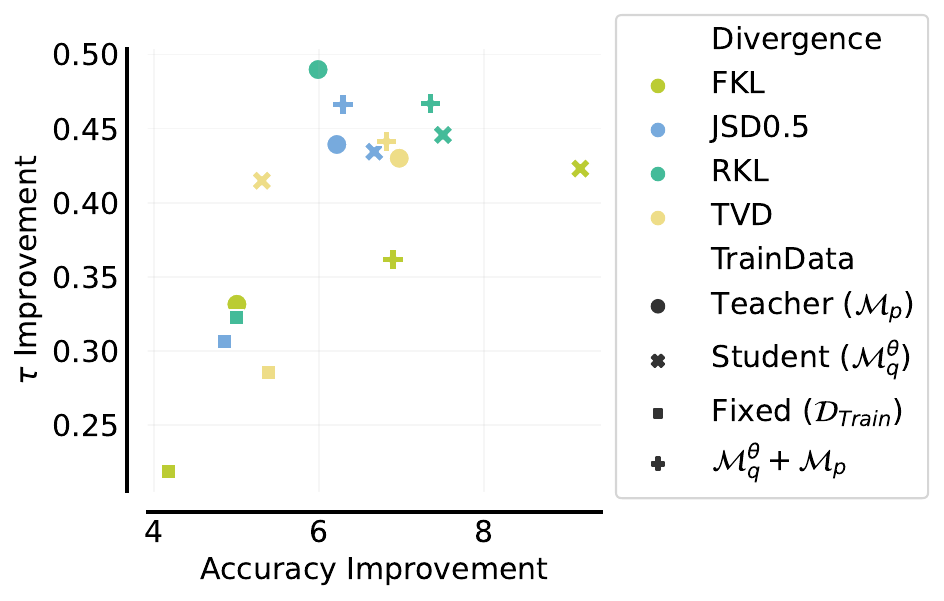}
 \caption{quality vs. SD compatibility~(GSM8K)}\label{fig:compatibility}
\end{subfigure}\hfill
\begin{subfigure}[b]{0.52\textwidth}
 \includegraphics[width=1.0\linewidth]{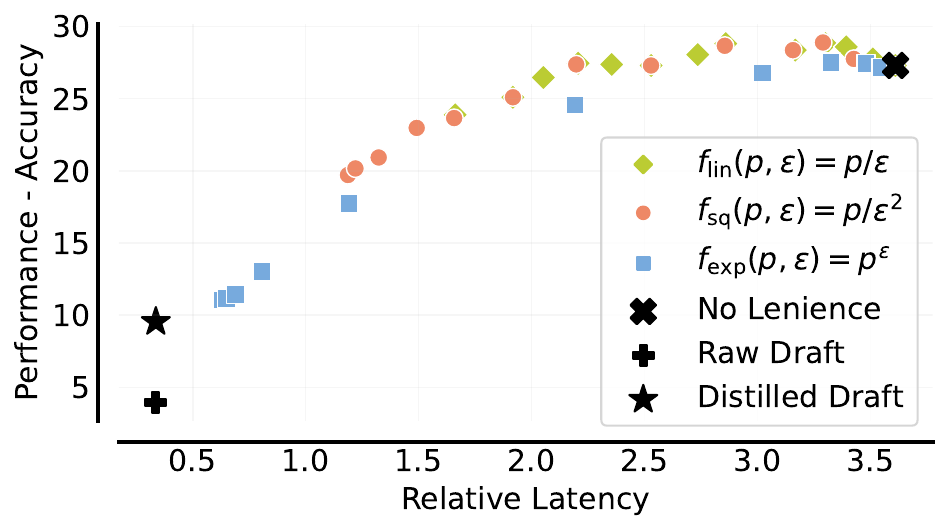}
 \caption{DistillSpec with lossy SD (GSM8K)}\label{fig:lossy_speculative_decoding_gsm8k}
\end{subfigure}
\caption{\small (a) The improvement on speculative decoding and downstream task performance are only weakly correlated. A high quality distilled model does not imply it can be an effective draft model in speculative decoding. (b) We employ leniency as a precise control mechanism to achieve the desired quality-latency profile.
}\label{fig:lossy_speculative_decoding}
\end{figure}

\paragraph{DistillSpec meets model garden} 
In many practical scenarios, we have access to multiple models of different sizes---a model garden---to design the inference pipeline. We emulate this setting by focusing on the five model sizes in the T5 model family: 
T5-Small (77M), T5-Base (250M), T5-Large (800M), T5-XL (3B), and T5-XXL (11B). We study the quality-latency trade-off curves obtained from applying KD and SD as follows: 
1) \textbf{raw:}~deploying supervised fine-tuned (SFT) T5 models;
2) \textbf{distilled:}~applying KD by distilling smaller models from the larger T5 models;
3) \textbf{speculative:}~applying SD using T5 models; and
4) \textbf{DistillSpec:}~applying KD on T5 models and using SD with distilled models as target and draft models.

Figure~\ref{fig:model_cascade} shows that SD effectively shifts the trade-off curve leftward, especially with larger model sizes. However, its efficacy diminishes with smaller model sizes when the computation time between the draft and target models is closely matched. In contrast, distillation, which optimizes the model for downstream task performance, appears to offer a superior trade-off between quality and latency, particularly for smaller models. Conversely, a reverse trend is observed for larger model sizes when evaluating the model with temperature sampling. Figure~\ref{fig:model_cascade_gsm8k_nongreedy} indicates a substantial gap between the distilled model and the larger teacher model, while the SD-based method is able to significantly reduce latency. This suggests that when stringent performance and decoding strategy constraints are in place, SD remains a valuable approach. 
Our DistillSpec method, which combines the benefits of distillation and SD, consistently achieves the best trade-off between quality and latency, yielding an impressive reduction in latency while maintaining nearly identical performance. Specifically, DistillSpec reduces relative latency from 17.3 to 2.7 and from 15.0 to 1.4 on XSum and GSM8K, respectively, representing speedup improvements of 6.4$\times$ and 10.7$\times$. In contrast, the Rouge2 score only marginally decreases, from 23.1 to 23.0 on XSum, while the model accuracy on GSM8K actually improves from 33.1 to 34.8.

\begin{figure}
\centering
\begin{subfigure}[b]{0.48\textwidth}
 \includegraphics[width=1.0\linewidth]{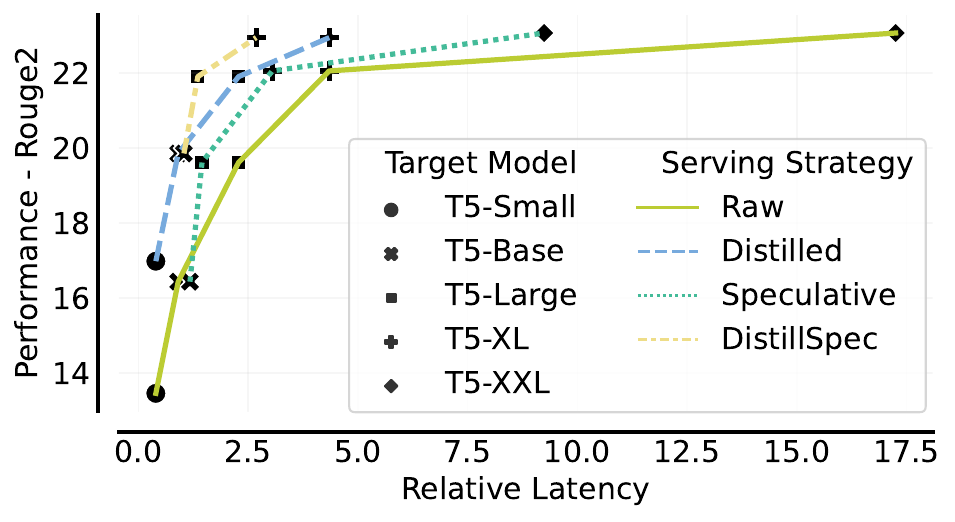}
\end{subfigure}
\begin{subfigure}[b]{0.48\textwidth}
\includegraphics[width=1.0\linewidth]{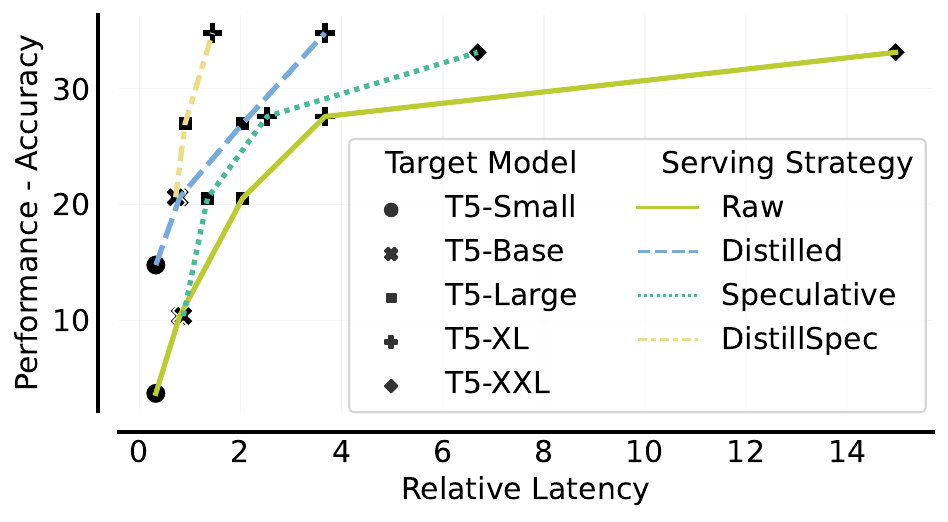}
\end{subfigure}
\vspace{-2mm}
\caption{\small \revise{DistillSpec excels in both quality and latency, offering a remarkable 6.4x and 10.7x latency reduction on XSum (left) and GSM8K (right), while maintaining nearly identical performance.}
}\label{fig:model_cascade}
\vspace{-5mm}
\end{figure}

\section{Conclusion}
In this paper, we evaluate the efficacy of white-box knowledge distillation (KD) in enhancing speculative decoding~(SD) through improved alignment between target and draft models. A thorough analysis is conducted to understand the impact of training data construction and divergence functions on KD performance. We underscore the significance of utilizing model-generated data and argue that employing the draft model's on-policy data during KD is a cost-efficient way of improving model alignment. Additionally, we assess the trade-off between quality and latency within the scope of lenience and availability of multiple models of varying quality and size (model garden), concluding that KD enables a superior trade-off compared to standard SD. The optimal strategy involves first applying KD for downstream task performance, followed by SD using a distilled draft model, resulting in a six to ten-fold decrease in latency with negligible performance loss. Our study contributes novel insights into white-box KD algorithms for LLMs and provides guidance for striking an effective balance between quality and latency using KD and SD.

\subsubsection*{Acknowledgments}
We would like to extend a special thank you to Neha Gupta, Wittawat Jitkrittum, Nino Veillard, Yaniv Leviathan, Matan Kalman, Danny Vainstein, Natan Potikha, Ananda Theertha Suresh, Laz Karydas, Aishwarya PS, Pranav Nair, Praneeth Netrapalli, Nikunj Saunshi, Ziteng Sun, Keiran Paster, Olivier Bachem, Aleksandra Faust for insightful discussion and valuable feedback. 

\bibliography{iclr2024_conference}
\bibliographystyle{iclr2024_conference}

\newpage
\appendix

\addcontentsline{toc}{section}{Appendix} %
\part{Appendix} %
\parttoc %

\counterwithin{figure}{section}
\counterwithin{table}{section}
\counterwithin{equation}{section}
\counterwithin{algorithm}{section}

\clearpage
\section{Method}

\subsection{Description of divergence functions}\label{app:divergence_fn}
Below are some common divergence functions used in distillation, given two discrete probability distribution $P(\mathcal{C})$ and $Q(\mathcal{C})$. 

\paragraph{Kullback-Leibler~(KL) divergence}
\begin{equation}
    \KL(P\|Q) = \sum_{c \in \mathcal{C}} P(c) \log \frac{P(c)}{Q(c)}
\end{equation}

We note that the KL divergence is not symmetric, i.e., $\KL(P\|Q) \neq \KL(Q\|P)$. As such, we refer to $\KL(P\|Q)$ as the \textbf{forward KL}~(FKL) and $\RKL(P||Q) := \KL(Q\|P)$ as the \textbf{reverse KL}~(RKL) between $P$ and $Q$. Minimization of the FKL under an empirical data distribution corresponds to maximum likelihood estimation (\textbf{MLE}), which is the typical loss objective used in supervised learning given a fixed dataset.

\paragraph{Jensen–Shannon~(JS) divergence}
\begin{equation}
    \JS (P\|Q) = \frac{1}{2}(\KL(P\|M) +\KL(Q\|M),\quad \text{where}\ M = \frac{1}{2}(P+Q)
\end{equation}

\paragraph{Generalized Jensen-Shannon divergence ($\JSD$)}
\begin{equation}
    \JSD (P\|Q) = \beta  \KL\Big(P \Big \| \beta P + (1- \beta)Q \Big) + (1 - \beta) \KL\Big(Q \Big \| \beta P + (1 - \beta) Q\Big)
\end{equation}

Interestingly, it can be proved that $ \lim_{\beta\to 0} \frac{\divergence_{{JSD}(\beta)}(P\|Q)}{\beta} = \KL(P\|Q)$~\citep{huszar2015not}.
As such, $\JSD$ behaves similarly to forward KL for small values of $\beta$. Similarly, $\JSD$ has similar behavior to reverse KL for $\beta$ close to 1, since $\JSD (P\|Q) = D_\mathrm{JSD[1-\beta]}(Q\|P)$.

\paragraph{Total variation distance~(TVD)}
\begin{equation}
    \TVD(P\|Q) = \sum_{c \in \mathcal{C}}{|\frac{P(c) - Q(c)}{2}|}
\end{equation}

\clearpage
\newpage
\subsection{Justification of using on-policy data} \label{app:on_policy}

In this section, we prove \Cref{thm:acc} which motivates our use of on-policy data.
We follow the notations in~\Cref{sec:background,sec:distill-spec}.
In addition, we write $\epsilon(x) :=
\expectation{y \sim \draftseqdistgiven}{\TVD(\targetM \| \trainabledraftM)(y \given x)}$
for the distillation loss of a single input $x$.

To ease the analysis, let decompose $\alpha(x)$ and $\epsilon(x)$ into sums of contributions from each token.
\begin{lemma} \label{lm:AE}
    For all $x$, $\alpha(x) = 1 - \frac{1}{\expectedtargetoutlen} \sum_{t=1}^{T} A_t$ and $\epsilon(x) \ge \frac{1}{T} \sum_{t=1}^{T} E_t$, where:
    \begin{align}
        A_t(x) &:=
        \expectation{y \sim \targetseqdistgiven}
        {\onec{t \le \seqlen{y}} \TVD(\targetdistgiven, \draftdistgiven)} \label{eq:A_t}, \\
        E_t(x) &:=
        \expectation{y \sim \draftseqdistgiven}
        {\onec{t \le \seqlen{y}} \TVD(\targetdistgiven, \draftdistgiven)}. \label{eq:E_t}
    \end{align}
\end{lemma}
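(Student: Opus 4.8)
The plan is to prove the two claims separately, both by unpacking the per-sequence divergence from Eq.~\ref{eq:distill_loss} and then exchanging a finite sum with an expectation. Recall that for $\TVD$, Eq.~\ref{eq:distill_loss} gives the per-sequence loss $\TVD(\targetM \| \trainabledraftM)(y \given x) = \frac{1}{\seqlen{y}} \sum_{t=1}^{\seqlen{y}} \TVD(\targetdistgiven, \draftdistgiven)$, and that every sampled sequence obeys the hard cap $\seqlen{y} \le T$ imposed by the maximum decoding length. These two facts are essentially the only ingredients I need.

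For the identity $\acceptancerate(x) = 1 - \frac{1}{\expectedtargetoutlen}\sum_{t=1}^{T} A_t(x)$, I would start from Eq.~\ref{eq:seq-acceptance-rate-is-tvd}, whose numerator is $\expectation{y \sim \targetseqdistgiven}{\sum_{t=1}^{\seqlen{y}} \TVD(\targetdistgiven, \draftdistgiven)}$. Using $\seqlen{y} \le T$ I would rewrite the inner sum over $1,\dots,\seqlen{y}$ as $\sum_{t=1}^{T} \onec{t \le \seqlen{y}} \TVD(\targetdistgiven, \draftdistgiven)$, so that all summands are indexed by the fixed range $1,\dots,T$. Since this is a finite sum, linearity of expectation lets me pull it outside the expectation, turning the numerator into $\sum_{t=1}^{T} A_t(x)$ by the definition in Eq.~\ref{eq:A_t}. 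Dividing by $\expectedtargetoutlen$ then yields the claimed equality.

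For the lower bound $\epsilon(x) \ge \frac{1}{T}\sum_{t=1}^{T} E_t(x)$, I would expand $\epsilon(x) = \expectation{y \sim \draftseqdistgiven}{\frac{1}{\seqlen{y}}\sum_{t=1}^{\seqlen{y}} \TVD(\targetdistgiven, \draftdistgiven)}$ and replace the per-sequence normalizer $\frac{1}{\seqlen{y}}$ by its lower bound $\frac{1}{T}$, again using $\seqlen{y} \le T$; since each $\TVD$ term is nonnegative, this only decreases the integrand, so the inequality is valid. The same indicator rewrite and sum–expectation swap as above then identify the remaining expectation as $\sum_{t=1}^{T} E_t(x)$ via Eq.~\ref{eq:E_t}, giving the inequality.

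The computation is essentially bookkeeping, so the only point that merits care—and the source of the asymmetry between an equality in the first claim and an inequality in the second—is the placement of the length normalizer. The sequence-level acceptance rate normalizes a ratio of expectations by the expected target length $\expectedtargetoutlen$ (Eq.~\ref{eq:seq-acceptance-rate-is-tvd}), so no per-sequence factor intervenes and the sum–expectation swap is exact. By contrast, $\epsilon(x)$ is an expectation of a per-sequence average in which each $y$ is normalized by its own length $\seqlen{y}$; bounding $\frac{1}{\seqlen{y}}$ below by $\frac{1}{T}$ is exactly what loses the equality and produces the prefactor $\frac{1}{T}$. I would flag that this one-sided bound is precisely what is needed downstream, where $\sum_{t=1}^{T} E_t(x)$ is compared termwise against $\sum_{t=1}^{T} A_t(x)$ in the proof of Theorem~\ref{thm:acc}.
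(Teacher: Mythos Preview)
Your proposal is correct and follows essentially the same argument as the paper: both parts rewrite $\sum_{t=1}^{\seqlen{y}}(\cdot)$ as $\sum_{t=1}^{T}\onec{t\le\seqlen{y}}(\cdot)$ and swap the finite sum with the expectation, with the only difference between the two claims being the replacement of $1/\seqlen{y}$ by $1/T$ in the second part. Your explanation of why one claim is an equality and the other an inequality matches the paper's reasoning exactly.
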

\begin{proof}
    By~\Eqref{eq:seq-acceptance-rate-is-tvd}, we have:
    \begin{align*}
        \acceptancerate(x)
        = 1 - \frac{\expectation{y \sim \targetseqdistgiven}
        {\sum_{t=1}^{\seqlen{y}} \TVD(\targetdistgiven, \draftdistgiven)}}
        {\expectedtargetoutlen}.
    \end{align*}
    We can rewrite $\sum_{t=1}^{\seqlen{y}} \TVD(\targetdist, \draftdist)
    = \sum_{t=1}^{T} \onec{t \le \seqlen{y}} \TVD(\targetdist, \draftdist)$
    and swap the order between the summation and the expectation to obtain:
    \begin{align*}
        \acceptancerate(x)
        = 1 - \frac{ \sum_{t=1}^{T} \expectation{y \sim \targetseqdistgiven}
        {\onec{t \le \seqlen{y}} \TVD(\targetdistgiven, \draftdistgiven)}}
        {\expectedtargetoutlen},
    \end{align*}
    which proves~\Eqref{eq:A_t}.

    Similarly, by definition of $\epsilon(x)$ and $\TVD$ we have:
    \begin{align*}
        \epsilon(x) &= \expectation{y \sim \draftseqdistgiven}{\frac{1}{\seqlen{y}} \sum_{t=1}^{\seqlen{y}} \TVD(\targetdistgiven, \draftdistgiven)} \\
        &\ge \expectation{y \sim \draftseqdistgiven}{\frac{1}{T} \sum_{t=1}^{\seqlen{y}} \TVD(\targetdistgiven, \draftdistgiven)}.
    \end{align*}
    Again, we can rewrite $\sum_{t=1}^{\seqlen{y}} \TVD(\targetdist, \draftdist)
    = \sum_{t=1}^{T} \onec{t \le \seqlen{y}} \TVD(\targetdist, \draftdist)$
    and then swap the order between the summation and the expectation to obtain:
    \begin{align*}
        \epsilon(x) \ge \frac{1}{T} \sum_{t=1}^{T} \expectation{y \sim \draftseqdistgiven}{\onec{t \le \seqlen{y}} \TVD(\targetdistgiven, \draftdistgiven)},
    \end{align*}
    which proves~\Eqref{eq:E_t}.
\end{proof}

\Cref{lm:AE} motivates us to study $A_t(x)$ and $E_t(x)$ instead.
Below we rewrite them in variational forms that will be used later.
For this, we introduce some defintiions.
\begin{definition}
For any sequence $z \in \{\mathtt{P}, \mathtt{Q}\}^\tau$ that consists only of letters $\mathtt{P}$ and $\mathtt{Q}$,
we define $\M(x, y, z)$ as
the distribution of sequences sampled as follows:
\begin{enumerate}
    \item Initialize a sequence of tokens as $y$;
    \item If there are $t - 1$ tokens, sample the $t$-th token from $\targetM$ if $z_t = \mathtt{P}$, 
    and from $\draftM$ otherwise;
    \item Repeat until an end-of-sequence token is sampled, or the sequence length has reached $\tau$.
\end{enumerate}
\end{definition}
We use the shorthand $\mathtt{P}^{k}$ and $\mathtt{Q}^{k}$ to represent the sequence of $k$ consecutive letters of $\mathtt{P}$ and $\mathtt{Q}$ respectively.
Let $\vocab^k$ denote the set of all possible strings of length $k$,
and $\delta: \vocab^t \to [-1/2, 1/2]$ be a generic function that maps a sequence of $t$ tokens to a real number in $[-1, 1]$.
We abuse the notation and assign $\delta(y) = 0$ for all $y \notin \vocab^t$.
\begin{lemma}
    For all $x$ and all $1 \le t \le T$:
    \begin{align}
        A_t(x) &= \sup_{\delta: \vocab^t \to [-1/2, 1/2]}
        \left\{
            \expectation{y \sim \M(x, \varnothing, \mathtt{P}^t)}{\delta(y)}
            - \expectation{y \sim \M(x, \varnothing, \mathtt{P}^{t-1}\mathtt{Q})}{\delta(y)}
        \right\}, \label{eq:A_t_sup} \\
        E_t(x) &= \sup_{\delta: \vocab^t \to [-1/2, 1/2]}
        \left\{
            \expectation{y \sim \M(x, \varnothing, \mathtt{Q}^{t-1}\mathtt{P})}{\delta(y)}
            - \expectation{y \sim \M(x, \varnothing, \mathtt{Q}^t)}{\delta(y)}
        \right\}. \label{eq:E_t_sup}
    \end{align}
\end{lemma}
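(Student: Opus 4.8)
The plan is to reduce both identities to the variational (dual) characterization of total variation distance, $\TVD(P\|Q)=\sup_{f:\mathcal{C}\to[-1/2,1/2]}\{\mathbb{E}_P[f]-\mathbb{E}_Q[f]\}$, applied \emph{one token at a time}. First I would establish a structural identity that expresses the difference of the two $\M$-expectations as a single per-prefix difference of next-token laws, and only then invoke the dual formula prefix by prefix.

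For the structural step, note that for every $1\le t\le T$ the two laws $\M(x,\varnothing,\mathtt{P}^t)$ and $\M(x,\varnothing,\mathtt{P}^{t-1}\mathtt{Q})$ are naturally coupled: they sample tokens $1,\dots,t-1$ in exactly the same way (autoregressively from $\targetM$) and differ only at the $t$-th token, drawn from $\targetM$ in the first case and from $\draftM$ in the second. Since $\delta$ vanishes off $\vocab^t$ and every realization of $\M(x,\varnothing,\mathtt{P}^t)$ has length at most $t$, only length-$t$ realizations matter, and a realization reaches length $t$ precisely when none of its first $t-1$ tokens is an end-of-sequence token --- which, under $y\sim\targetseqdistgiven$, is exactly the event $\{t\le\seqlen{y}\}$. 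Carrying out this accounting yields, for every $\delta:\vocab^t\to[-1/2,1/2]$,
\begin{equation*}
  \expectation{y\sim \M(x,\varnothing,\mathtt{P}^t)}{\delta(y)}-\expectation{y\sim \M(x,\varnothing,\mathtt{P}^{t-1}\mathtt{Q})}{\delta(y)}
  =\expectation{y\sim\targetseqdistgiven}{\onec{t\le\seqlen{y}}\left(\sum_{v\in\vocab}\big(p(v\given\context)-q(v\given\context)\big)\,\delta(y_{<t}v)\right)},
\end{equation*}
where $y_{<t}v$ denotes the length-$t$ string obtained by appending the token $v$ to the prefix $y_{<t}$.

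Given this identity, both inequalities in \eqref{eq:A_t_sup} are immediate. For ``$\le$'': for each fixed prefix $y_{<t}$ the map $v\mapsto\delta(y_{<t}v)$ is a $[-1/2,1/2]$-valued test function on $\vocab$, so the inner sum is at most $\TVD\big(p(\cdot\given\context),q(\cdot\given\context)\big)=\TVD(\targetdistgiven,\draftdistgiven)$ by the dual formula; taking the expectation over $y$ and comparing with \eqref{eq:A_t} shows that the left-hand side is $\le A_t(x)$ for every $\delta$. For ``$\ge$'': take the explicit maximizer $\delta^\star(y_{<t}v)=\tfrac12\sign\big(p(v\given\context)-q(v\given\context)\big)$, for which the inner sum equals $\TVD(\targetdistgiven,\draftdistgiven)$ at every prefix, so the difference of $\M$-expectations equals $A_t(x)$ exactly. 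Identity \eqref{eq:E_t_sup} follows by running the same argument with the prefix sampler $\targetM$ replaced throughout by $\draftM$ (so that $\mathtt{P}^{t-1}$ becomes $\mathtt{Q}^{t-1}$ while the final coordinate remains $\mathtt{P}$ versus $\mathtt{Q}$); this turns the outer expectation into one over $y\sim\draftseqdistgiven$ and reproduces \eqref{eq:E_t}, and the order of the two $\M$-terms in the statement is exactly what makes the per-prefix difference the expectation under $\targetM$ minus that under $\draftM$, hence at most $\TVD$.

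I expect the only real obstacle to be the displayed identity, i.e.\ translating the ``stop at an end-of-sequence token or at length $t$'' dynamics of $\M$, together with the indicator $\onec{t\le\seqlen{y}}$, into a clean telescoping to a single per-prefix difference of next-token distributions. Once that bookkeeping is settled, the rest is just the one-line dual formula for $\TVD$ and linearity of expectation.
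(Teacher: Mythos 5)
Your proof is correct and follows essentially the same route as the paper: both reduce the claim to the one-token variational (dual) characterization of $\TVD$ applied prefix by prefix, together with the identification of the law of the length-$(t-1)$ prefix under the indicator $\onec{t\le\seqlen{y}}$ with the corresponding $\M$-law. The only stylistic difference is that you argue the two inequalities separately and exhibit the explicit per-prefix maximizer $\tfrac12\sign\big(p(\cdot\given\context)-q(\cdot\given\context)\big)$, which makes rigorous the expectation--supremum interchange that the paper performs implicitly via a supremum over functions on $\vocab^t$.
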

\begin{proof}
    For a fixed pair $x$ and $y$,
    we rewrite the total variance distance between $\targetdistgiven$ and $\draftdistgiven$
    as the following variational form:
    \begin{align*}
        &\TVD(\targetdistgiven, \draftdistgiven) \\
        &\qquad = \sup_{\tilde{\delta}: \vocab \to [-1/2, 1/2]}
        \left\{
            \underbrace{\expectation{y_t \sim \targetdistgiven}{\tilde{\delta}(y_t)}
            - \expectation{y_t \sim \draftdistgiven}{\tilde{\delta}(y_t)}}_{=: \Delta(x, y_{<t}, \tilde{\delta})}
        \right\}.
    \end{align*}
    After taking the expectations:
    \begin{align*}
        A_t(x) &= \expectation{y \sim \targetseqdistgiven}
        {\TVD(\targetdistgiven, \draftdistgiven)} \\
        &= 
        \expectation{y \sim \targetseqdistgiven}{
        \sup_{\tilde{\delta}: \vocab \to [-1/2, 1/2]}
        \left\{\Delta(x, y_{<t}, \tilde{\delta})\right\}} \\
        &=
        \expectation{y \sim \M(x, \varnothing, \mathtt{P}^{t-1})}{
            \sup_{\tilde{\delta}: \vocab \to [-1/2, 1/2]}
            \left\{
            \onec{\seqlen{y} = t-1}    
            \Delta(x, y, \tilde{\delta})\right\}} \\
        &=
            \sup_{\delta: \vocab^{t} \to [-1/2, 1/2]} \left\{
            \expectation{y \sim \M(x, \varnothing, \mathtt{P}^{t-1})}{
                \onec{\seqlen{y} = t-1}    
                \Delta(x, y, \delta(y, \,\cdot\,))
            }\right\},
    \end{align*}
    where the third equality uses the observation that
    for any function $f$,
    $\expectation{y \sim \targetseqdistgiven}{f(x, y_{<t})}$
    can be replaced with
    $\expectation{y \sim \M(x, \varnothing, \mathtt{P}^{t-1})}{\onec{\seqlen{y} = t-1} f(x, y)}$,
    and the last equality swaps the order between the expectation and the supremum.
    Finally, we move the expectations in $\Delta$ to merge with the expectation outside and obtain:
    \begin{align*}
        A_t(x) &=
            \sup_{\delta: \vocab^{t} \to [-1/2, 1/2]} \left\{
            \expectation{y \sim \M(x, \varnothing, \mathtt{P}^{t})}{
                \onec{\seqlen{y} = t}
                \delta(y)
            }
            -
            \expectation{y \sim \M(x, \varnothing, \mathtt{P}^{t-1}\mathtt{Q})}{
                \onec{\seqlen{y} = t}
                \delta(y)
            }
            \right\} \\
            &= \sup_{\delta: \vocab^{t} \to [-1/2, 1/2]} \left\{
            \expectation{y \sim \M(x, \varnothing, \mathtt{P}^{t})}{
                \delta(y)
            }
            -
            \expectation{y \sim \M(x, \varnothing, \mathtt{P}^{t-1}\mathtt{Q})}{
                \delta(y)
            }
            \right\},
    \end{align*}
    where the last step follows from our abuse of notation that $\delta(y) = 0$ for all $y \notin \vocab^t$.
    This proves \Eqref{eq:A_t_sup}, and \Eqref{eq:E_t_sup} can be proved similarly.
\end{proof}

Based on our variational forms of $A_t(x)$ and $E_t(x)$, we obtain the following lemma for bounding $A_t(x)$ in terms of $E_t(x)$.
\begin{lemma}
    For all $x$ and $1 \le t \le T$:
    \begin{align}
        A_t(x) \le 2 \sum_{k=1}^{t-1} E_k(x) + E_t(x). \label{eq:bound_A_t_with_E_t}
    \end{align}
\end{lemma}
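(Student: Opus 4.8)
The plan is to prove the inequality by a token-by-token hybrid argument on the variational formulas for $A_t(x)$ and $E_t(x)$ just established, combined with a data-processing (contraction) inequality for the map ``append one freshly sampled token''. First I would fix the input $x$ and, for any letter string $z\in\{\mathtt P,\mathtt Q\}^{s}$, abbreviate by $\mu_z$ the restriction of $\M(x,\varnothing,z)$ to sequences of length exactly $s$ --- a sub-probability measure on $\vocab^{s}$. Because every test function $\delta$ in the variational formulas vanishes off $\vocab^{s}$, these formulas collapse to $L^1$ distances between such restricted measures: $A_t(x)=\tfrac12\|\mu_{\mathtt P^{t}}-\mu_{\mathtt P^{t-1}\mathtt Q}\|_1$ and $E_t(x)=\tfrac12\|\mu_{\mathtt Q^{t-1}\mathtt P}-\mu_{\mathtt Q^{t}}\|_1$, using $\sup_{\|\delta\|_\infty\le 1/2}\langle\delta,\sigma\rangle=\tfrac12\|\sigma\|_1$ for any signed measure $\sigma$.

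Next I would isolate the structural ingredient: generating one more token from a fixed model (conditioned on not yet having stopped) is a sub-stochastic kernel, hence an $L^1$-contraction. Writing $K_{\mathtt P}$ (resp.\ $K_{\mathtt Q}$) for the kernel that appends a token drawn from $\targetM$ (resp.\ $\draftM$) to a prefix, one has $\mu_{z\mathtt P}=\mu_z K_{\mathtt P}$, $\mu_{z\mathtt Q}=\mu_z K_{\mathtt Q}$, and $\|\nu K_{\mathtt P}\|_1\le\|\nu\|_1$ for every signed measure $\nu$, and similarly for $K_{\mathtt Q}$ and for iterated appends. From this I would derive the ``prefix-drift'' estimate
\[
  D_{s}:=\tfrac12\|\mu_{\mathtt P^{s}}-\mu_{\mathtt Q^{s}}\|_1\ \le\ \sum_{k=1}^{s}E_k(x)
\]
by telescoping along the hybrids $w_j:=\mathtt Q^{j}\mathtt P^{s-j}$ for $j=0,\dots,s$: since $w_j$ and $w_{j+1}$ agree except at position $j{+}1$, contracting away the common suffix of $\targetM$-draws gives $\tfrac12\|\mu_{w_j}-\mu_{w_{j+1}}\|_1\le\tfrac12\|\mu_{\mathtt Q^{j}\mathtt P}-\mu_{\mathtt Q^{j+1}}\|_1=E_{j+1}(x)$, and summing the triangle inequality over $j$ yields the bound.

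Finally I would assemble $A_t(x)$ via a triangle inequality routed through the two intermediate states $\mathtt Q^{t-1}\mathtt P$ and $\mathtt Q^{t}$:
\[
  \mu_{\mathtt P^{t}}-\mu_{\mathtt P^{t-1}\mathtt Q}=(\mu_{\mathtt P^{t}}-\mu_{\mathtt Q^{t-1}\mathtt P})+(\mu_{\mathtt Q^{t-1}\mathtt P}-\mu_{\mathtt Q^{t}})+(\mu_{\mathtt Q^{t}}-\mu_{\mathtt P^{t-1}\mathtt Q}).
\]
The middle term has $\tfrac12\|\cdot\|_1=E_t(x)$ by the formula above. The two outer terms each compare a $\targetM$-prefix with a $\draftM$-prefix followed by the \emph{same} last-token rule ($K_{\mathtt P}$ for the first, $K_{\mathtt Q}$ for the third), so the contraction bounds each by $\tfrac12\|\mu_{\mathtt P^{t-1}}-\mu_{\mathtt Q^{t-1}}\|_1=D_{t-1}\le\sum_{k=1}^{t-1}E_k(x)$. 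Adding the three pieces gives $A_t(x)\le 2D_{t-1}+E_t(x)\le 2\sum_{k=1}^{t-1}E_k(x)+E_t(x)$, which is \Eqref{eq:bound_A_t_with_E_t}.

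The main obstacle I expect is the bookkeeping forced by early termination: since sequences may end before length $t$, the measures $\mu_z$ are genuinely sub-probabilities and $K_{\mathtt P},K_{\mathtt Q}$ are only sub-stochastic, so I must verify that the $L^1$-contraction still holds (it does --- a sub-stochastic kernel has operator norm $\le 1$ on $L^1$) and that the $\onec{\seqlen{y}=s}$ indicators appearing in the variational lemmas match exactly the restriction defining $\mu_z$. A secondary point to check carefully is that each single-token swap in the telescoping really collapses to $E_{j+1}(x)$ rather than something larger, i.e.\ that the trailing block of $\targetM$-draws after the swapped position can be contracted away without loss; once these two points are nailed down, the rest is routine triangle-inequality arithmetic.
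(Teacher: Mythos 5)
Your proof is correct and is essentially the paper's argument in dual form: the cost $E_k(x)$ of swapping a single $\mathtt{P}$ for $\mathtt{Q}$ — which the paper extracts from the variational form by plugging in the conditional-expectation test function $\delta'(y)=\onec{\seqlen{y}=k}\,\expectation{y' \sim \M(x, y, \cdot)}{\delta(y')}$ — is exactly your $L^1$-contraction of the sub-stochastic ``append one token'' kernel, and your telescoping over the hybrids $\mathtt{Q}^{j}\mathtt{P}^{s-j}$ is the same telescoping the paper performs. The only cosmetic difference is the final assembly (your three-term triangle inequality $A_t(x)\le 2D_{t-1}+E_t(x)$ versus the paper's subtraction of an upper bound on $\expectation{y \sim \M(x, \varnothing, \mathtt{P}^t)}{\delta(y)}$ and a lower bound on $\expectation{y \sim \M(x, \varnothing, \mathtt{P}^{t-1}\mathtt{Q})}{\delta(y)}$, both relative to $\mathtt{Q}^t$), and the early-termination bookkeeping you flag is indeed handled by your observation that sub-stochastic kernels have $L^1$ operator norm at most one.
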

\begin{proof}
    It suffices to find an upper bound on $\expectation{y \sim \M(x, \varnothing, \mathtt{P}^t)}{\delta(y)}$
    and a lower bound on $\expectation{y \sim \M(x, \varnothing, \mathtt{P}^{t-1}\mathtt{Q})}{\delta(y)}$
    for all $\delta: \vocab^t \to [-1/2, 1/2]$.

    For all $1 \le k \le t$, we can replace
    the first $\mathtt{P}$
    in $\expectation{y \sim \M(x, \varnothing, \mathtt{Q}^{k-1}\mathtt{P}^{t-k+1})}{\delta(y)}$
    with $\mathtt{Q}$ by only introducing an error of $E_{k+1}(x)$:
    \begin{align*}
        \expectation{y \sim \M(x, \varnothing, \mathtt{Q}^{k-1}\mathtt{P}^{t-k+1})}{\delta(y)}
        &= \expectation
            {y \sim \M(x, \varnothing, \mathtt{Q}^{k-1}\mathtt{P})}
            {\onec{\seqlen{y} = k} \expectation
                {y' \sim \M(x, y, \mathtt{P}^{t-k})}
                {\delta(y')}} \\
        &\le \expectation
            {y \sim \M(x, \varnothing, \mathtt{Q}^{k})}
            {\onec{\seqlen{y} = k}\expectation
                {y' \sim \M(x, y, \mathtt{P}^{t-k})}
                {\delta(y')}} + E_{k}(x) \\
        &= \expectation
            {y \sim \M(x, \varnothing, \mathtt{Q}^{k}\mathtt{P}^{t-k})}
            {\delta(y)} + E_{k}(x),
    \end{align*}
    where the second inequality holds because we can
    define a function $\delta': \vocab^k \to [-1/2, 1/2], such that \delta(y) = \onec{\seqlen{y} = k} \expectation
            {y' \sim \M(x, y, \mathtt{P}^{t-k})}
                {\delta(y')}$ and then apply \Eqref{eq:E_t_sup}.

    Now taking a telescoping sum over $1 \le k \le t$, we obtain:
    \begin{align}
        \expectation{y \sim \M(x, \varnothing, \mathtt{P}^t)}{\delta(y)}
        \le 
        \expectation{y \sim \M(x, \varnothing, \mathtt{Q}^t)}{\delta(y)} + \sum_{k=1}^{t} E_{k}(x).
        \label{eq:tv-upper}
    \end{align}
    Similarly, 
    for all $1 \le k \le t - 1$, we can replace
    the first $\mathtt{P}$
    in $\expectation{y \sim \M(x, \varnothing, \mathtt{Q}^{k-1}\mathtt{P}^{t-k}\mathtt{Q})}{\delta(y)}$
    with $\mathtt{Q}$ and only introduce an error of $E_{k}(x)$:
    \begin{align*}
        \expectation{y \sim \M(x, \varnothing, \mathtt{Q}^{k-1}\mathtt{P}^{t-k}\mathtt{Q})}{\delta(y)}
        &= \expectation
            {y \sim \M(x, \varnothing, \mathtt{Q}^{k-1}\mathtt{P})}
            {\onec{\seqlen{y} = k}\expectation
                {y' \sim \M(x, y, \mathtt{P}^{t-k-1}\mathtt{Q})}
                {\delta(y')}} \\
        &= -\expectation
            {y \sim \M(x, \varnothing, \mathtt{Q}^{k-1}\mathtt{P})}
            {-\onec{\seqlen{y} = k}\expectation
                {y' \sim \M(x, y, \mathtt{P}^{t-k-1}\mathtt{Q})}
                {\delta(y')}} \\
        &\ge -\left(\expectation
            {y \sim \M(x, \varnothing, \mathtt{Q}^{k})}
            {-\onec{\seqlen{y} = k}\expectation
                {y' \sim \M(x, y, \mathtt{P}^{t-k-1}\mathtt{Q})}
                {\delta(y')}} + E_{k}(x)\right) \\
        &= \expectation
            {y \sim \M(x, \varnothing, \mathtt{Q}^{k}\mathtt{P}^{t-k-1}\mathtt{Q})}
            {\delta(y)} - E_{k}(x).
    \end{align*}
    Taking a telescoping sum over $1 \le k \le t-1$ yields:
    \begin{align}
        \expectation{y \sim \M(x, \varnothing, \mathtt{P}^{t-1}\mathtt{Q})}{\delta(y)}
        \ge 
        \expectation{y \sim \M(x, \varnothing, \mathtt{Q}^t)}{\delta(y)} - \sum_{k=1}^{t-1} E_{k}(x).
        \label{eq:tv-lower}
    \end{align}
    Subtracting \Cref{eq:tv-lower} from \Cref{eq:tv-upper}, we have the following holds for all functions $\delta: \String \to [-1/2, 1/2]$:
    \begin{align*}
        \expectation{y \sim \M(x, \varnothing, \mathtt{P}^t)}{\delta(y)}
        - \expectation{y \sim \M(x, \varnothing, \mathtt{P}^{t-1}\mathtt{Q})}{\delta(y)}
        \le \sum_{k=1}^{t} E_{k}(x) +  \sum_{k=1}^{t-1} E_{k}(x) = 2 \sum_{k=1}^{t-1} E_k(x) + E_t(x),
    \end{align*}
    which proves the claim.
\end{proof}

\begin{proof}[Proof of \Cref{thm:acc}]
    Summing \Eqref{eq:bound_A_t_with_E_t}
    over $1 \le t \le T$, we have:
    \begin{align*}
        \sum_{t=1}^{T} A_t(x) \le \sum_{t=1}^{T} (1 + 2(T - t)) E_t(x).
    \end{align*}
    Combining this with \Cref{lm:AE} yields:
    \begin{align*}
        \alpha(x) = 1 - \frac{1}{\expectedtargetoutlen} \sum_{t=1}^{T} A_t
        &\ge 1 - \frac{1}{\expectedtargetoutlen}\sum_{t=1}^{T} (1 + 2(T - t)) E_t(x) \\
        &\ge 1 - \frac{2T}{\expectedtargetoutlen}\sum_{t=1}^{T} E_t(x) \\
        &\ge 1 - \frac{2T^2}{\expectedtargetoutlen} \epsilon(x),
    \end{align*}
    which proves the theorem statement after taking the expectation over $x \sim X$.
\end{proof}

\clearpage
\subsection{DistillSpec algorithms}

\begin{algorithm}[h]\small
    \caption{Speculative decoding step}\label{alg:alg1}
    \begin{algorithmic}
        \STATE {\bfseries Require:} target model $\targetM$, draft model $\draftM$, context $\prefix = \{\context\}$.
        \STATE {\bfseries Hyperparameters:} block size ($\blocksize$),  {\color{CornflowerBlue} lenience Function $f(p, \lenience)$ ($\lenience$ = 1 for lossless and $\lenience$ < 1 for lossy decoding)}.
    \end{algorithmic}
    \begin{algorithmic}[1]
        \FORALL{$i=0$ to $\blocksize-1$}
            \STATE $q_{t + i}(y) \leftarrow \draftM \left( x, y_{<t+i}\right)$, $\quad y_{t+i} \sim q_{t+i}(y)$ \hfill $\triangleright$ Sample $\blocksize$ tokens from $\draftM$ autoregressively.
        \ENDFOR
        \STATE $\left(p_t(y), \ldots, p_{t+\blocksize}(y)\right) \leftarrow \left(\targetM(x, y_{<t}), \ldots, \targetM(x, y_{<t+\blocksize})\right)$ \hfill $ \triangleright$ Run $\targetM$ in parallel.
        \STATE $r_{i} \leftarrow
        {\color{CornflowerBlue} \frac{f(p_{i}(y), \lenience)}{q_{i}(y)}}$, \quad $\forall t \leq i < t + \blocksize$ \hfill $\triangleright$ Compute the rejection thresholds.
        \STATE $u_t \sim  \operatorname{Uniform}(0,1), \ldots, u_{t+\blocksize-1} \sim  \operatorname{Uniform}(0,1)$ \hfill $\triangleright$ Generate $\blocksize$ random values.
        \STATE $n \leftarrow \min \left(\left\{i \mid 0 \leq i < \blocksize, u_{t+i}>r_{t+i}\right\} \cup\{\gamma\}\right)$ \hfill $\triangleright$ Determine the number of accepted tokens $n$.
        \IF{$n < \blocksize$}
        \STATE $y_{t+n} \sim \operatorname{norm}\left(\max \left(0, p_{t+n}(y)-q_{t+n}(y)\right)\right)$ \hfill $\triangleright$ Sample from the adjusted distribution. 
        \ELSE
        \STATE $y_{t+n} \sim p_{t+n}(y)$ \hfill $\triangleright$ Sample from $\targetM$. 
        \ENDIF
    \end{algorithmic}
    \begin{algorithmic}
        \STATE {\bfseries Return} $\{x, y_{<t+n+1}\}$ \hfill $\triangleright$ Append $n$ tokens from $\draftM$ and one token from $\targetM$. 
    \end{algorithmic}
\end{algorithm}

\begin{algorithm}[h]\small
\caption{Knowledge distillation}\label{alg:alg2}
\begin{algorithmic}
  \STATE {\bfseries Require:} target model $\targetM$, draft model $\trainabledraftM$, dataset $(X, Y)$ containing input $x$ and possibly output $y$.
  \STATE {\bfseries Hyperparameters:} fixed data fraction $\lambda_1 \in [0, 1]$, student data fraction $\lambda_2 \in [0, 1]$, divergence function $\divergence$, learning rate $\eta$.
\end{algorithmic}
\begin{algorithmic}[1]
  \STATE $u_1 \sim  \operatorname{Uniform}(0,1), u_2 \sim \operatorname{Uniform}(0,1)$ \hfill $\triangleright$ Generate two random values.
  \IF{$u_1 \leq \lambda_1$}
    \STATE $B = \{(x_b, y_b)\}_{b=1}^{B}$, where $(x_i, y_i) \sim (X, Y)$ \hfill $\triangleright$ Sample inputs and outputs from $(X, Y)$.
  \ELSE
    \STATE $B' = \{(x_b)\}_{b=1}^{B}$, where $x_i \sim X$ \hfill $\triangleright$ Sample a batch of inputs from $X$.
    \IF{$u_2 \leq \lambda_2$}
        \STATE $B = \{(x_b, y_b)\}_{b=1}^{B}$, where $x_i \sim B'$, $y_i \sim \trainabledraftM(\cdot | x_i)$ \hfill $\triangleright$ Sample data from $\draftM$.
    \ELSE
        \STATE $B = \{(x_b, y_b)\}_{b=1}^{B}$, where $x_i \sim B'$, $y_i \sim \targetM(\cdot | x_i)$ \hfill $\triangleright$ Sample data from $\targetM$.
    \ENDIF
  \ENDIF
\end{algorithmic}
\begin{algorithmic}
  \STATE {\bfseries Return} $\theta \gets \theta - \eta \frac{1}{B} \sum_{(x, y) \in B} \nabla_\theta \divergence(\targetM \| \trainabledraftM) (y|x)$ \hfill $\triangleright$ Update $\theta$ to minimize $\divergence(\targetM \| \trainabledraftM)$.
\end{algorithmic}
\vspace{-0.1cm}
\end{algorithm}

\clearpage
\newpage
\section{Implementation Details}\label{app:imp_details}

\subsection{Datasets}
In this section, we present a comprehensive overview of the datasets employed in this study.

\paragraph{XSum~\citep{narayan2018dontgm}} The Extreme Summarization (XSum) dataset serves as an evaluation benchmark for abstractive single-document summarization systems. This dataset comprises 226,711 news articles, sourced from BBC articles spanning the years 2010 to 2017. These articles encompass a wide range of domains, including News, Politics, Sports, Weather, Business, Technology, Science, Health, Family, Education, Entertainment, and Arts. Summarization performance is evaluated using ROUGE scores on the validation dataset split of XSum and primarily emphasizes ROUGE-2, but with similar trends observed for ROUGE-LSum and ROUGE-1. A maximum input length of 1,024 and a maximum output length of 64 are employed for distillation and evaluation.

\paragraph{CNN/DM~\citep{hermann2015teaching}} The CNN/Daily Mail (CNN/DM) dataset is tailored for text summarization. It comprises abstractive summary bullets generated by humans from news stories on CNN and Daily Mail websites, presented in the form of questions with entities hidden. The questions can be answered using relevant passages from the source text. Similar to XSum, ROUGE scores on the validation dataset are reported, primarily emphasizing ROUGE-2 but with similar trends observed for ROUGE-LSum and ROUGE-1. A maximum input length of 2,048 and a maximum output length of 128 are used for distillation and evaluation.

\paragraph{WMT EnDe~\citep{bojar2014W14-33}} The WMT14 EnDe dataset stands as a standard benchmark for machine translation. The task entails translating English text to German while preserving content, semantics and style. Evaluation is based on BLEU scores, which measures the similarity of machine-translated text to high-quality reference translations. A maximum input length of 80 and a maximum output length of 80 are employed for distillation and evaluation, with performance assessed on the original test split.

\paragraph{GSM8K~\citep{cobbe2021training}} The GSM8K dataset comprises 8.5K high-quality, linguistically diverse grade school math word problems crafted by human problem writers. The dataset is divided into 7.5K training problems and 1K test problems, with solutions typically requiring 2 to 8 steps involving elementary calculations using basic arithmetic operations. To enhance reasoning abilities, we explored distillation alongside zero-shot chain-of-thought (CoT), as described in~\citet{agarwal2023gkd}. A maximum input length of 256 and a maximum output length of 320 are used for distillation and evaluation.

\paragraph{LM1B~\cite{chelba2013one}} The One Billion Word dataset (LM1B) is a widely recognized benchmark for language modeling. The training and held-out data are derived from the WMT 2011 News Crawl dataset using Bash shell and Perl scripts. A maximum input length of 128 and a maximum output length of 128 are used for distillation and evaluation.

\subsection{Models}
In accordance with \citet{leviathan2023fast}, we evaluate two model types: 1) GPT-like decoder-only Transformer models trained on the LM1B task~\citep{chelba2013one} using the standard autoregressive objective, where the target and draft models have 234M and 33M parameters, respectively; and 2) standard encoder-decoder T5 v1.1 models~\citep{raffel2020exploring} supervised fine-tuned on four different tasks, with T5-XL (3B) and T5-Small (77M) serving as the target and draft models, respectively. 

The target $\targetM$ in the decoder-only model experiment has hidden dimension 1,024, feed-forward dimension 4,096, 12 layers and 16 attention heads per transformer block, for a total of 234M parameters. The corresponding draft model $\trainabledraftM$ has hidden dimension 512, feed-forward dimension 1,024, 4 layers and 4 attention heads per transformer block, for a total of 33M parameters. All models utilize the T5 tokenizer with 32k tokens. As for the T5 base checkpoints, we start from \href{https://github.com/google-research/text-to-text-transfer-transformer/blob/main/released\_checkpoints.md#lm-adapted-t511lm100k}{LM-adapted T5v1.1 models}. These LM-adapted models are initialized from T5v1.1 and trained for an additional 100K steps on the LM objective discussed in the T5 paper~\citep{raffel2020exploring}.  These checkpoints are available at \url{https://console.cloud.google.com/storage/browser/t5-data/pretrained_models}. 

In our encoder-decoder distillation experiments, both the student and teacher models are initialized from models supervised fine-tuned on the original training dataset. This process for each dataset is detailed as follows:

\begin{itemize}
    \item \textbf{XSum}: for T5-Small, -Base, -Large, -XL and -XXL models, we warm start distillation from LM-Adapted T5v1.1 models supervised fine-tuned for 100K, 50K, 30k, 20K and 8K steps, respectively. 
    \item \textbf{CNN/DM}: for T5-Small,-Base,-Large, -XL and -XXL models, we warm start distillation from LM-Adapted T5v1.1 models supervised fine-tuned for 200K, 80K, 20k, 20k and 4K steps, respectively. 
    \item \textbf{WMT}: for T5-Small,-Base,-Large, -XL and -XXL models, we warm start distillation from LM-Adapted T5v1.1 models supervised fine-tuned for 250K, 250K, 110k , 50K and 10K steps, respectively.
    \item \textbf{GSM8K}: all models are supervised fine-tuned starting from \href{https://github.com/google-research/t5x/blob/main/docs/models.md#flan-t5-checkpoints}{FLAN-T5} models on the PaLM-540 generated CoT dataset for 10K steps. 
\end{itemize}

\subsection{Distillation}

\paragraph{Training Data for KD} We study the five KD algorithms outlined in Table~\ref{tab:algo}. For SeqKD~\citep{kim2016sequence} and $f$-Distill~\citep{wen2023f}, we opt for an online teacher data generation approach instead of a conventional fixed offline teacher-generated dataset. This approach, while computationally expensive, yields a more diverse dataset. For GKD, we exclude the static ground truth data and solely rely on the data generated by the online student model. All data generated by the teacher or the student is based on temperature sampling with a temperature of 1.0 (see Appendix~\ref{app:sampling_temperature} for an ablation study on sampling temperature).

\paragraph{Training Details} We employ an Adafactor optimizer~\citep{shazeer2018adafactor} to train the draft student model ($\trainabledraftM$) in all our experiments, following the guidelines outlined in Algorithm~\ref{alg:alg2}. In the context of our knowledge distillation (KD) loss function, defined in Eq.~\ref{eq:distill_loss}, we maintain the temperatures for both the target model, denoted as $T_p$, and the draft model, denoted as $T_q$, at a constant value of 1.0. We ought to emphasize the significance of maintaining this uniform temperature setting as it plays a pivotal role in speculative decoding, by ensuring a consistent and coherent semantic interpretation for both $\targetM$ and $\trainabledraftM$. A summary of the hyperparameters used in our knowledge distillation process can be found in Table~\ref{tab:hparams_distill}.

\begin{table}[h]
    \centering
    \caption{Hyperparameters for distillation experiments.}
    \begin{tabularx}{0.75\textwidth}{ll}
        \toprule
        \textbf{hyperparameter} & \textbf{value} \\
        \midrule
        training steps & 300,000 \\
        batch size & 32 \\
        dropout & 0.0 \\
        learning rate~(LR) & 0.0003\\
        LR warmup steps & 5,000 \\
        LR cooldown (begin, end) & (150,000, 300,000) \\
        warmup schedule & linear (from 0 to LR)\\
        cooldown schedule & cosine decay (from LR to 0.1LR)\\
        \bottomrule
    \end{tabularx}
    \label{tab:hparams_distill}
\end{table}

\clearpage
\newpage
\subsection{Evaluation}
To obtain scores for each task (specifically, ROUGE-2 for XSum and CNN/DM, BLEU for WMT, and accuracy for GSM8K), we employ the evaluation methodology in~\citet{agarwal2023gkd}. We assess models on all examples in the test or validation sets and report their average performance. To assess empirical speculative decoding performance, i.e., the empirical acceptance rate and empirical block efficiency, we conduct evaluations on all instances in the test or validation sets and report the average value of these metrics. 

To measure the actual latency, we follow ~\citet{leviathan2023fast} and execute both our target model and draft models are on the same TPUv4 device without utilizing model parallelism. We randomly sample 500 examples from either the test or validation set, and measure the wall-clock decoding time on a batch size of 1. This measurement procedure is repeated three times, and the mean performance is reported. We have observed minimal variance across different random seeds in our results.

\clearpage
\newpage
\section{Additional Results}\label{app:add_res}

\subsection{Enhancing speculative decoding through knowledge distillation}\label{app:distill_benchmark}

\begin{table}[h]
  \caption{DistillSpec improves speculative decoding performance across various datasets and block sizes, for both greedy decoding and temperature sampling. \revise{BBH-AVG contains the evaluation of the distilled draft model from GSM8K on all BIG-Bench Hard (BBH) tasks~\citep{suzgun2022challenging}; we report the average results over the 23 BBH tasks. See Figure~\ref{app:gsm8k_on_bbsh_xl_0shot_nongreedy} and~\ref{app:gsm8k_on_bbsh_xl_0shot_greedy} for a detailed breakdown of performance on individual BBH tasks.}}
  \label{tab:dd_gain_across_dataset_app}
  \small
  \centering
    \scalebox{0.97}{
    \begin{tabular}{cccccccccc}
    \toprule
    \multirow{2}{*}[-4pt]{Dataset} & \multirow{2}{*}[-4pt]{Model} & \multicolumn{2}{c}{Sampling} & \multicolumn{2}{c}{w/o Distillation} & \multicolumn{4}{c}{with Distillation}\\
    \cmidrule(l){3-4} \cmidrule(l){5-6} \cmidrule(l){7-10}
    & & temp & $\gamma$ & $\tau$ & SPEED & $\tau$ & SPEED & $\Delta$ & Method \\
    \midrule
    \multirow{6}{*}{XSum} & \multirow{2}{*}{Target $M_p$} & \multirow{3}{*}{T=0} & 3 & 2.31 & 1.44$\times$ & 2.62 & 1.58$\times$ & +0.14$\times$ & $f$-Distill \\
    &&& 5 & 2.57 & 1.43$\times$ & 3.08 & 1.62$\times$ & +0.19$\times$ & $f$-Distill \\
    & T5-XL (3B) & & 7 & 2.68 & 1.36$\times$ & 3.31 & 1.57$\times$ & +0.21$\times$ & $f$-Distill \\
    \cmidrule(l){3-10}
    & \multirow{2}{*}{Draft $M_q$} & \multirow{3}{*}{T=1} & 3 &2.19 & 1.40$\times$ & 2.58 & 1.57$\times$ & +0.17$\times$ & $f$-Distill \\
    &&& 5 & 2.39 & 1.37$\times$ & 3.01 & 1.61$\times$ & +0.25$\times$ & $f$-Distill \\
    & T5-Small (77M) && 7 & 2.47 & 1.28$\times$ & 3.21 & 1.55$\times$ & +0.27$\times$ & $f$-Distill \\
    \midrule
    \multirow{6}{*}{CNNDM} & \multirow{2}{*}{Target $M_p$} & \multirow{3}{*}{T=0} & 3 & 2.83 & 1.89$\times$ & 3.19 & 2.11$\times$ & +0.22$\times$ & $f$-Distill \\
    &&& 5 & 3.46 & 2.07$\times$ & 4.13 & 2.42$\times$ & +0.35$\times$ & GKD  \\
    & T5-XL (3B) & & 7 & 3.85 & 2.07$\times$ & 4.83 & 2.53$\times$ & +0.46$\times$ & $f$-Distill \\
    \cmidrule(l){3-10}
    & \multirow{2}{*}{Draft $M_q$} & \multirow{3}{*}{T=1} & 3 & 2.49 & 1.71$\times$ & 2.87 & 1.93$\times$ & +0.23$\times$ & $f$-Distill \\
    &&& 5 & 2.89 & 1.77$\times$ & 3.52 & 2.12$\times$ & +0.35$\times$ & $f$-Distill  \\
    & T5-Small (77M) && 7 & 3.08 & 1.71$\times$ & 3.92 & 2.12$\times$ & +0.41$\times$ & $f$-Distill \\
    \midrule
    \multirow{6}{*}{WMT} & \multirow{2}{*}{Target $M_p$} & \multirow{3}{*}{T=0} & 3 & 3.22 & 2.08$\times$ & 3.30 & 2.13$\times$ & +0.05$\times$ & $f$-Distill   \\
    &&& 5 & 4.17 & 2.33$\times$ & 4.32 & 2.41$\times$ & +0.08$\times$ & $f$-Distill \\
    & T5-XL (3B) & & 7 & 4.83 & 2.36$\times$ & 5.06 & 2.46$\times$ & +0.10$\times$ & $f$-Distill \\
    \cmidrule(l){3-10}
    & \multirow{2}{*}{Draft $M_q$} & \multirow{3}{*}{T=1} & 3 & 2.73 & 1.77$\times$ & 2.88 & 1.83$\times$ & +0.07$\times$ & GKD \\
    &&& 5 & 3.29 & 1.83$\times$ & 3.48 & 1.93$\times$ & +0.10$\times$ & GKD \\
    & T5-Small (77M) && 7 & 3.54 & 1.72$\times$ & 3.85 & 1.84$\times$ & +0.12$\times$ & $f$-Distill \\
    \midrule
    \multirow{6}{*}{GSM8K} & \multirow{2}{*}{Target $M_p$} & \multirow{3}{*}{T=0} & 3 & 2.60 & 1.51$\times$ & 2.96 & 1.69$\times$ & +0.18$\times$ & GKD \\
    &&& 5 & 3.06 & 1.42$\times$ & 3.68 & 1.65$\times$ & +0.22$\times$ & GKD \\
    & T5-XL (3B) & & 7 & 3.27 & 1.36$\times$ & 4.14 & 1.60$\times$ & +0.24$\times$ & GKD  \\
    \cmidrule(l){3-10}
    & \multirow{2}{*}{Draft $M_q$} & \multirow{3}{*}{T=1} & 3 & 2.58 & 1.48$\times$ & 2.84 & 1.64$\times$ & +0.16$\times$ & $f$-Distill \\
    &&& 5 & 3.03 & 1.39$\times$ & 3.45 & 1.58$\times$ & +0.19$\times$ & $f$-Distill \\
    & T5-Small (77M) && 7 & 3.23 & 1.33$\times$ & 3.84 & 1.53$\times$ & +0.20$\times$ & $f$-Distill \\
    \midrule
    \multirow{6}{*}{LM1B} & \multirow{2}{*}{Target $M_p$} & \multirow{3}{*}{T=0} & 3 & 2.96 & 3.66$\times$ & 3.13 & 3.97$\times$ & +0.31$\times$ & $f$-Distill  \\
    &&& 5 & 3.69 & 3.35$\times$ & 3.92 & 3.51$\times$ & +0.16$\times$ & $f$-Distill \\
    & GPT-Like (234M) & & 7 & 4.15 & 2.52$\times$ & 4.55 & 2.72$\times$ & +0.20$\times$ & $f$-Distill \\
    \cmidrule(l){3-10}
    & \multirow{2}{*}{Draft $M_q$} & \multirow{3}{*}{T=1} & 3 & 2.51 & 2.34$\times$ & 2.69 & 2.45$\times$ & +0.11$\times$ & $f$-Distill \\
    &&& 5 & 2.90 & 2.79$\times$ & 3.20 & 3.02$\times$ & +0.23$\times$ & $f$-Distill \\
    & GPT-Like (33M) && 7 & 3.10 & 1.98$\times$ & 3.51 & 2.18$\times$ & +0.20$\times$ & $f$-Distill \\
    \midrule
    \multirow{6}{*}{BBH-AVG} & \multirow{2}{*}{Target $M_p$} & \multirow{3}{*}{T=0} & 3 & 2.55 & 1.90$\times$ & 2.68 & 2.06$\times$ & +0.16$\times$ & $f$-Distill \\
    &&& 5 & 2.98 & 1.92$\times$ & 3.19 & 2.16$\times$ & +0.24$\times$ & $f$-Distill \\
    & T5-XL (3B) & & 7 & 3.20 & 1.93$\times$ & 3.49 & 2.21$\times$ & +0.28$\times$ & $f$-Distill   \\
    \cmidrule(l){3-10}
    & \multirow{2}{*}{Draft $M_q$} & \multirow{3}{*}{T=1} & 3 & 2.45 & 1.79$\times$ & 2.57 & 1.93$\times$ & +0.14$\times$ & $f$-Distill \\
    &&& 5 & 2.81 & 1.80$\times$ & 3.03 & 1.98$\times$ & +0.18$\times$ & $f$-Distill \\
    & T5-Small (77M) && 7 & 3.01 & 1.78$\times$ & 3.28 & 2.02$\times$ & +0.23$\times$ & $f$-Distill \\
    \bottomrule
    \end{tabular}
    }
\end{table}

\begin{figure}[h]
\centering
\begin{subfigure}[b]{0.99\textwidth}
\includegraphics[width=1.0\linewidth]{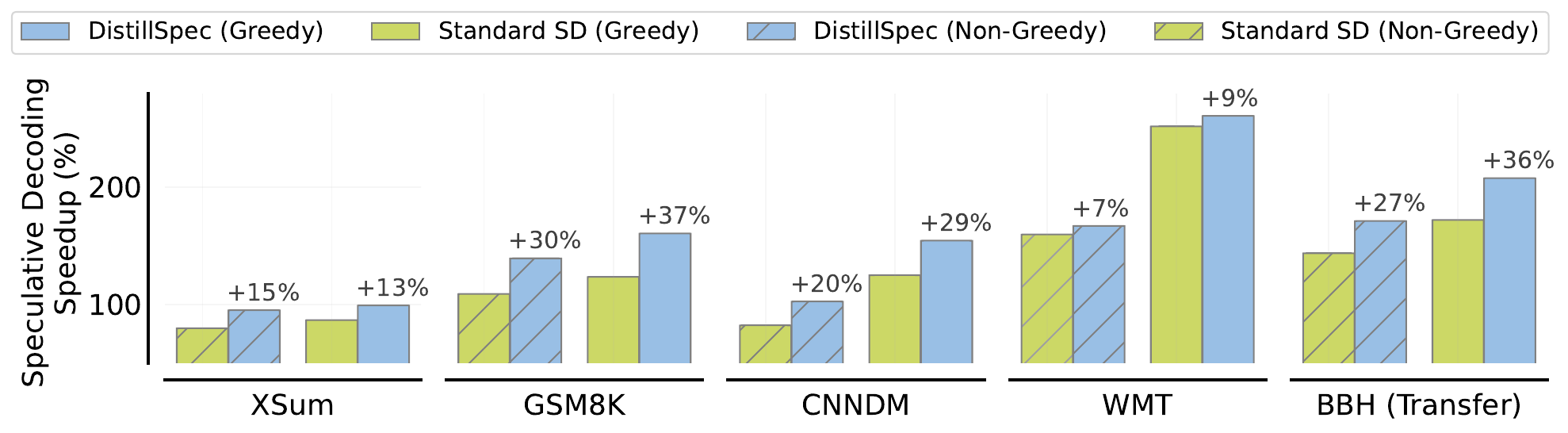}
\end{subfigure}
\caption{The distilled draft T5-Small model, derived from a T5-XL teacher model, is capable of generalizing to a larger target model (T5-XXL), resulting in a substantial acceleration in various scenarios. \revise{BBH-AVG contains the evaluation of the distilled draft model from GSM8K on all BIG-Bench Hard (BBH) tasks~\citep{suzgun2022challenging}; we report the average results over the 23 BBH tasks. See Figure~\ref{app:gsm8k_on_bbsh_xxl_0shot_nongreedy} and~\ref{app:gsm8k_on_bbsh_xxl_0shot_greedy} for a detailed breakdown of performance on individual BBH tasks.} }\label{app:main_bar_plot_xxl}
\end{figure}

\begin{figure}[h]
\centering
\begin{subfigure}[b]{0.99\textwidth}
\includegraphics[width=0.9\linewidth]{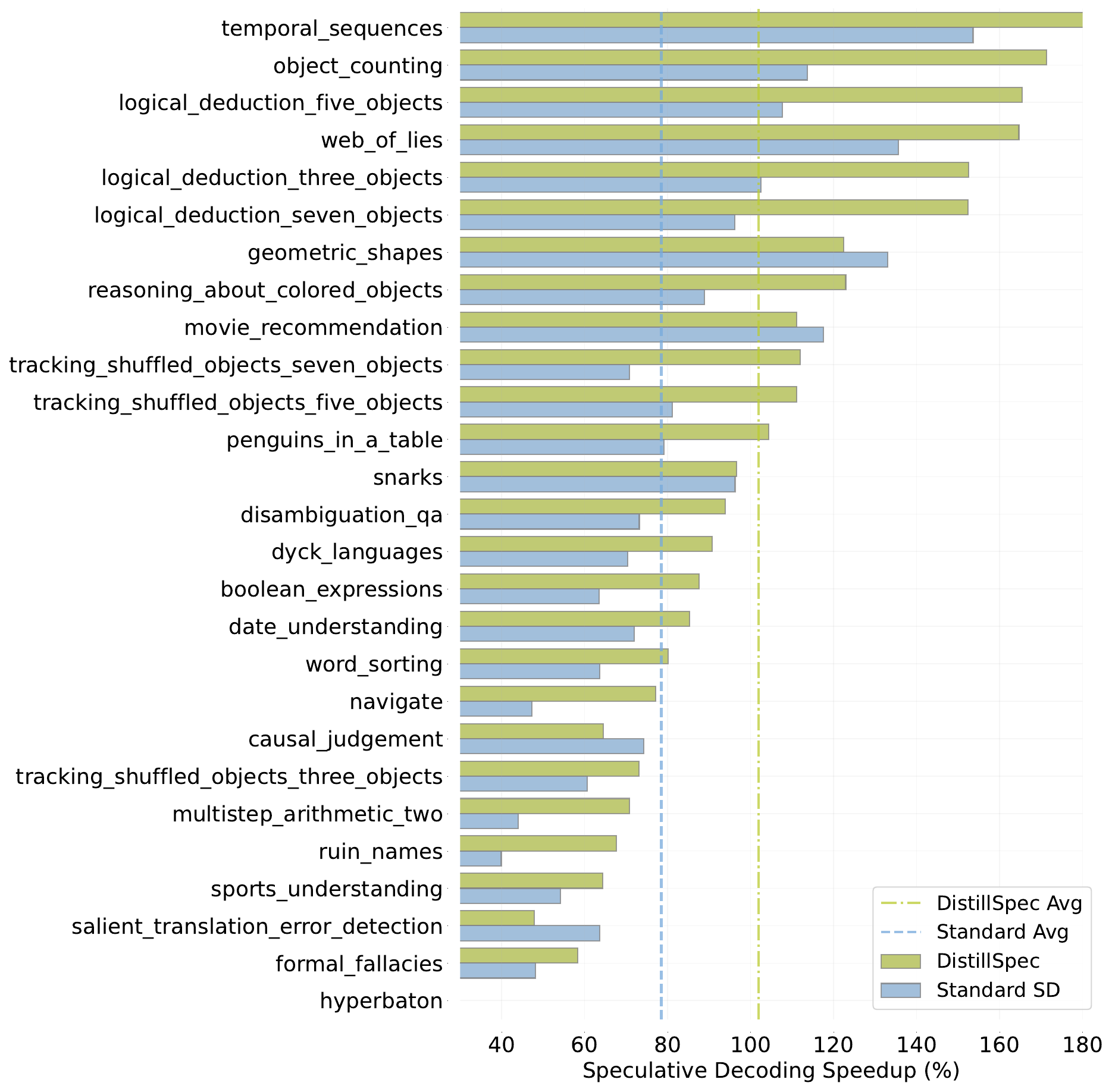}\hfill
\end{subfigure}
\caption{\revise{Assessing DistillSpec's model transferability on the BIG-Bench Hard  suite: zero-shot CoT reasoning with non-greedy decoding. This study examines a T5-Small draft model, initially trained on the GSM8K dataset, across 23 varied tasks using T5-XL as the target model. DistillSpec can deliver significant speculative decoding speedups on a broad spectrum of tasks.}}\label{app:gsm8k_on_bbsh_xl_0shot_nongreedy}
\end{figure}
\begin{figure}[h]
\centering
\begin{subfigure}[b]{0.99\textwidth}
\includegraphics[width=0.9\linewidth]{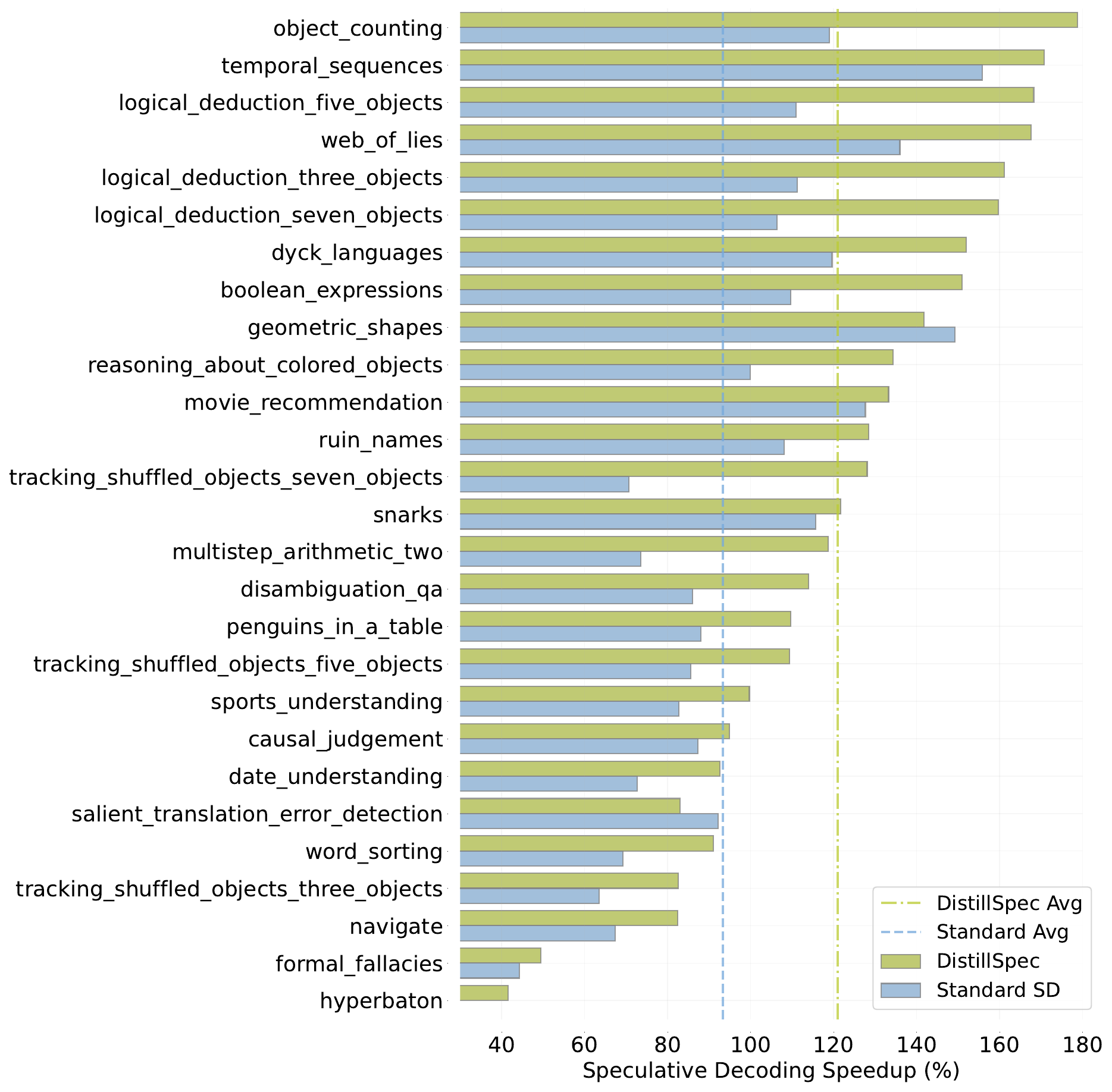}\hfill
\end{subfigure}
\caption{\revise{Assessing DistillSpec's model transferability on the BIG-Bench Hard  suite: zero-shot CoT reasoning with greedy decoding. This study examines a T5-Small draft model, initially trained on the GSM8K dataset, across 23 varied tasks using T5-XL as the target model. DistillSpec can deliver significant speculative decoding speedups on a broad spectrum of tasks.}}\label{app:gsm8k_on_bbsh_xl_0shot_greedy}
\end{figure}

\begin{figure}[h]
\centering
\begin{subfigure}[b]{0.99\textwidth}
\includegraphics[width=0.9\linewidth]{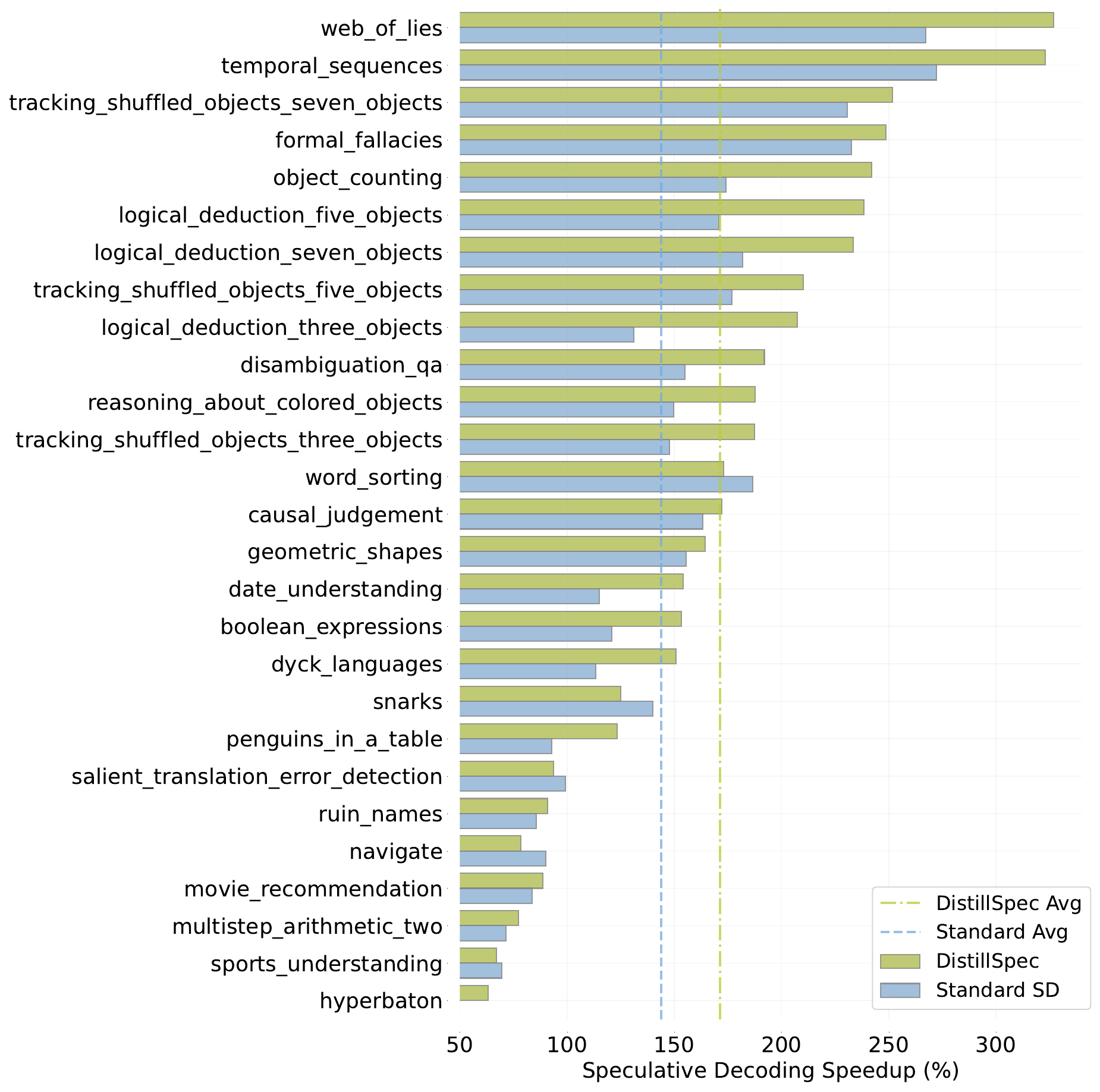}\hfill
\end{subfigure}
\caption{\revise{Assessing DistillSpec's model transferability on the BIG-Bench Hard  suite: zero-shot CoT reasoning with non-greedy decoding. This study examines a T5-Small draft model, initially trained on the GSM8K dataset, across 23 varied tasks using T5-XXL as the target model. DistillSpec can deliver significant speculative decoding speedups on a broad spectrum of tasks.}}\label{app:gsm8k_on_bbsh_xxl_0shot_nongreedy}
\end{figure}
\begin{figure}[h]
\centering
\begin{subfigure}[b]{0.99\textwidth}
\includegraphics[width=0.9\linewidth]{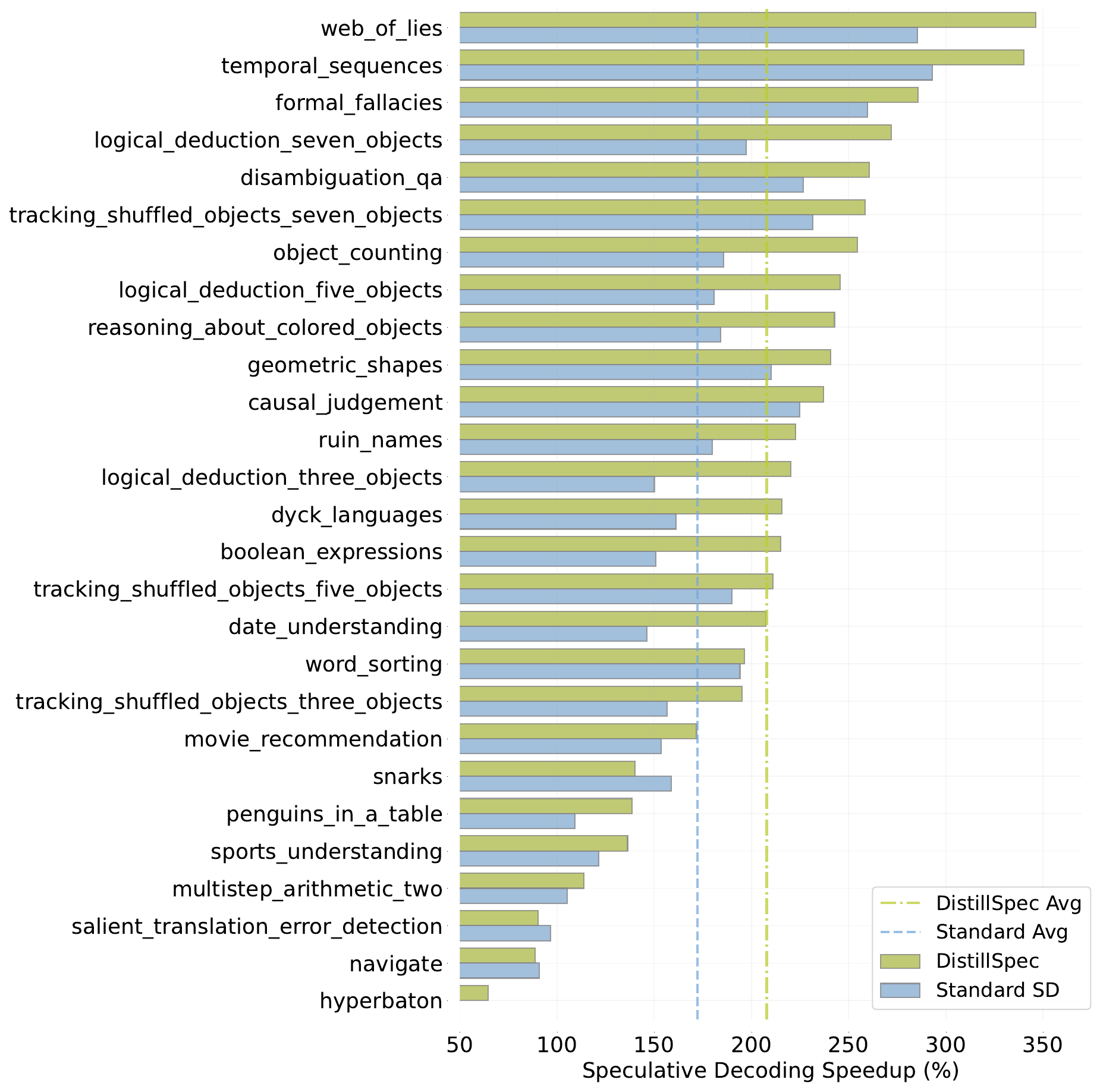}\hfill
\end{subfigure}
\caption{\revise{Assessing DistillSpec's model transferability on the BIG-Bench Hard  suite: zero-shot CoT reasoning with greedy decoding. This study examines a T5-Small draft model, initially trained on the GSM8K dataset, across 23 varied tasks using T5-XXL as the target model. DistillSpec can deliver significant speculative decoding speedups on a broad spectrum of tasks.}}\label{app:gsm8k_on_bbsh_xxl_0shot_greedy}
\end{figure}

\clearpage
\newpage
\subsubsection{Empirical block efficiency improvement}\label{app:barplot_empirical_tau}

\begin{figure}[h]
\centering
\begin{subfigure}[b]{0.99\textwidth}
\includegraphics[width=1.0\linewidth]{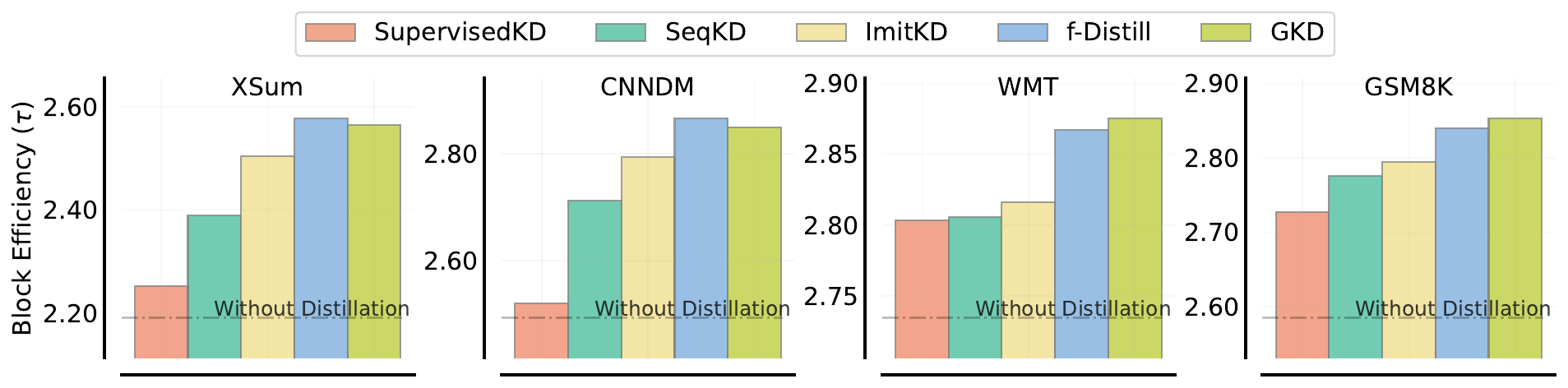}
\end{subfigure}
\caption{Empirical block efficiency improvement of DistillSpec for non-greedy sampling ($T=1$) and block size $\gamma=3$. The draft model is trained using one of the distillation methods listed in Table \ref{tab:algo} of Section \ref{sec:distill-spec}. The dashed line indicates the block efficiency of speculative decoding using a non-distilled draft model. DistillSpec outperforms standard speculative decoding across all of the distillation methods being considered, with f-Distill and GKD yielding the highest gains.}
\end{figure}

\begin{figure}[h]
\centering
\begin{subfigure}[b]{0.99\textwidth}
\includegraphics[width=1.0\linewidth]{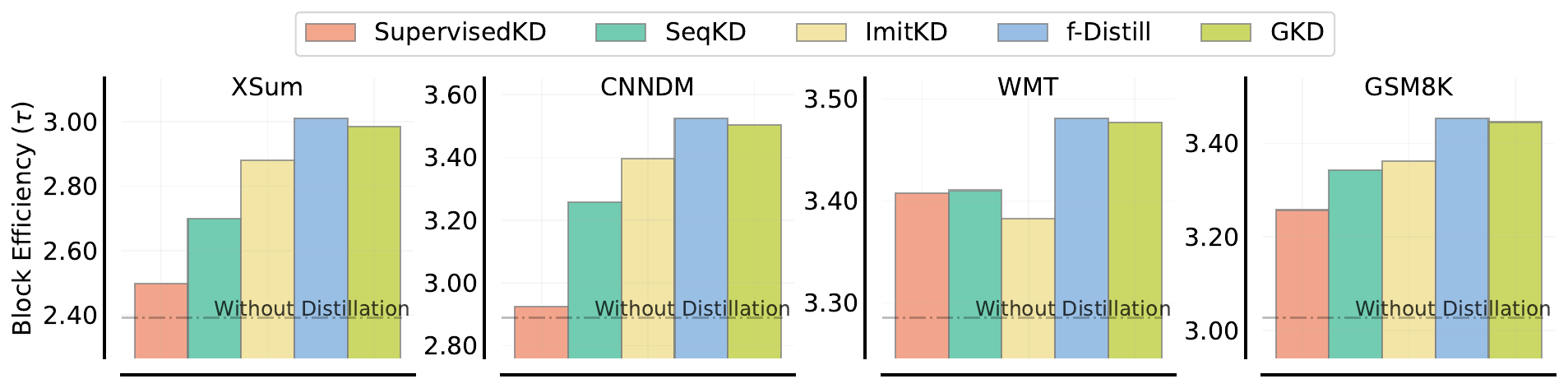}
\end{subfigure}
\caption{Empirical block efficiency improvement of DistillSpec for non-greedy sampling ($T=1$) and block size $\gamma=5$. The draft model is trained using one of the distillation methods listed in Table \ref{tab:algo} of Section \ref{sec:distill-spec}. The dashed line indicates the block efficiency of speculative decoding using a non-distilled draft model. DistillSpec outperforms standard speculative decoding across all of the distillation methods being considered, with f-Distill and GKD yielding the highest gains.}
\end{figure}

\begin{figure}[h]
\centering
\begin{subfigure}[b]{0.99\textwidth}
\includegraphics[width=1.0\linewidth]{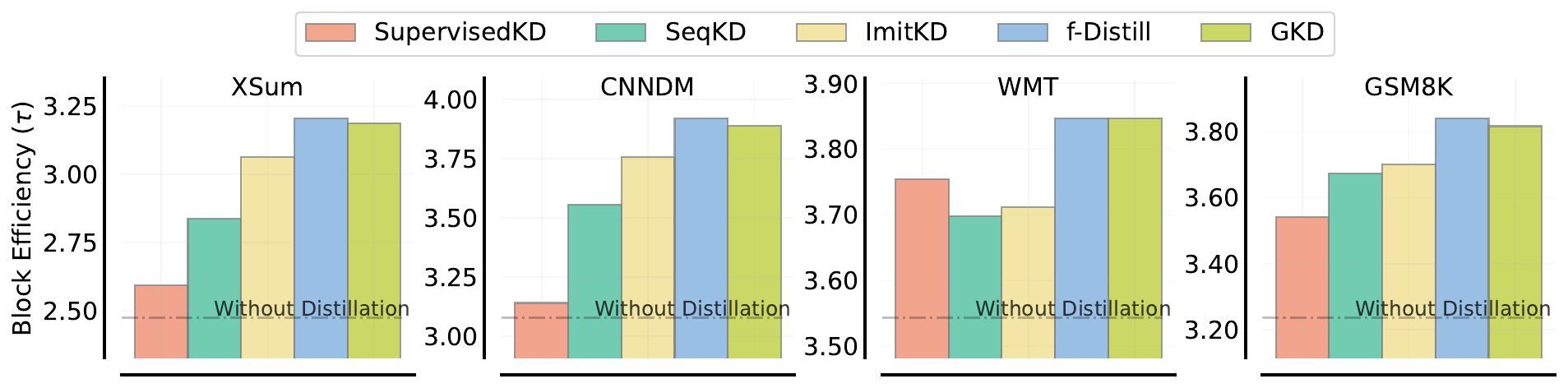}
\end{subfigure}
\caption{Empirical block efficiency improvement of DistillSpec for non-greedy sampling ($T=1$) and block size $\gamma=7$. The draft model is trained using one of the distillation methods listed in Table \ref{tab:algo} of Section \ref{sec:distill-spec}. The dashed line indicates the block efficiency of speculative decoding using a non-distilled draft model. DistillSpec outperforms standard speculative decoding across all of the distillation methods being considered, with f-Distill and GKD yielding the highest gains.}

\end{figure}

\newpage

\begin{figure}[h]
\centering
\begin{subfigure}[b]{0.99\textwidth}
\includegraphics[width=1.0\linewidth]{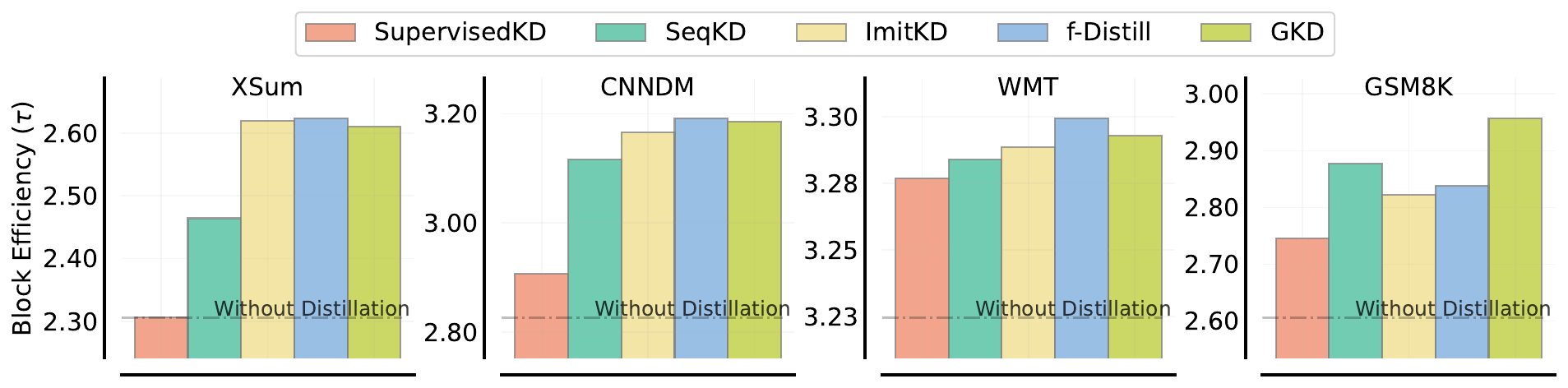}
\end{subfigure}
\caption{Empirical block efficiency improvement of DistillSpec for greedy sampling ($T=0$) and block size $\gamma=3$. The draft model is trained using one of the distillation methods listed in Table \ref{tab:algo} of Section \ref{sec:distill-spec}. The dashed line indicates the block efficiency of speculative decoding using a non-distilled draft model. DistillSpec outperforms standard speculative decoding across all of the distillation methods being considered, with GKD weakly outperforming the other methods on average.}
\end{figure}

\begin{figure}[h]
\centering
\begin{subfigure}[b]{0.99\textwidth}
\includegraphics[width=1.0\linewidth]{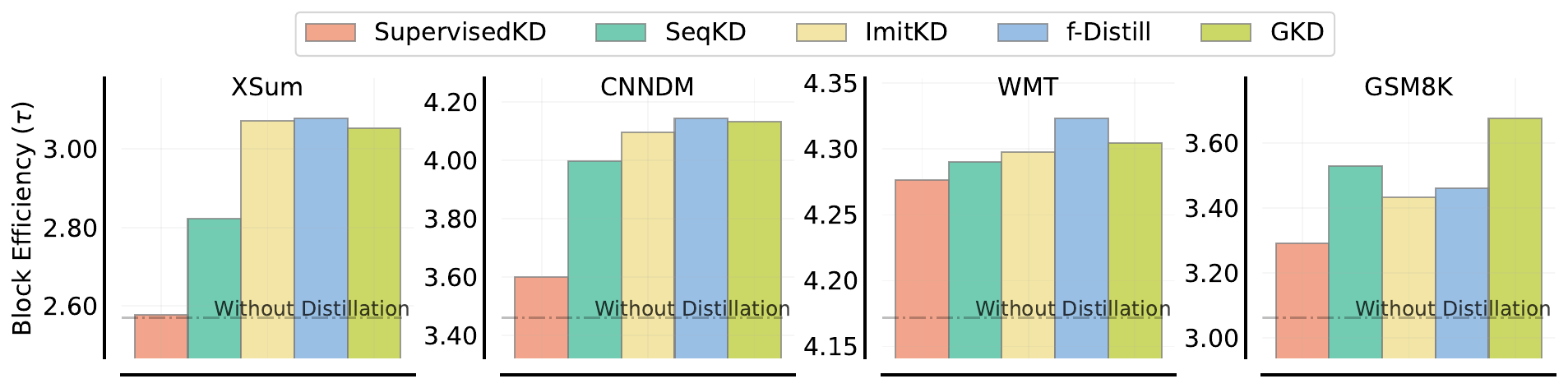}
\end{subfigure}
\caption{Empirical block efficiency improvement of DistillSpec for greedy sampling ($T=0$) and block size $\gamma=5$. The draft model is trained using one of the distillation methods listed in Table \ref{tab:algo} of Section \ref{sec:distill-spec}. The dashed line indicates the block efficiency of speculative decoding using a non-distilled draft model. DistillSpec outperforms standard speculative decoding across all of the distillation methods being considered, with GKD weakly outperforming the other methods on average.}
\end{figure}

\begin{figure}[h]
\centering
\begin{subfigure}[b]{0.99\textwidth}
\includegraphics[width=1.0\linewidth]{figures/files/barplot_empirical_tau_gamma7_greedyTrue.pdf}
\end{subfigure}
\caption{Empirical block efficiency improvement of DistillSpec for greedy sampling ($T=0$) and block size $\gamma=7$. The draft model is trained using one of the distillation methods listed in Table \ref{tab:algo} of Section \ref{sec:distill-spec}. The dashed line indicates the block efficiency of speculative decoding using a non-distilled draft model. DistillSpec outperforms standard speculative decoding across all of the distillation methods being considered, with GKD weakly outperforming the other methods on average.}
\end{figure}

\clearpage
\newpage
\subsubsection{Performance improvement over time}\label{app:performance_vs_time}

\begin{figure}[h]
\centering
\begin{subfigure}[b]{0.4\textwidth}
\includegraphics[width=1.0\linewidth]{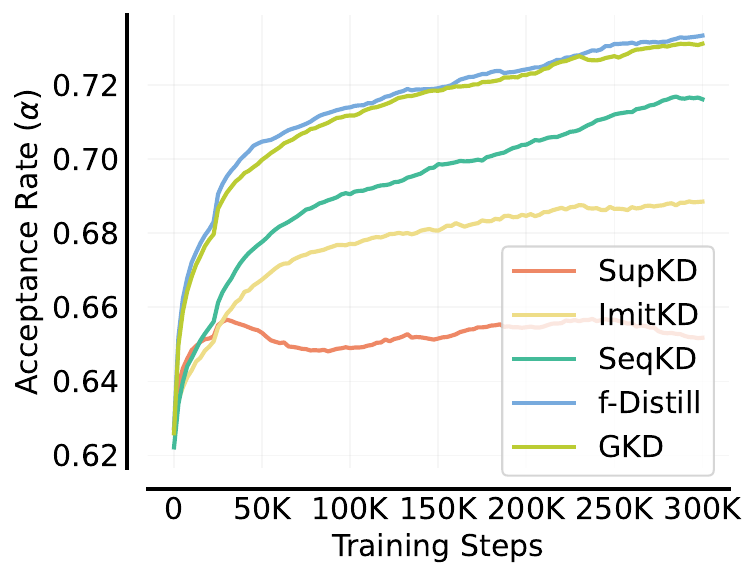}
 \caption{XSum}
\end{subfigure}
\begin{subfigure}[b]{0.4\textwidth}
\includegraphics[width=1.0\linewidth]{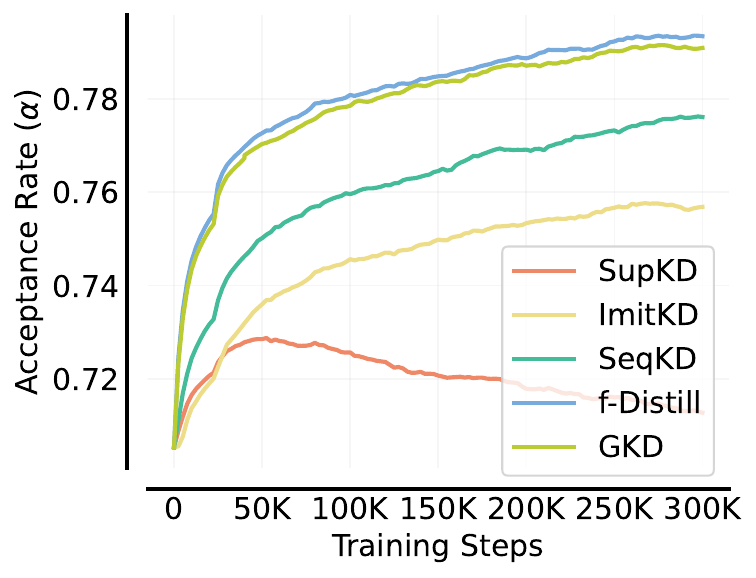}
 \caption{CNNDM}
\end{subfigure}
\begin{subfigure}[b]{0.4\textwidth}
\includegraphics[width=1.0\linewidth]{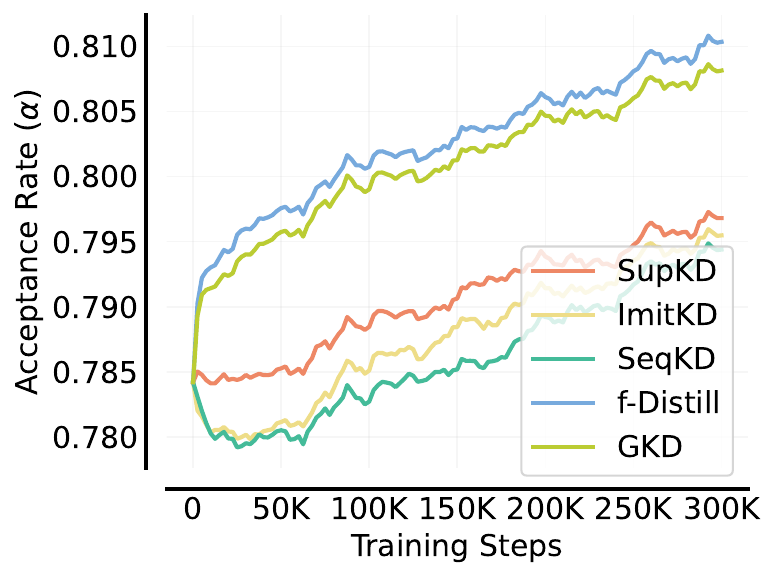}
 \caption{WMT}
\end{subfigure}
\begin{subfigure}[b]{0.4\textwidth}
\includegraphics[width=1.0\linewidth]{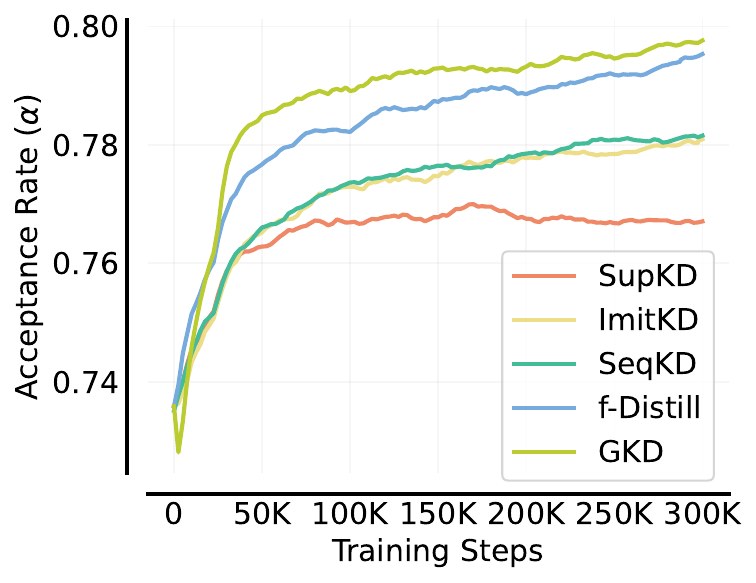}
 \caption{GSM8K}
\end{subfigure}
\caption{Progression of the acceptance rate $\alpha$ of DistillSpec over the training of the draft model, measured by the number of training steps. The draft model is trained using one of the distillation methods listed in Table \ref{tab:algo} of Section \ref{sec:distill-spec}. GKD and $f$-Distill yields the most consistent improvement in $\alpha$ over training steps, while SupKD yields the least improvement and exhibits declining acceptance rates after $\sim$40k training steps on XSum and CNNDM.}\label{fig:alpha_steps}
\end{figure}

\newpage
\begin{figure}[h]
\centering
\begin{subfigure}[b]{0.4\textwidth}
\includegraphics[width=1.0\linewidth]{figures/files/alpha_vs_walltime_xsum.pdf}
 \caption{XSum}
\end{subfigure}
\begin{subfigure}[b]{0.4\textwidth}
\includegraphics[width=1.0\linewidth]{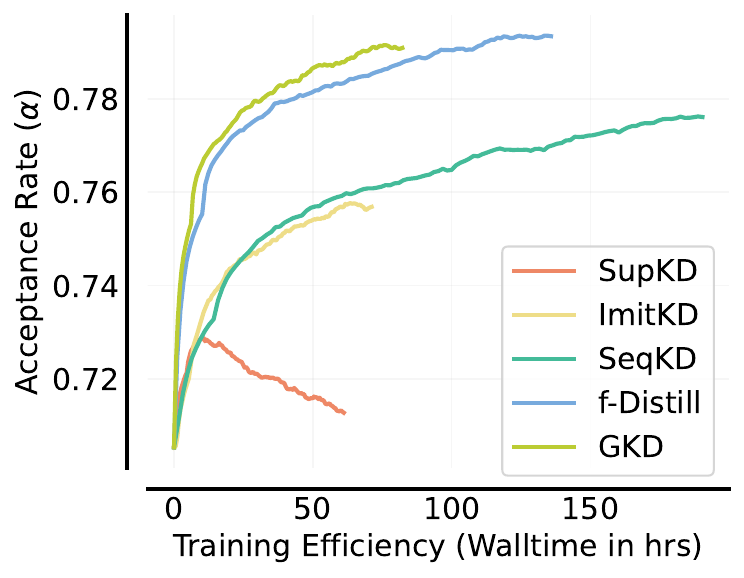}
 \caption{CNNDM}
\end{subfigure}
\begin{subfigure}[b]{0.4\textwidth}
\includegraphics[width=1.0\linewidth]{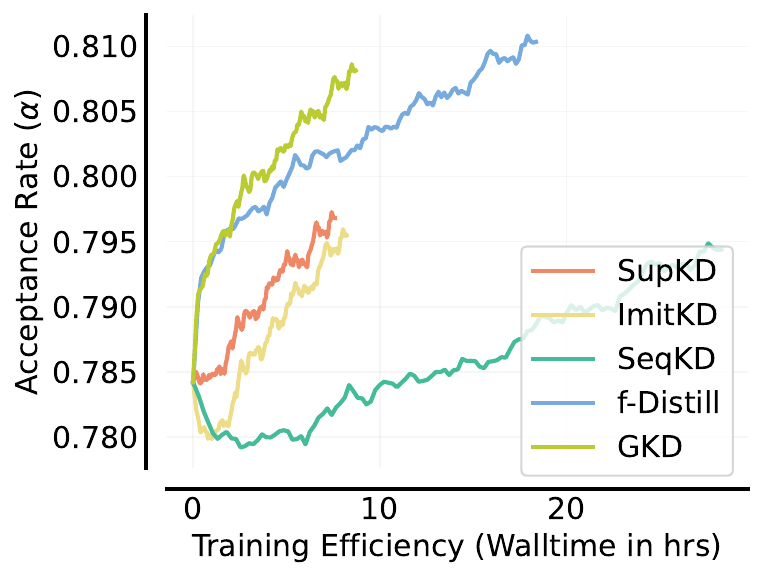}
 \caption{WMT}
\end{subfigure}
\begin{subfigure}[b]{0.4\textwidth}
\includegraphics[width=1.0\linewidth]{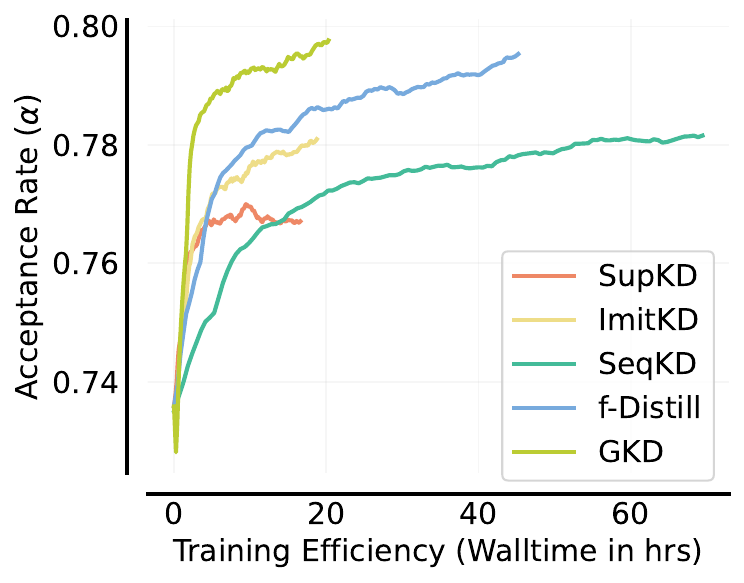}
 \caption{GSM8K}
\end{subfigure}
\caption{Progression of the acceptance rate of DistillSpec over the training of the draft model, measured by the training wall time. The draft model is trained using one of the distillation methods listed in Table \ref{tab:algo} of Section \ref{sec:distill-spec}. GKD and $f$-Distill yield the most consistent improvement in $\alpha$ over training wall time, while SupKD yields the least improvement and even exhibits an inflection in the acceptance rate early during training.}\label{fig:alpha_walltime}
\end{figure}

\newpage
\begin{figure}[h]
\centering
\begin{subfigure}[b]{0.4\textwidth}
\includegraphics[width=1.0\linewidth]{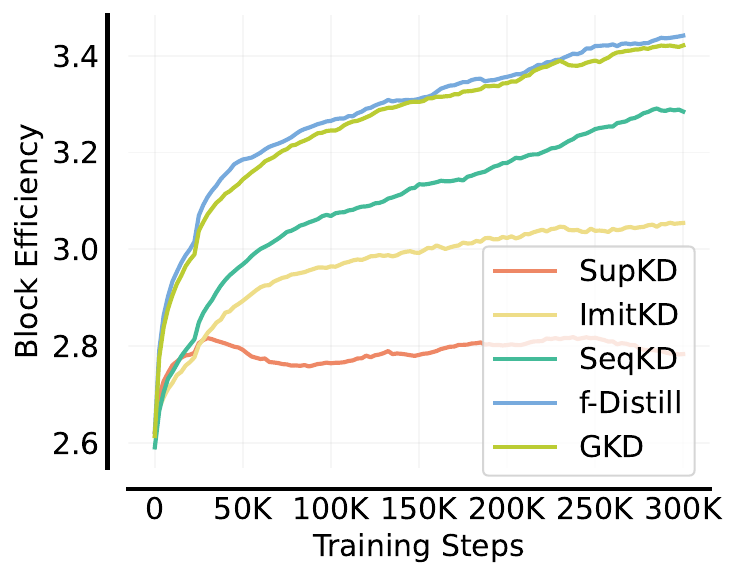}
 \caption{XSum}
\end{subfigure}
\begin{subfigure}[b]{0.4\textwidth}
\includegraphics[width=1.0\linewidth]{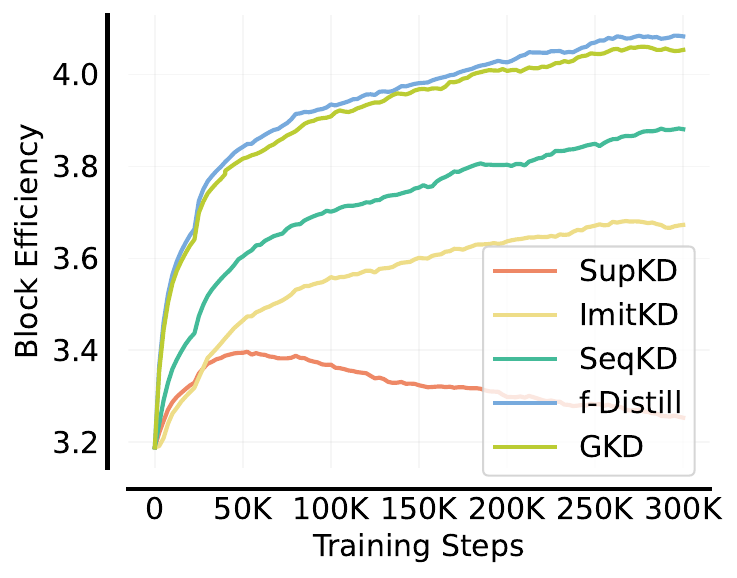}
 \caption{CNNDM}
\end{subfigure}
\begin{subfigure}[b]{0.4\textwidth}
\includegraphics[width=1.0\linewidth]{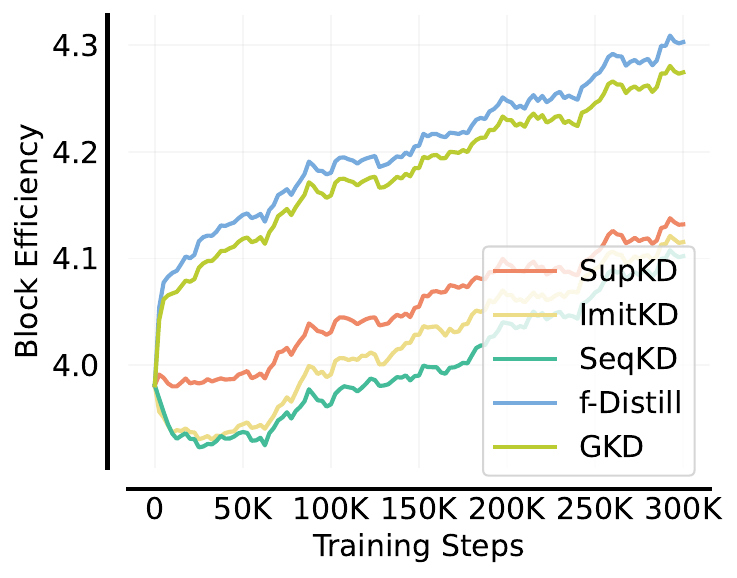}
 \caption{WMT}
\end{subfigure}
\begin{subfigure}[b]{0.4\textwidth}
\includegraphics[width=1.0\linewidth]{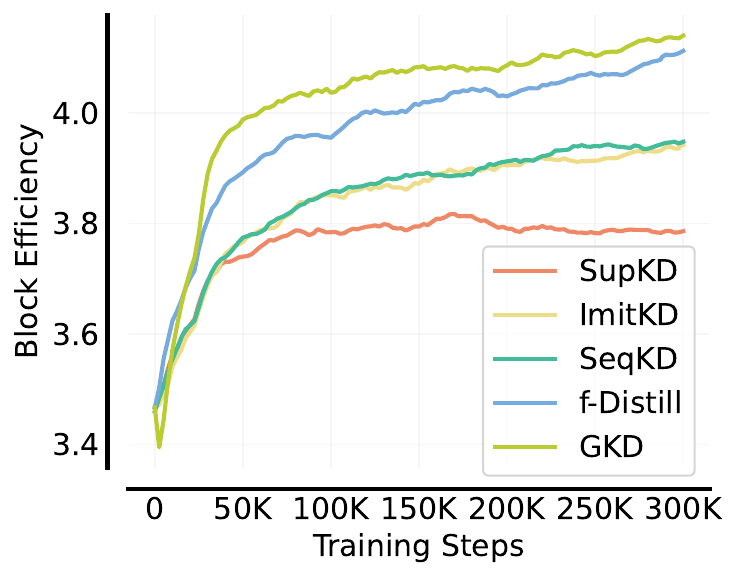}
 \caption{GSM8K}
\end{subfigure}
\caption{Progression of the block efficiency $\tau$ of DistillSpec over the training of the draft model, measured by the number of training steps. The draft model is trained using one of the distillation methods listed in Table \ref{tab:algo} of Section \ref{sec:distill-spec}. GKD and $f$-Distill yield the most consistent improvement in $\tau$ over training, while SupKD yields the least improvement and exhibits a drop in block efficiency after $\sim$40k training steps on XSum and CNNDM.}\label{fig:tau_steps}
\end{figure}

\newpage
\begin{figure}[h]
\centering
\begin{subfigure}[b]{0.4\textwidth}
\includegraphics[width=1.0\linewidth]{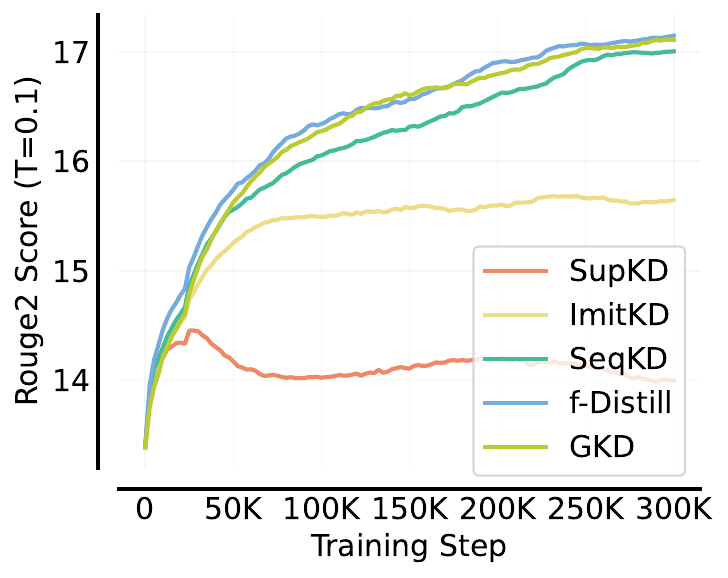}
 \caption{XSum}
\end{subfigure}
\begin{subfigure}[b]{0.4\textwidth}
\includegraphics[width=1.0\linewidth]{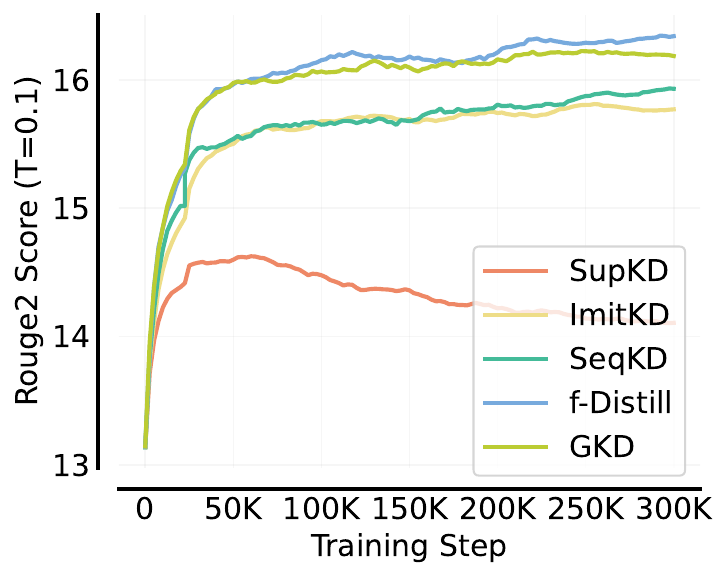}
 \caption{CNNDM}
\end{subfigure}
\begin{subfigure}[b]{0.4\textwidth}
\includegraphics[width=1.0\linewidth]{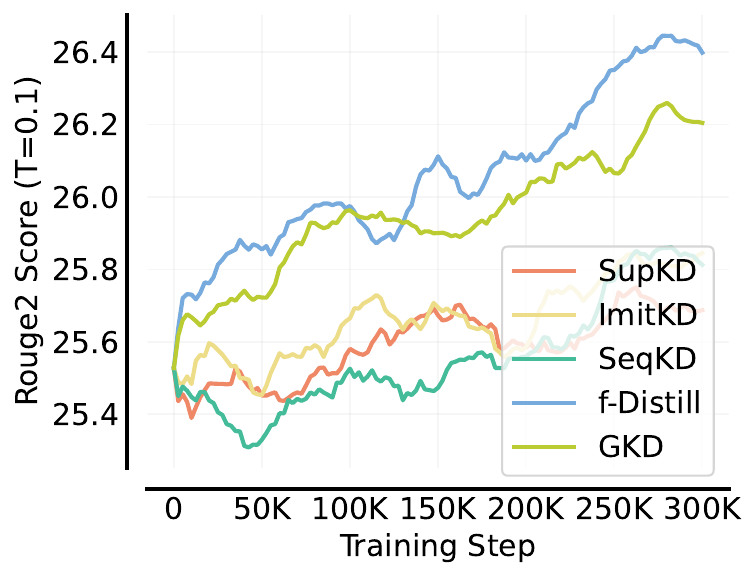}
 \caption{WMT}
\end{subfigure}
\begin{subfigure}[b]{0.4\textwidth}
\includegraphics[width=1.0\linewidth]{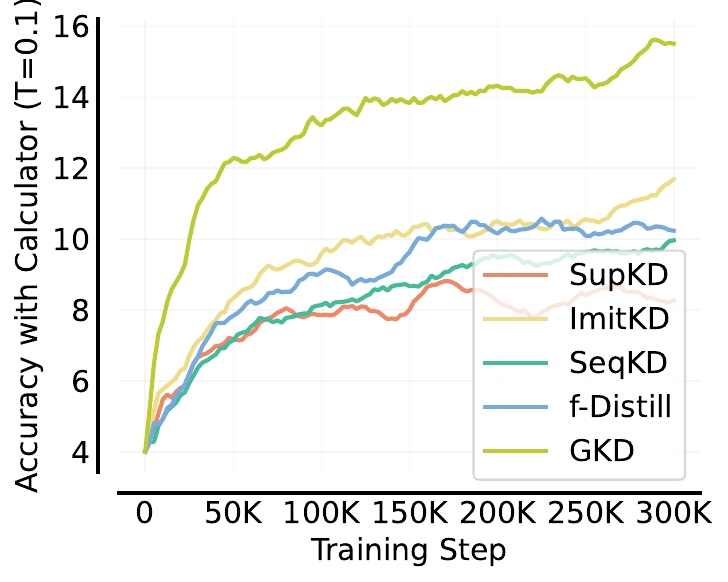}
 \caption{GSM8K}
\end{subfigure}
\caption{Progression of the task performance of the draft model over the course of its training, as measured by the number of training steps. Task performance is based on sampling temperature $T=0.1$ and is measured by the Rouge2 score on the XSum and CNNDM datasets, and the accuracy score with the help of a calculator on GSM8K. The draft model is trained using one of the distillation methods listed in Table \ref{tab:algo} of Section \ref{sec:distill-spec}. GKD and f-Distill yield the most consistent improvement in task performance over training, while SupKD yields the least improvement and exhibits declining Rouge2 scores after $\sim$40k training steps on XSum and CNNDM.}\label{fig:drafter_performance}
\end{figure}

\newpage
\begin{figure}[h]
\centering
\begin{subfigure}[b]{0.4\textwidth}
\includegraphics[width=1.0\linewidth]{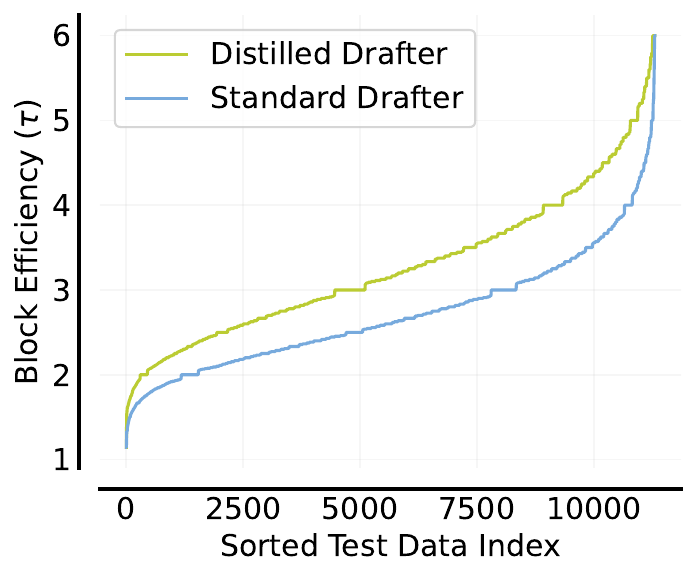}
 \caption{XSum}
\end{subfigure}
\begin{subfigure}[b]{0.4\textwidth}
\includegraphics[width=1.0\linewidth]{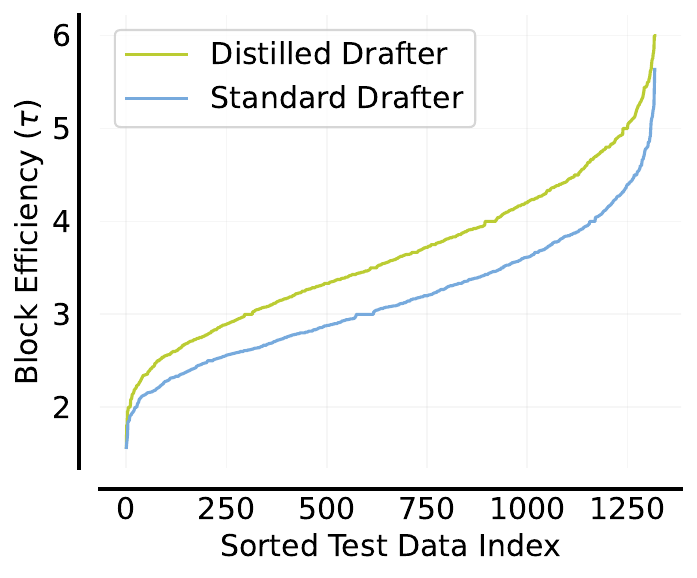}
 \caption{CNNDM}
\end{subfigure}
\begin{subfigure}[b]{0.4\textwidth}
\includegraphics[width=1.0\linewidth]{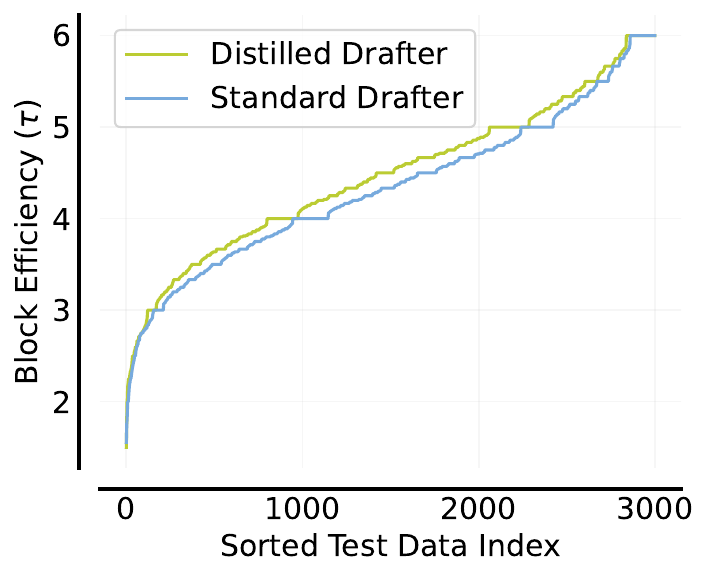}
 \caption{WMT}
\end{subfigure}
\begin{subfigure}[b]{0.4\textwidth}
\includegraphics[width=1.0\linewidth]{figures/files/seq_level_improvement_on_tau_gsm8k.pdf}
 \caption{GSM8K}
\end{subfigure}
\caption{Comparison of the block efficiency of draft models trained with distillation vs. without distillation. The draft model is trained using f-Distill on XSum and CNNDM, and on-policy GKD on GSM8K. For each dataset and draft model, the block efficiency is sorted and plotted from lowest to highest index. The higher position of the curve for the distilled draft model at each index indicates that the distribution of block efficiency of the distilled draft model \emph{first-order stochastically dominates} the block efficiency of the non-distilled draft model.}\label{fig:seq_level_improvement}
\end{figure}

\clearpage
\newpage
\subsubsection{Sampling temperature effect}\label{app:sampling_temperature}

\begin{figure}[h]
\centering
\begin{subfigure}[b]{0.49\textwidth}
\includegraphics[width=1.0\linewidth]{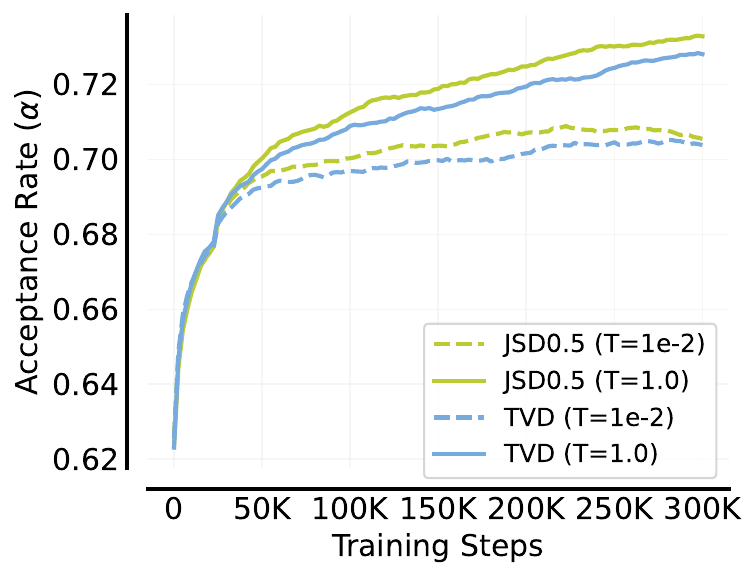}
 \caption{distilling on student generations (on-policy)}
\end{subfigure}
\begin{subfigure}[b]{0.49\textwidth}
\includegraphics[width=1.0\linewidth]{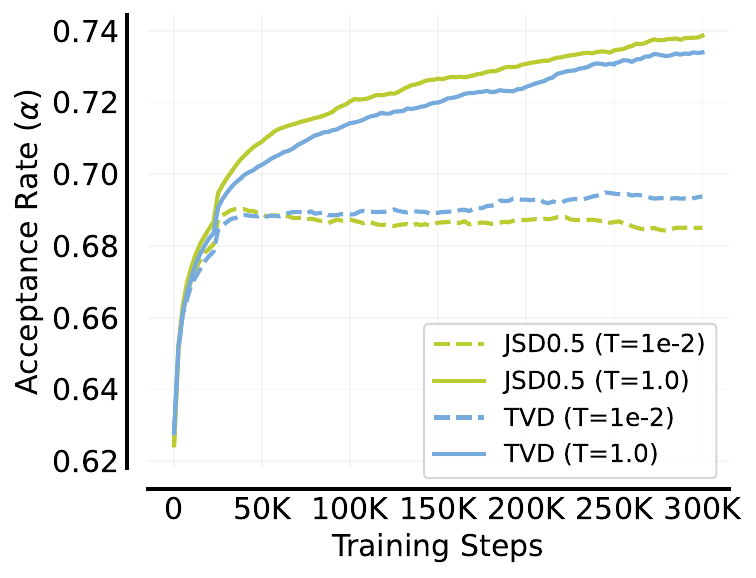}
 \caption{distilling on target model generations}
\end{subfigure}
\caption{
Effect of the sampling temperature on the acceptance rate in XSum decoding. We assess how varying the sampling temperature influences speculative decoding performance in two distinct scenarios: (a) the draft model is trained on its own generated sequences (on-policy distillation), and (b) the draft model is trained on sequences sampled from the target model. In both instances, we find that using a high sampling temperature is paramount for attaining superior performance. %
}\label{fig:sampling_temperature}
\end{figure}

\subsubsection{Reduction in cross-entropy}\label{app:ce_reduction}

\begin{figure}[h]
\centering
\begin{subfigure}[b]{0.49\textwidth}
\includegraphics[width=1.0\linewidth]{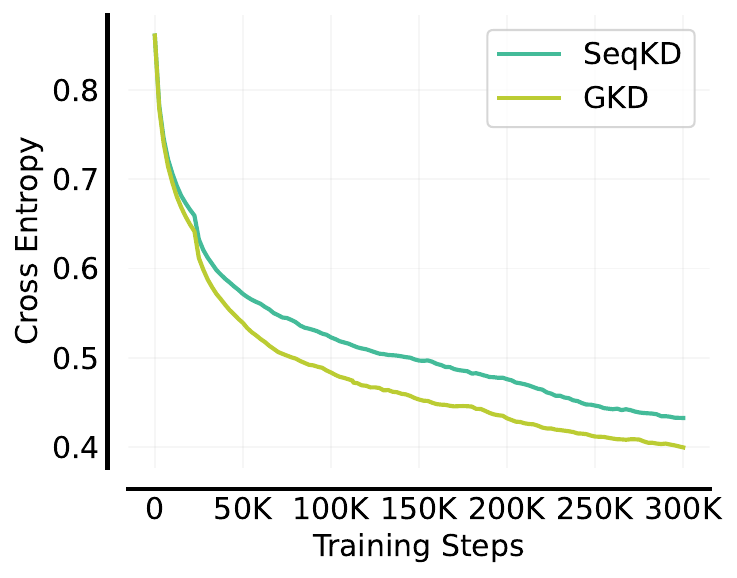}
 \caption{cross-entropy reduction on XSum}
\end{subfigure}
\begin{subfigure}[b]{0.49\textwidth}
\includegraphics[width=1.0\linewidth]{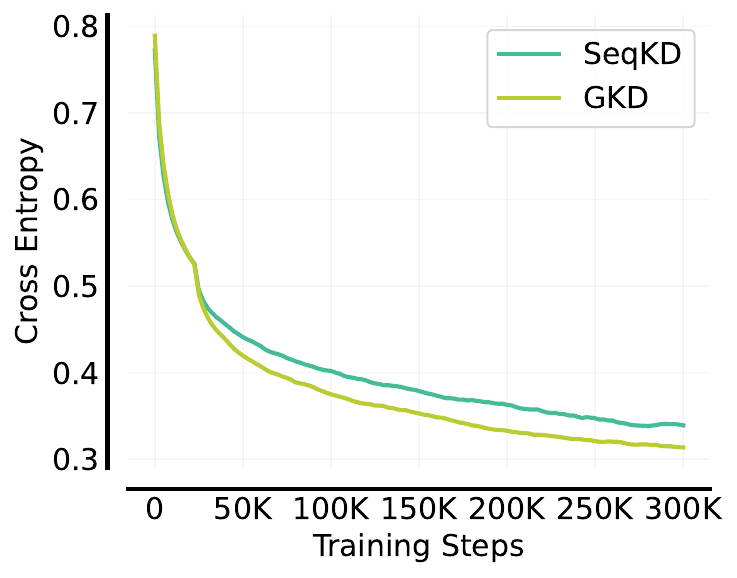}
 \caption{cross-entropy reduction on CNNDM}
\end{subfigure}
\caption{
\revise{BiLD~\citep{kim2023big} proposes an SD variant that uses the cross-entropy between the target and draft model as the rejection criterion and uses SeqKD to improve the alignment. Our experiment shows that GKD can reduce the cross-entropy more effectively than SeqKD. Thus, DistillSpec can potentially be utilized to improve BiLD.}
}\label{fig:ce_reduction}
\end{figure}

\clearpage
\newpage
\subsection{Distillation recipe}
\subsubsection{Score and block efficiency improvement}\label{app:heatmap_improvement} \begin{figure}[h]
\centering
\begin{subfigure}[b]{0.49\textwidth}
\includegraphics[width=1.0\linewidth]{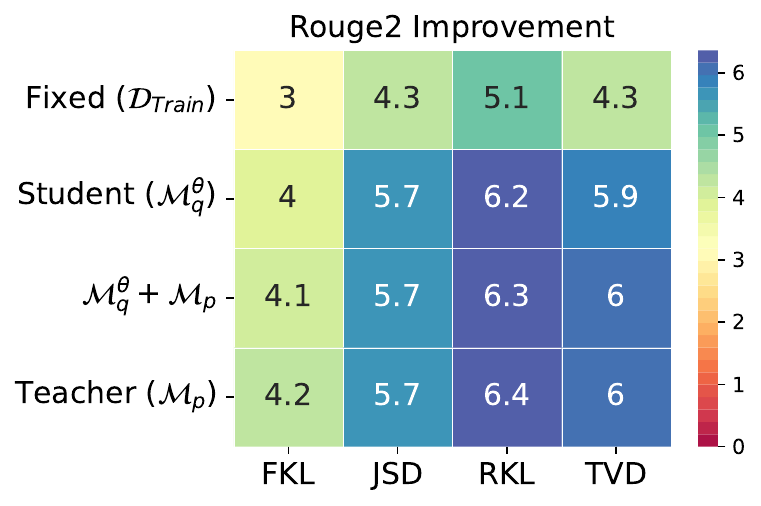}
 \caption{non-greedy decoding ($T=1$)}
\end{subfigure}
\begin{subfigure}[b]{0.49\textwidth}
\includegraphics[width=1.0\linewidth]{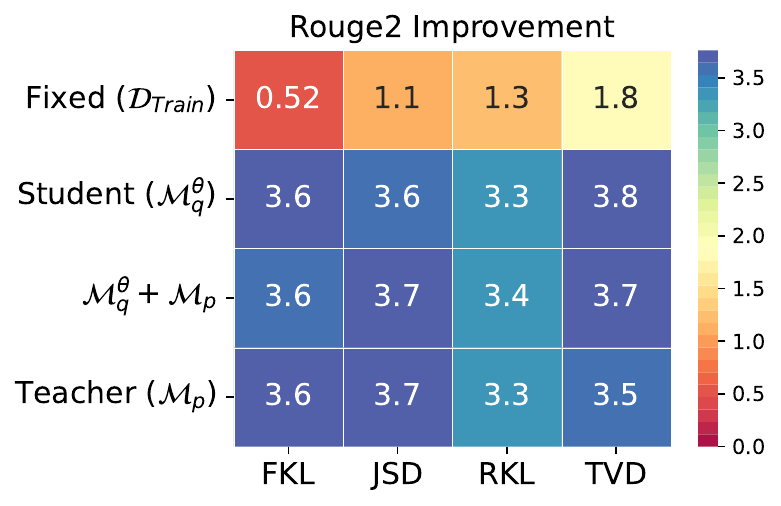}
 \caption{greedy decoding ($T=0$)}
\end{subfigure}
\caption{Improvement in the Rouge2 score of DistillSpec over standard speculative decoding on XSum given by different combinations of divergence functions and generative distributions used in the draft model distillation (see Table \ref{tab:algo} of Section \ref{sec:distill-spec}), using greedy ($T=0$) and non-greedy ($T=1$) sampling.
}\label{fig:heatmap_score_xsum}
\end{figure}

\begin{figure}[h]
\centering
\begin{subfigure}[b]{0.49\textwidth}
\includegraphics[width=1.0\linewidth]{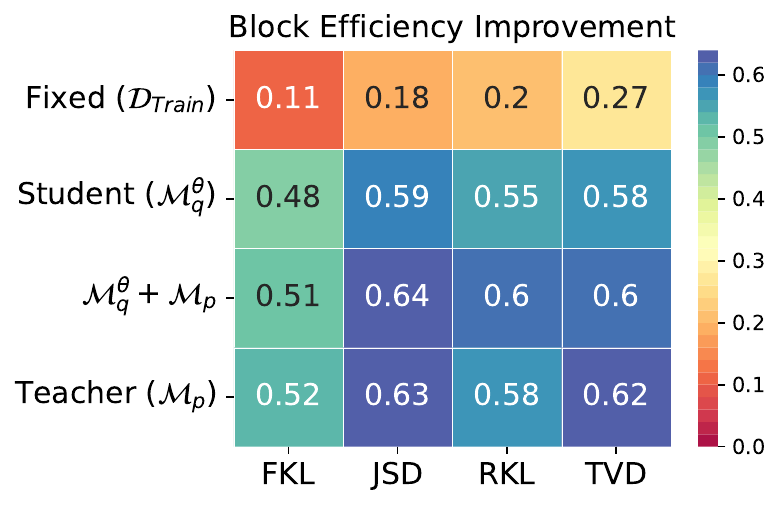}
 \caption{non-greedy decoding ($T=1$)}
\end{subfigure}
\begin{subfigure}[b]{0.49\textwidth}
\includegraphics[width=1.0\linewidth]{figures/files/heatmap_block_efficiency_improvement_xsum_temperature0.001.pdf}
 \caption{greedy decoding ($T=0$)}
\end{subfigure}
\caption{Improvement in the block efficiency of DistillSpec over standard speculative decoding on XSum given by different combinations of divergence functions and generative distributions used in the draft model distillation (see Table \ref{tab:algo} of Section \ref{sec:distill-spec}), using greedy ($T=0$) and non-greedy ($T=1$) sampling.
}\label{fig:heatmap_tau_xsum}
\end{figure}

\newpage

\begin{figure}[h]
\centering
\begin{subfigure}[b]{0.49\textwidth}
\includegraphics[width=1.0\linewidth]{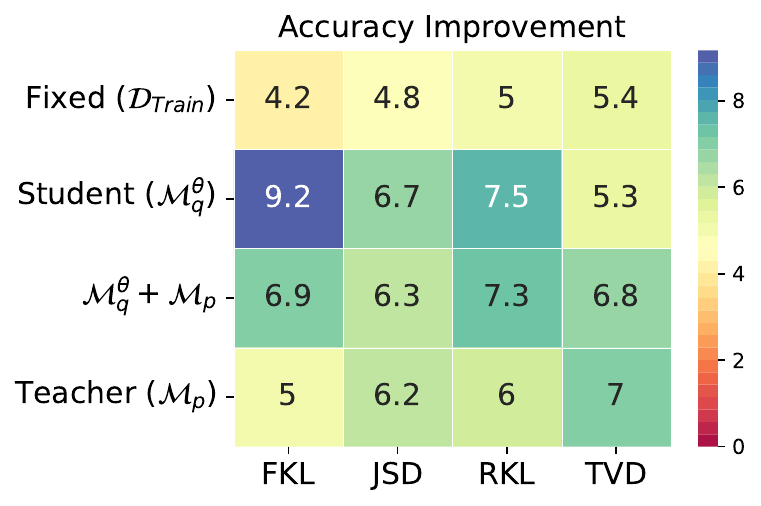}
 \caption{non-greedy decoding ($T=1$)}
\end{subfigure}
\begin{subfigure}[b]{0.49\textwidth}
\includegraphics[width=1.0\linewidth]{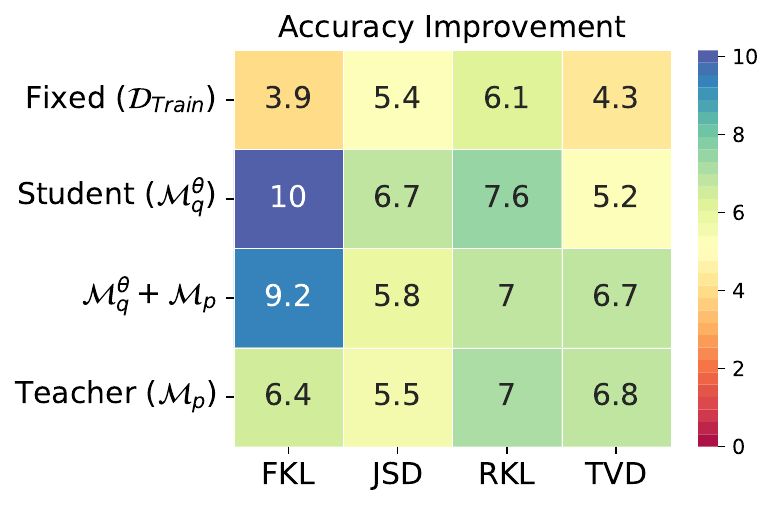}
 \caption{greedy decoding ($T=0$)}
\end{subfigure}
\caption{Improvement in the accuracy score of DistillSpec over standard speculative decoding on GSM8K given by different combinations of divergence functions and generative distributions used in the draft model distillation (see Table \ref{tab:algo} of Section \ref{sec:distill-spec}), using greedy ($T=0$) and non-greedy ($T=1$) sampling.
}\label{fig:heatmap_score_gsm8k}
\end{figure}

\begin{figure}[h]
\centering
\begin{subfigure}[b]{0.49\textwidth}
\includegraphics[width=1.0\linewidth]{figures/files/heatmap_block_efficiency_improvement_gsm8k_temperature1.0.pdf}
 \caption{non-greedy decoding ($T=1$)}
\end{subfigure}
\begin{subfigure}[b]{0.49\textwidth}
\includegraphics[width=1.0\linewidth]{figures/files/heatmap_block_efficiency_improvement_gsm8k_temperature0.001.pdf}
 \caption{greedy decoding ($T=0$)}
\end{subfigure}
\caption{Improvement in the block efficiency of DistillSpec over standard speculative decoding on GSM8K given by different combinations of divergence functions and generative distributions used in the draft model distillation (see Table \ref{tab:algo} of Section \ref{sec:distill-spec}), using greedy ($T=0$) and non-greedy ($T=1$) sampling.
}\label{fig:heatmap_tau_gsm8k}
\end{figure}

\clearpage
\newpage
\subsubsection{Impact of distillation on draft quality vs. compatibility}\label{app:compatibility}
\begin{figure}[h]
\centering
\begin{subfigure}[b]{0.49\textwidth}
\includegraphics[width=1.0\linewidth]{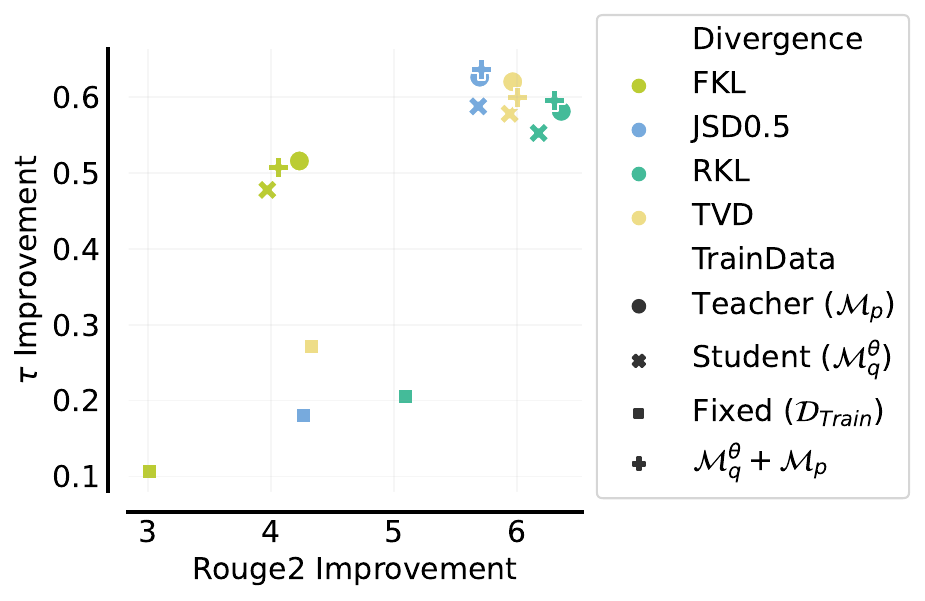}
 \caption{non-greedy decoding  ($T=1$)}
\end{subfigure}
\begin{subfigure}[b]{0.49\textwidth}
\includegraphics[width=1.0\linewidth]{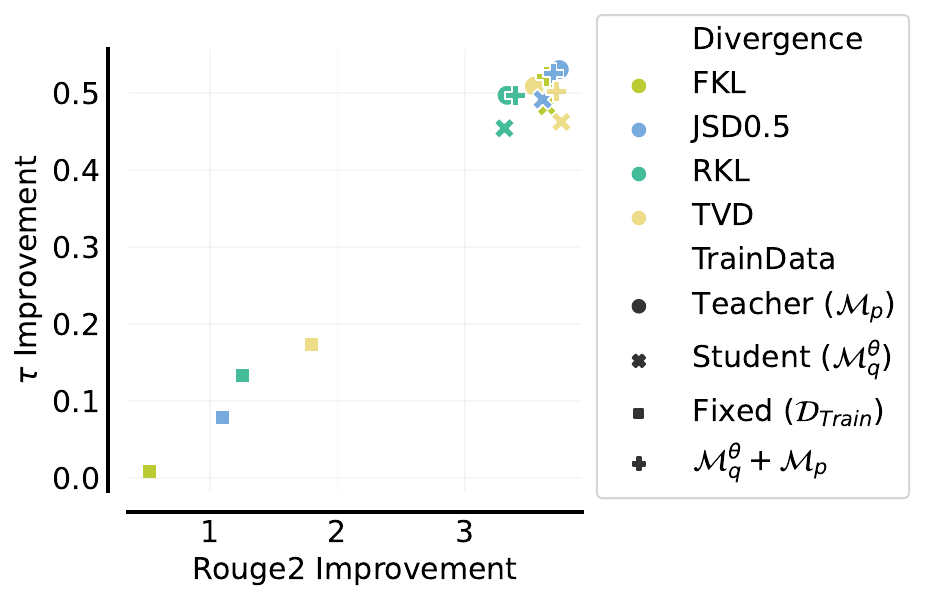}
 \caption{greedy decoding ($T=0$)}
\end{subfigure}
\caption{Correlation plot of the improvements in the block efficiency (y-axis) and Rouge2 score (x-axis) of DistillSpec over standard speculative decoding on XSum given by different combinations of divergence functions and generative distributions used in the draft model distillation (see Table \ref{tab:algo} of Section \ref{sec:distill-spec}), using greedy ($T=0$) and non-greedy ($T=1$) sampling.}\label{fig:performance_blockefficiency_correlation_xsum}
\end{figure}

\begin{figure}[h]
\centering
\begin{subfigure}[b]{0.49\textwidth}
\includegraphics[width=1.0\linewidth]{figures/files/performance_blockefficiency_correlation_gsm8k_temperature1.0.pdf}
 \caption{non-greedy decoding ($T=1$)}
\end{subfigure}
\begin{subfigure}[b]{0.49\textwidth}
\includegraphics[width=1.0\linewidth]{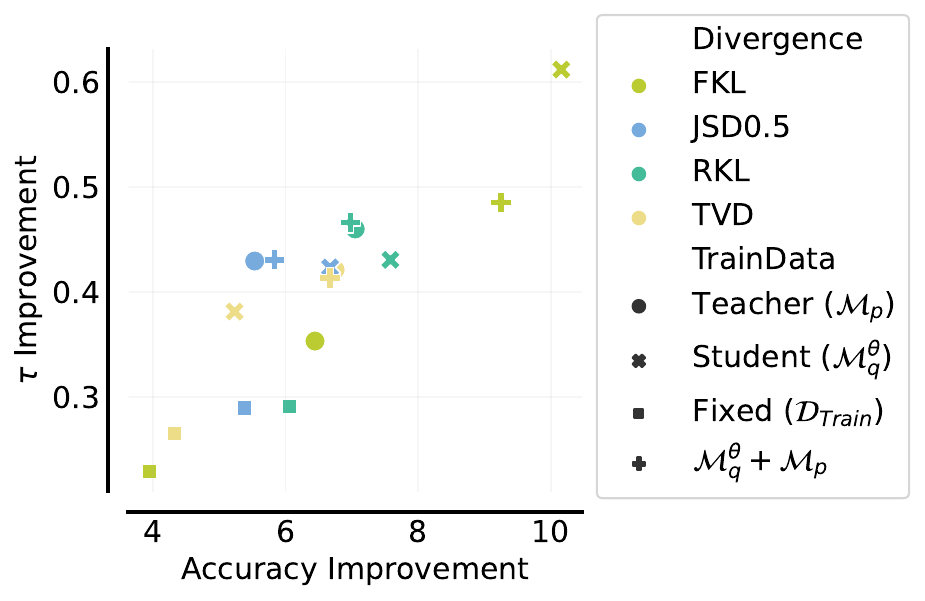}
 \caption{greedy decoding ($T=0$)}
\end{subfigure}
\caption{Correlation plot of the improvements in the block efficiency (y-axis) and Accuracy (x-axis) of DistillSpec over standard speculative decoding on GSM8K given by different combinations of divergence functions and generative distributions used in the draft model distillation (see Table \ref{tab:algo} of Section \ref{sec:distill-spec}), using greedy ($T=0$) and non-greedy ($T=1$) sampling.}\label{fig:performance_blockefficiency_correlation_gsm8k}
\end{figure}

\newpage
\subsection{Quality versus latency trade-off}

\subsubsection{Lossy speculative decoding}\label{app:lossy_decoding}
\begin{figure}[h]
\centering
\begin{subfigure}[b]{0.49\textwidth}
\includegraphics[width=1.0\linewidth]{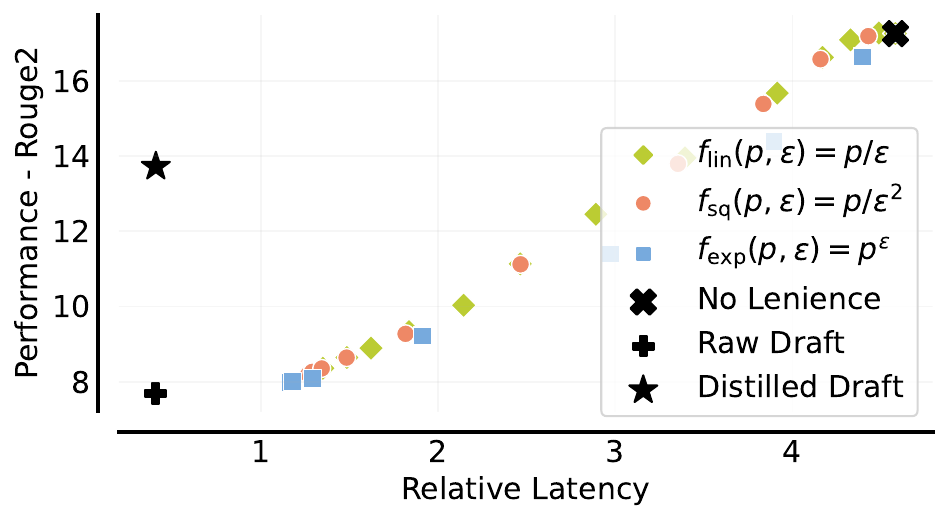}
 \caption{using non-distilled draft model (standard SD)}
\end{subfigure}
\begin{subfigure}[b]{0.49\textwidth}
\includegraphics[width=1.0\linewidth]{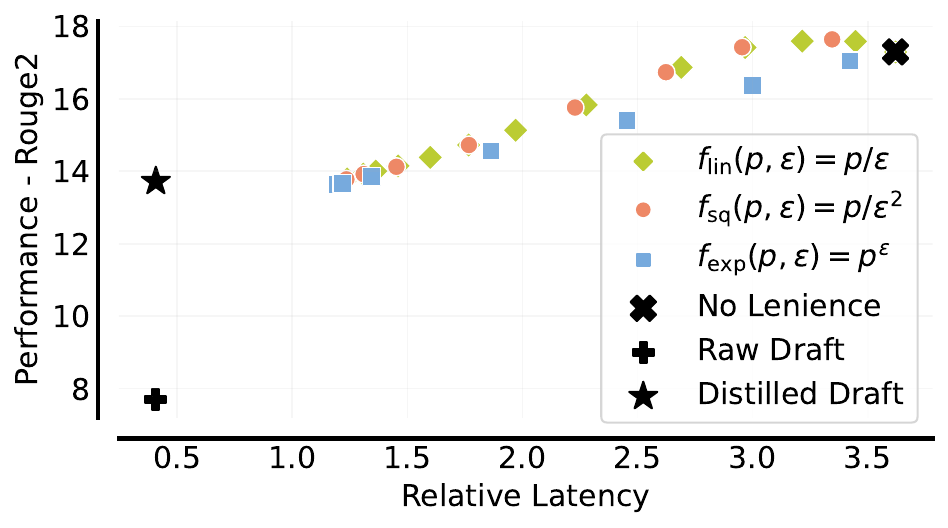}
 \caption{using distilled draft model (DistillSpec)}
\end{subfigure}
\caption{Tradeoff between task performance and decoding latency on XSum enabled by combining lossy speculative decoding with the alternative variants of DistillSpec presented in Section \ref{sec:distill-spec}. The higher position and flatter slope of the tradeoff curve for DistillSpec indicate that the method enables larger latency gains for smaller reduction in model quality than are feasible under standard SD.
}\label{fig:lossy_decoding_xsum_greedyfalse}
\end{figure}

\begin{figure}[h]
\centering
\begin{subfigure}[b]{0.49\textwidth}
\includegraphics[width=1.0\linewidth]{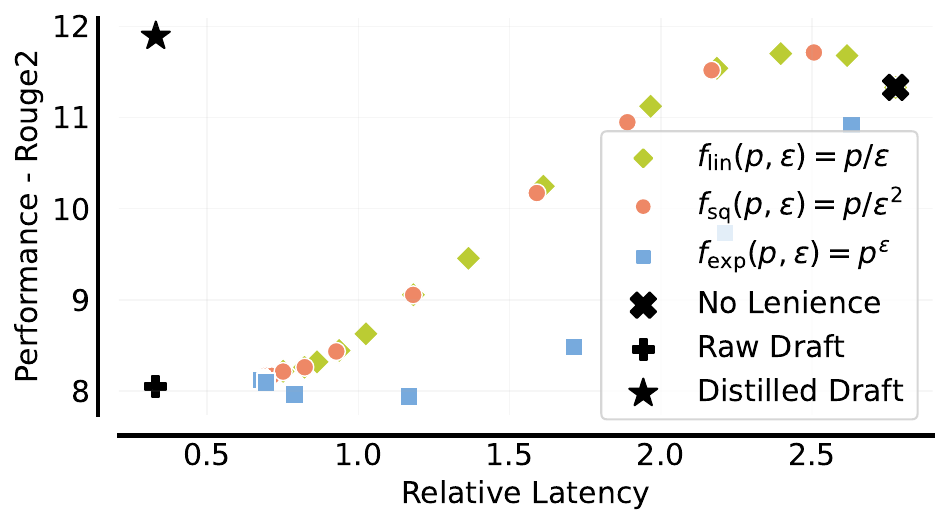}
 \caption{using non-distilled draft model (standard SD)}
\end{subfigure}
\begin{subfigure}[b]{0.49\textwidth}
\includegraphics[width=1.0\linewidth]{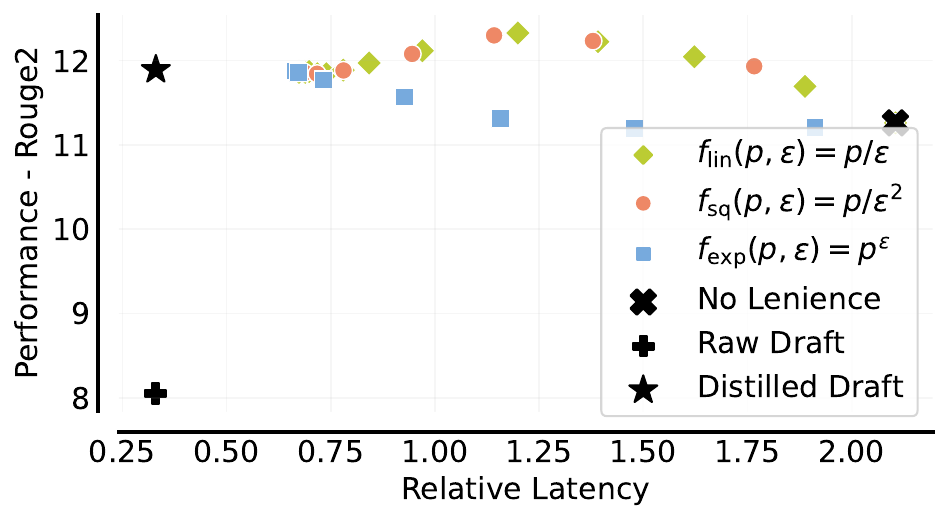}
 \caption{using distilled draft model (DistillSpec)}
\end{subfigure}
\caption{Tradeoff between task performance and decoding latency on CNNDM enabled by combining lossy speculative decoding with the alternative variants of DistillSpec presented in Section \ref{sec:distill-spec}. The higher position and flatter slope of the tradeoff curve for DistillSpec indicate that the method enables larger latency gains for smaller reduction in model quality than are feasible under standard SD.
}\label{fig:lossy_decoding_cnndm_greedyfalse}
\end{figure}

\begin{figure}[h]
\centering
\begin{subfigure}[b]{0.49\textwidth}
\includegraphics[width=1.0\linewidth]{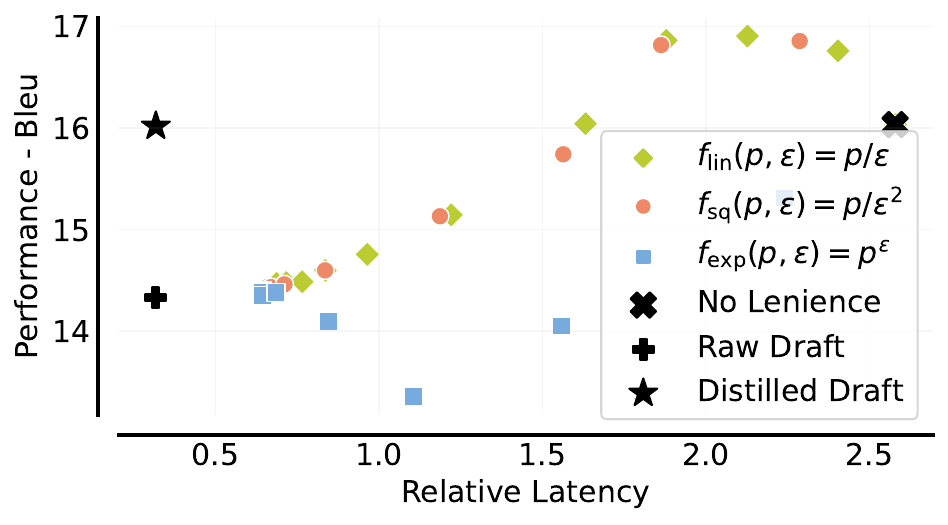}
 \caption{using non-distilled draft model (standard SD)}
\end{subfigure}
\begin{subfigure}[b]{0.49\textwidth}
\includegraphics[width=1.0\linewidth]{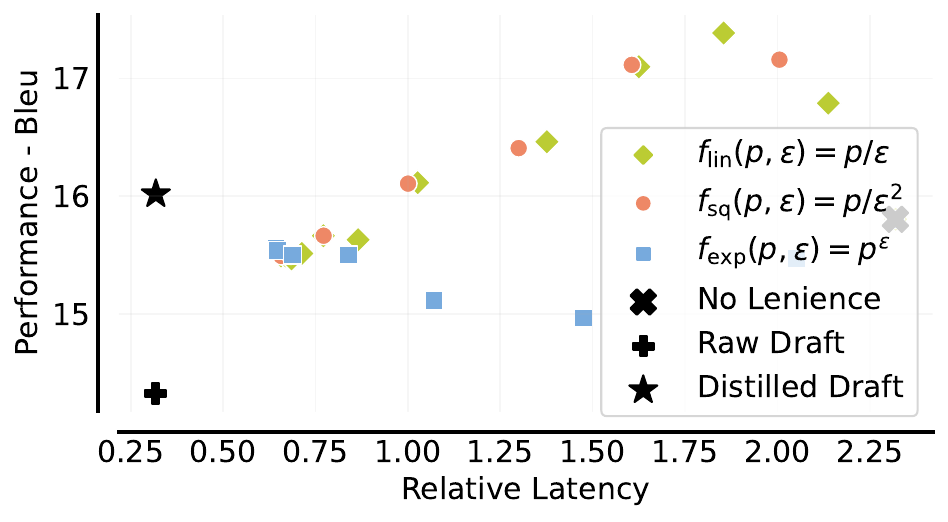}
 \caption{using distilled draft model (DistillSpec)}
\end{subfigure}
\caption{Tradeoff between task performance and decoding latency on WMT enabled by combining lossy speculative decoding with the alternative variants of DistillSpec presented in Section \ref{sec:distill-spec}. While the similar height and slope of the tradeoff curves of DistillSpec and standard SD indicate a comparable quality-latency tradeoff between the two setups, DistillSpec exhibits lower latency overall.
}\label{fig:lossy_decoding_wmt_greedyfalse}
\end{figure}

\newpage
\begin{figure}[h]
\centering
\begin{subfigure}[b]{0.49\textwidth}
\includegraphics[width=1.0\linewidth]{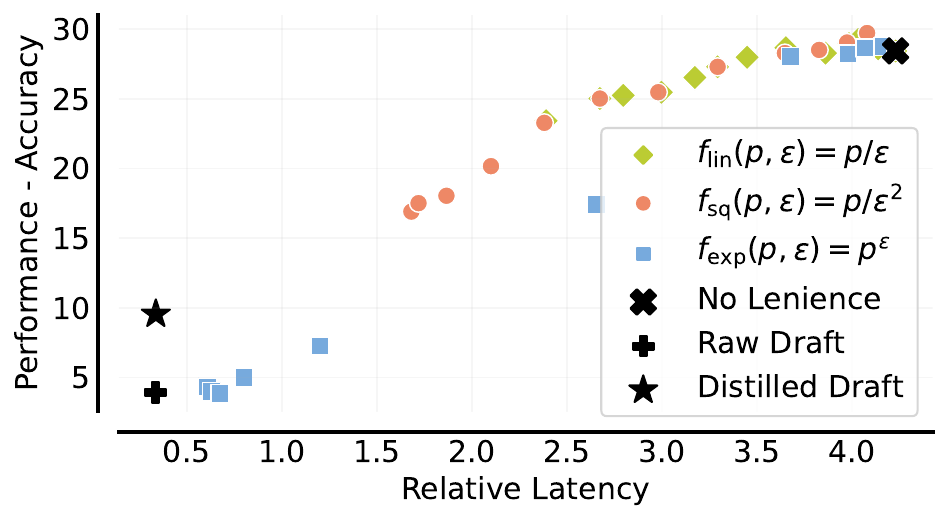}
 \caption{using non-distilled draft model (standard SD)}
\end{subfigure}
\begin{subfigure}[b]{0.49\textwidth}
\includegraphics[width=1.0\linewidth]{figures/files/scatter_lenience_trade_off_gsm8k_distill_greedyFalse.pdf}
 \caption{using distilled draft model (DistillSpec)}
\end{subfigure}
\caption{Tradeoff between task performance and decoding latency on GSM8K enabled by combining lossy speculative decoding with the alternative variants of DistillSpec presented in Section \ref{sec:distill-spec}. While the similar height and slope of the tradeoff curves of DistillSpec and standard SD indicate a comparable quality-latency tradeoff between the two setups, DistillSpec exhibits lower latency overall.
}\label{fig:lossy_decoding_gsm8k_greedyfalse}
\end{figure}

\clearpage
\newpage
\subsubsection{DistillSpec meets model garden} 
\label{app:model_cascade}
\begin{figure}[h]
\centering
\begin{subfigure}[b]{0.49\textwidth}
\includegraphics[width=1.0\linewidth]{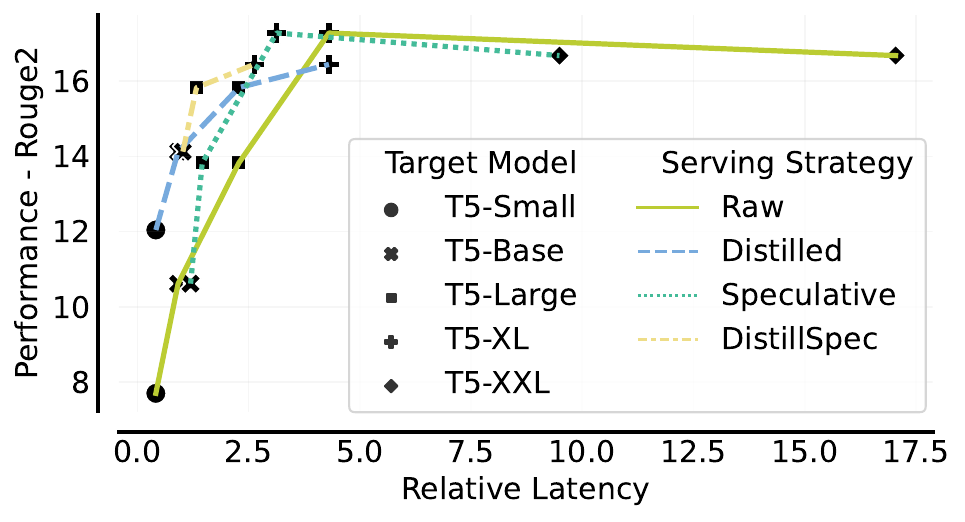}
 \caption{non-greedy decoding ($T=1$)}\label{fig:model_cascade_xsum_nongreedy}
\end{subfigure}
\begin{subfigure}[b]{0.49\textwidth}
\includegraphics[width=1.0\linewidth]{figures/files/model_cascade_xsum_temperature_0.0001.pdf}
 \caption{greedy decoding ($T=0$)}\label{fig:model_cascade_xsum_greedy}
\end{subfigure}
\caption{Quality-latency trade-off curves on XSum using greedy ($T=0$) or non-greedy ($T=1$) sampling, for a single target LLM trained without distillation (``Raw''), trained by distilling from a larger LLM (``Distilled''), speculative decoding using a draft model trained without distillation (``Speculative''), and speculative decoding combining a distilled target LLM with a distilled draft model (``DistillSpec'').
}\label{fig:model_cascade_xsum}
\end{figure}

\begin{figure}[h]
\centering
\begin{subfigure}[b]{0.49\textwidth}
\includegraphics[width=1.0\linewidth]{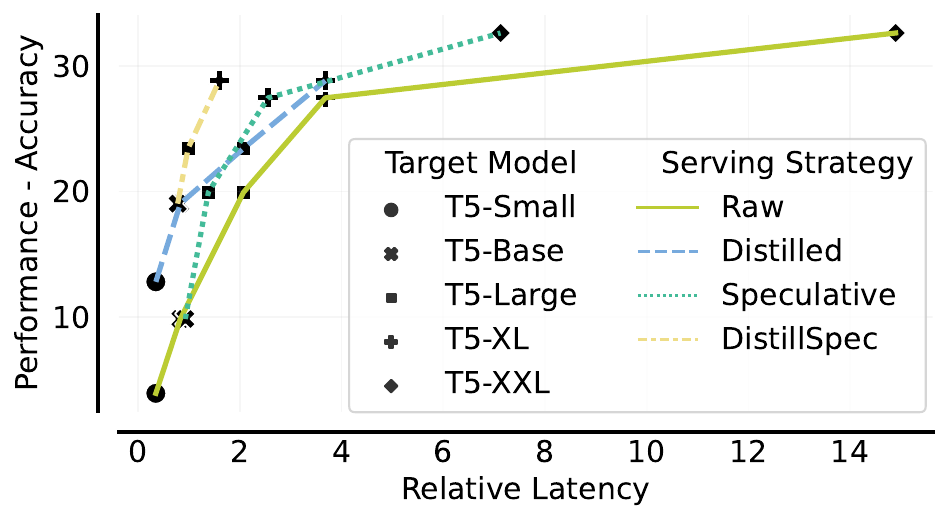}
 \caption{non-greedy decoding ($T=1$)}\label{fig:model_cascade_gsm8k_nongreedy}
\end{subfigure}
\begin{subfigure}[b]{0.49\textwidth}
\includegraphics[width=1.0\linewidth]{figures/files/model_cascade_gsm8k_step_by_step_temperature_0.0001.pdf}
 \caption{greedy decoding ($T=0$)}\label{fig:model_cascade_gsm8k_greedy}
\end{subfigure}
\caption{Quality-latency trade-off curves on GSM8K using greedy ($T=0$) or non-greedy ($T=1$) sampling, for a single target LLM trained without distillation (``Raw''), trained by distilling from a larger LLM (``Distilled''), speculative decoding using a draft model trained without distillation (``Speculative''), and speculative decoding using a distilled target LLM combining a distilled draft model (``DistillSpec'').
}\label{fig:model_cascade_gsm8k}
\end{figure}

\end{document}